\documentclass{article}

\usepackage{microtype}
\usepackage{graphicx}
\usepackage{subcaption}
\usepackage{booktabs} 

\usepackage{hyperref}

\usepackage[accepted]{icml2026}
\usepackage{tabularx}

\usepackage{adjustbox}
\usepackage{multirow}
\usepackage{multicol}
\usepackage{amsmath}
\usepackage{amssymb}
\usepackage{mathtools}
\usepackage{amsthm}
\newcommand{\para}[1]{\vspace{2mm} \noindent \textbf{#1}}
\usepackage{enumitem}  
\usepackage{bbm}  

\usepackage{subcaption}  

\usepackage[capitalize,noabbrev]{cleveref}

\theoremstyle{plain}
\newtheorem{theorem}{Theorem}[section]

\newtheorem{lemma}[theorem]{Lemma}
\newtheorem{corollary}[theorem]{Corollary}
\theoremstyle{definition}
\newtheorem{definition}[theorem]{Definition}
\newtheorem{assumption}[theorem]{Assumption}
\theoremstyle{remark}
\newtheorem{remark}[theorem]{Remark}

\usepackage[textsize=tiny]{todonotes}

\icmltitlerunning{MDGMIX: Boundary-Aware Subgraph Mixing for Multi-Domain Graph Pre-Training}

\begin{document}

\twocolumn[
  \icmltitle{MDGMIX: Boundary-Aware Subgraph Mixing for Multi-Domain Graph Pre-Training}



  \icmlsetsymbol{equal}{*}
  \begin{icmlauthorlist}
    \icmlauthor{Ziyu Zheng}{yyy}
    \icmlauthor{Yaming Yang}{yyy}
    \icmlauthor{Ziyu Guan}{yyy}
    \icmlauthor{Wei Zhao}{yyy}
    \icmlauthor{Xinyan Huang}{xxx}

  \end{icmlauthorlist}

  \icmlaffiliation{yyy}{School of Computer Science and Technology, Xidian University, Xi’an, China}
  \icmlaffiliation{xxx}{School of Artificial Intelligence, Xidian University, Xi’an, China}

  \icmlcorrespondingauthor{Wei Zhao}{ywzhao@mail.xidian.edu.cn}

  \icmlkeywords{Graph foundation models, Multi-domain graph pre-training, Graph prompt learning, Graph domain generalization}

  \vskip 0.3in
]



\printAffiliationsAndNotice{}  

\begin{abstract}
Multi-domain graph pre-training is a crucial step in constructing foundational graph models with cross-domain generalization capabilities. However, existing methods predominantly rely on jointly training all source domain graphs, resulting in high computational costs. Furthermore, it remains unclear whether all source domain graph data contribute equally to effective transfer. This paper empirically reveals significant data redundancy in multi-domain graph pre-training. Based on this finding, we propose the Multi-domain Graph Pre-training Framework, MDGMIX, which combines boundary-aware subgraph mixing with hierarchical discrimination. By selecting boundary nodes to construct challenging mixed-domain subgraphs, MDGMIX employs coarse-grained domain discrimination and fine-grained domain decomposition losses to decouple shared patterns from domain-specific patterns. During adaptation, MDGMIX employs a lightweight prompt weighting mechanism to transfer source domain knowledge. Extensive experiments demonstrate that MDGMIX consistently outperforms strong baselines in few-shot classification tasks while exhibiting superior time and memory efficiency. The code is available at: \hyperlink{https://github.com/zhengziyu77/MDGMIX}{https://github.com/zhengziyu77/MDGMIX}.
\end{abstract}

\section{Introduction}

Graph-structured data naturally captures complex relationships between entities and is widely applied in fields such as social networks~\cite {social}, recommendation systems~\cite{rec}, and molecular design~\cite{mol}. In recent years, inspired by the success of foundation models in natural language processing~\cite{nlpfd1} and computer vision~\cite{cvfd1}, the field of graph learning has begun exploring graph foundation models~\cite{gfm,finetune,prompt}. These models aim to learn transferable, general knowledge from large-scale, multi-domain graph data through pre-training, thereby advancing toward a more universal framework for graph learning.

Although existing research has begun to explore multi-domain graph pre-training~\cite{gcope,graver,mdgfm}, most current methods still follow pre-training strategies designed for single domains~\cite{mdgpt, samgpt, gppt}, overlooking the substantial computational overhead incurred by jointly training on all data. This leads to serious scalability and efficiency challenges. As illustrated in Figure~\ref{fig:effience}, as the number of source domains increases, joint pre-training based on multi-domain graphs often results in a sharp rise in memory or time consumption, severely limiting its practicality. This points to a deeper underlying issue: Due to the distributional shifts between source and target domains, the actual benefit of multi-domain graph pre-training data merits careful scrutiny—is all pre-training graph data equally conducive to cross-domain generalization?


To investigate this question, we randomly drop varying proportions of nodes from the multi-domain graph data and observe its impact on the target domain performance. Figure~\ref{fig:robust} reveals a critical and counter-intuitive phenomenon: even after randomly discarding up to 90\% of the pre-training graph data, the model only exhibits marginal performance degradation across multiple downstream benchmarks. This result indicates severe data redundancy in multi-domain graph pretraining settings: the existing paradigm consumes substantial computational resources on graph data that contributes little to transfer learning, while failing to focus on knowledge relevant for meaningful cross-domain generalization adequately.


\begin{figure}[t]
    \centering
    \begin{subfigure}{0.48\columnwidth}
        \centering
        \includegraphics[width=\linewidth]{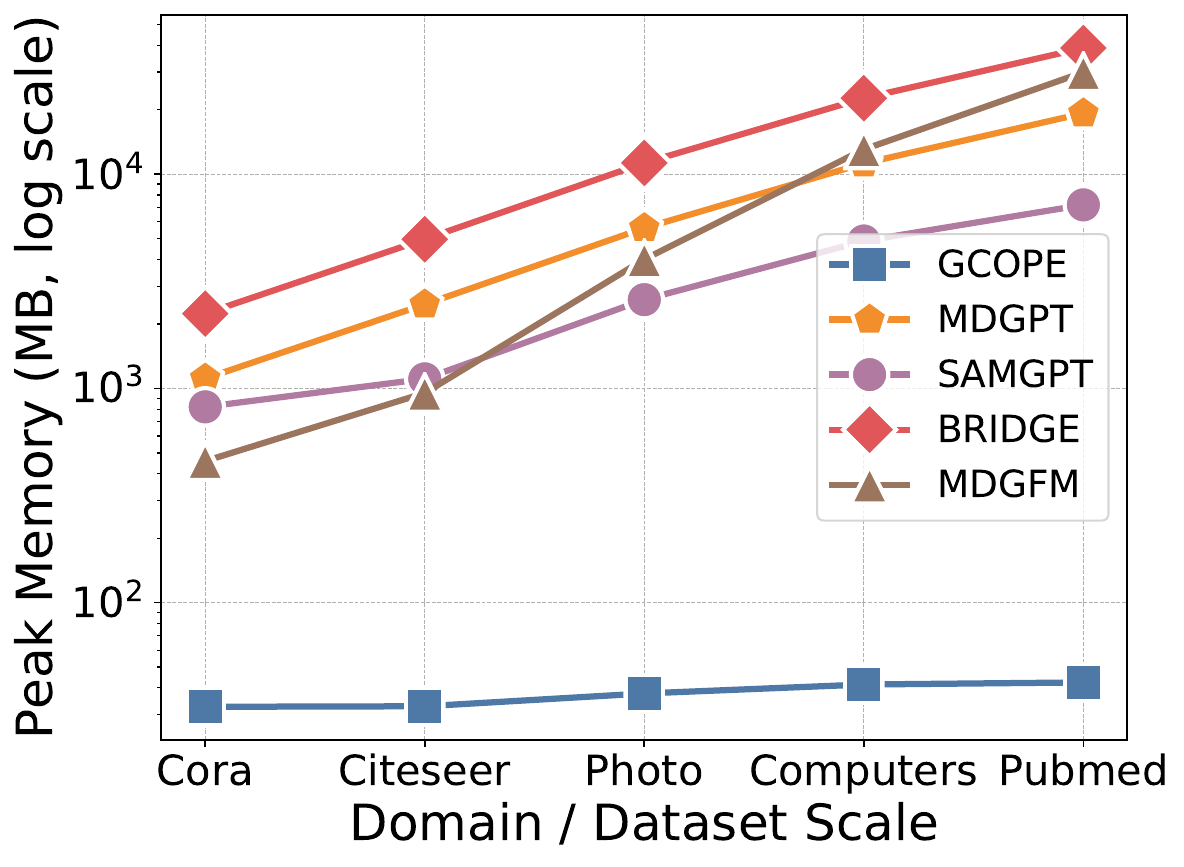}
        \caption{Memory}
        \label{fig:memory}
    \end{subfigure}
    \hfill
    \begin{subfigure}{0.48\columnwidth}
        \centering
        \includegraphics[width=\linewidth]{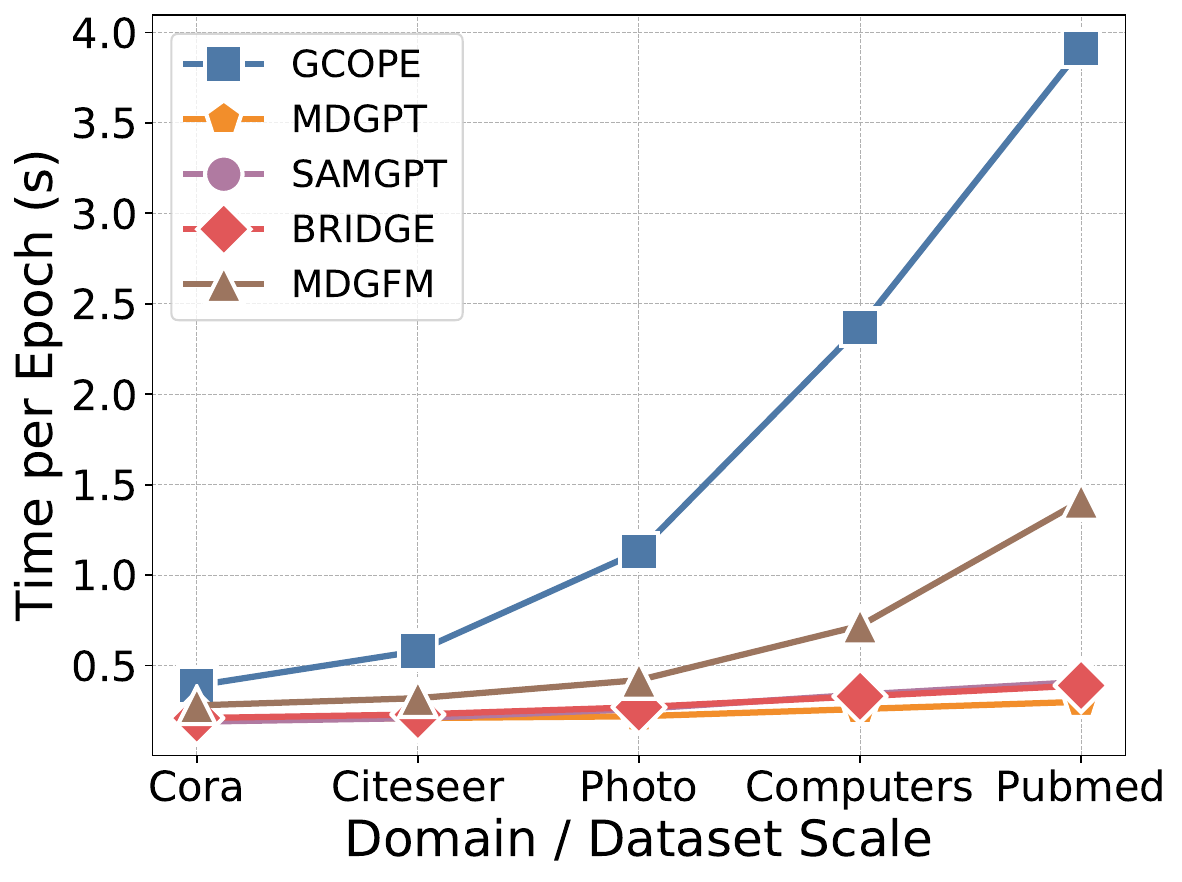}
        \caption{Time}
        \label{fig:time}
    \end{subfigure}
    \caption{Efficiency analysis of multi-domain graph methods.}
    \label{fig:effience}
\end{figure}

These observations suggest that achieving efficient and reliable multi-domain graph pre-training does not depend solely on increasing the scale of pre-training data, but rather on constructing pre-training data and objectives with higher transfer utility. The core objective of multi-domain graph pre-training is to learn domain-invariant representations across multiple source domains, enabling stable performance on unseen target domains. Inspired by classical domain generalization studies~\cite{domain_inv, domain_spe}, 
transferable knowledge is often characterized by invariant features shared across domains~\cite{domain_boundary}, which typically concentrate in regions where domain discrimination is most challenging and uncertain—specifically, near the domain boundaries~\cite{domain_boundary2}.

Thus, we propose MDGMIX, a multi-domain graph pre-training framework based on subgraph mixing and hierarchical discrimination. On the data side, MDGMIX identifies structurally similar boundary nodes prone to cross-domain confusion and performs subgraph mixing around them to construct challenging training samples. This strategy enables the model to capture shared and domain-specific structures without redundantly modeling full source graphs. On the objective side, we introduce a hierarchical discrimination objective that combines coarse-grained domain discrimination with fine-grained domain decomposition, facilitating the separation of domain-invariant knowledge from domain-specific patterns within mixed subgraphs. For cross-domain adaptation, instead of introducing domain-specific tokens during pre-training, we adopt a lightweight prompt-weighting mechanism that integrates source-domain centroids into target representations via learnable weights, requiring only a small number of trainable parameters and improving training and inference efficiency.

In summary, our contributions are as follows:
\begin{itemize}
    \item We reveal substantial data redundancy in multi-domain graph pre-training, showing that large portions of jointly pre-trained source-domain graphs are empirically redundant for cross-domain transfer.
    \item We propose MDGMIX, a multi-domain graph pre-training framework that selects boundary nodes and performs cross-domain subgraph mixing, together with hierarchical domain discrimination.
    \item Extensive experiments show that MDGMIX achieves superior few-shot cross-domain performance with improved time and memory efficiency.
\end{itemize}

\begin{figure}[t]
    \centering
    \begin{subfigure}{0.48\columnwidth}
        \centering
        \includegraphics[width=\linewidth]{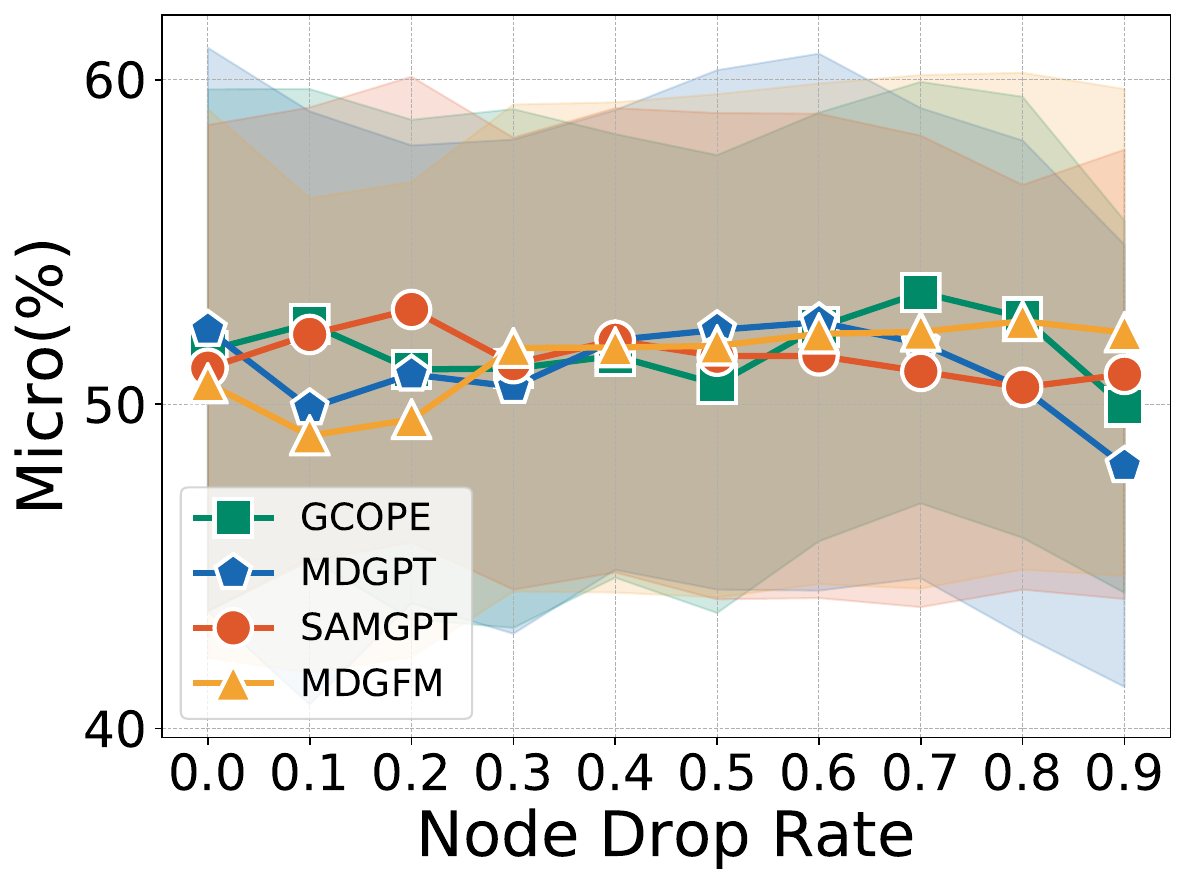}
        \caption{Pubmed}
        \label{fig:Pub}
    \end{subfigure}
    \hfill
    \begin{subfigure}{0.48\columnwidth}
        \centering
        \includegraphics[width=\linewidth]{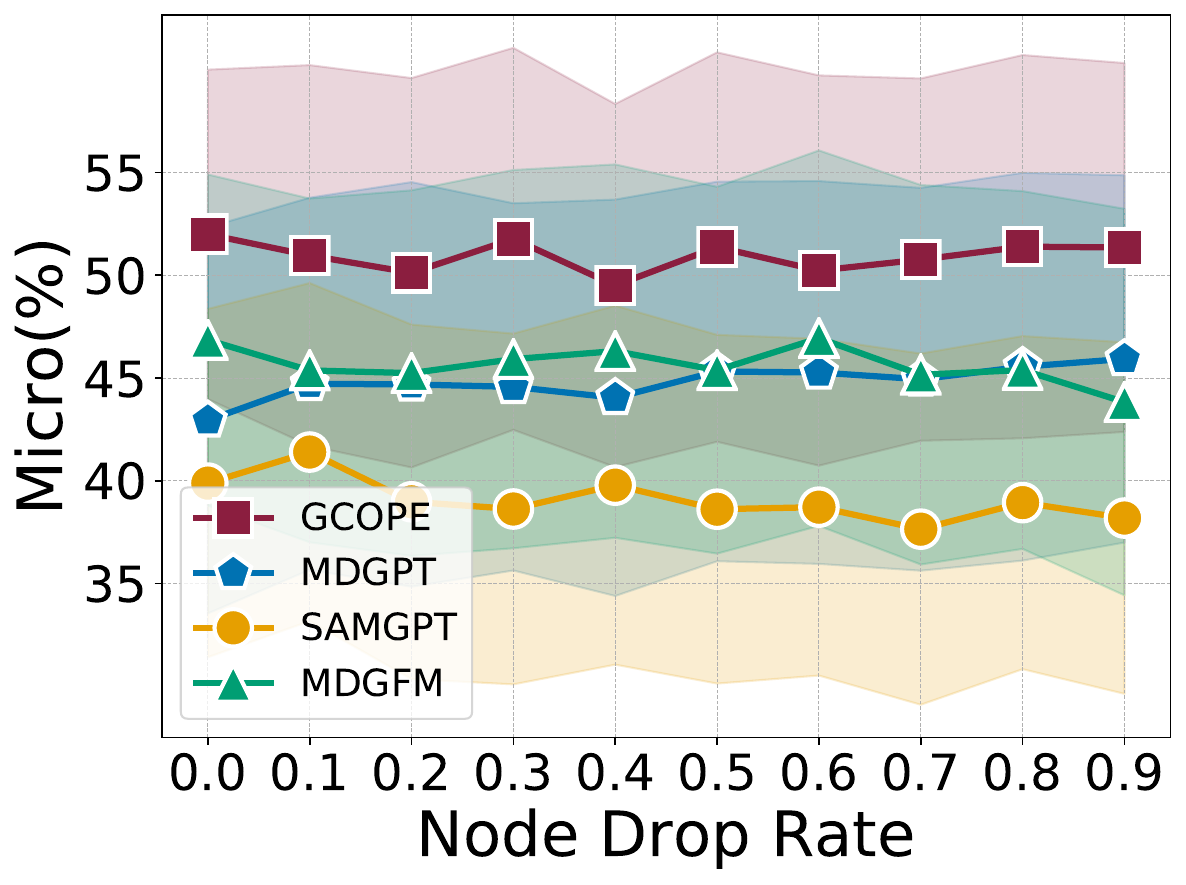}
        \caption{Computer}
        \label{fig:Com}
    \end{subfigure}
    \caption{Empirical analysis of graph data redundancy.}
    \label{fig:robust}
\end{figure}

\section{Related Work}


Multi-domain graph foundation models~\cite{gfm,gfm2, RAG-GFM} seek to learn generalizable structural and semantic knowledge from multiple source domains for effective transfer to unseen target domains.
Existing methods follow a two-stage paradigm of pre-training and adaptation. For instance, GCOPE~\cite{gcope} introduces virtual nodes to bridge structural gaps, MDGPT~\cite{mdgpt} assigns domain tokens to enhance discriminability, SAMGPT~\cite{samgpt}, SA2GFM~\cite{sa2gfm}, and MDGFM~\cite{mdgfm} model structural differences with domain-specific priors, and BRIDGE~\cite{bridge} employs alignment risk regularization and MoE-based experts to strengthen transfer.
However, these approaches uniformly rely on full-source-domain pre-training, ignoring inherent data redundancy and incurring high computational cost while necessitating complex adaptation modules to mitigate cross-domain discrepancies. For more related work, see the Appendix~\ref{more works}.

\section{Preliminary}

\para{Notation.} Given $K$ graph domains $\{\mathcal{G}^{(k)}\}_{k=1}^K$, where the $k$-th domain is denoted as $\mathcal{G}^{(k)} = (\mathcal{V}^{(k)}, \mathcal{E}^{(k)}, \mathbf{X}^{(k)}_{\text{init}})$, with $\mathcal{V}^{(k)}$ being the node set, $\mathcal{E}^{(k)}$ the edge set, and $\mathbf{X}^{(k)}_{int} \in \mathbb{R}^{|\mathcal{V}^{(k)}| \times d^{(k)}}$ the node feature matrix. Each graph is associated with a domain. In the downstream phase, given a target graph $\mathcal{G}^{(t)}$, the parameters of the GNN are frozen, and the generalization ability of the pre-trained model is tested on unseen graphs.

\para{Mixup.} Mixup~\cite{mixup} is a data augmentation strategy originally proposed for image classification, which constructs virtual training samples by linearly interpolating pairs of examples in both feature and label spaces. Given two labeled samples $(\mathbf{x}_i, \mathbf{y}_i)$ and $(\mathbf{x}_j, \mathbf{y}_j)$, Mixup generates a new sample $(\tilde{\mathbf{x}}, \tilde{\mathbf{y}})$ as
\begin{equation}
\tilde{\mathbf{x}} = \lambda \mathbf{x}_i + (1 - \lambda)\mathbf{x}_j,
\end{equation}
\begin{equation}
\tilde{\mathbf{y}} = \lambda \mathbf{y}_i + (1 - \lambda)\mathbf{y}_j,
\end{equation}
where $\lambda \in [0,1]$ is a mixing coefficient typically drawn from a Beta distribution $\mathrm{Beta}(\alpha, \alpha)$ with $\alpha > 0$. This symmetric distribution controls the interpolation strength: smaller $\alpha$ concentrates $\lambda$ near $0$ or $1$, yielding samples close to the original data, while larger $\alpha$ encourages stronger mixing.

\section{Method}
In this work, we present MDGMIX, a subgraph-level multi-domain graph pre-training framework that improves efficiency by pre-training on domain-ambiguous regions rather than modeling all source-domain graphs, as shown in Figure ~\ref{Fig.model}. Formally, the framework consists of four components: (1) multi-domain boundary node selection, (2) multi-domain subgraph mixing, (3) multi-level domain discrimination pre-training, and (4) cross-domain adaptation. 



\begin{figure*}[htbp]
\centering   
\includegraphics[width=0.98\linewidth]{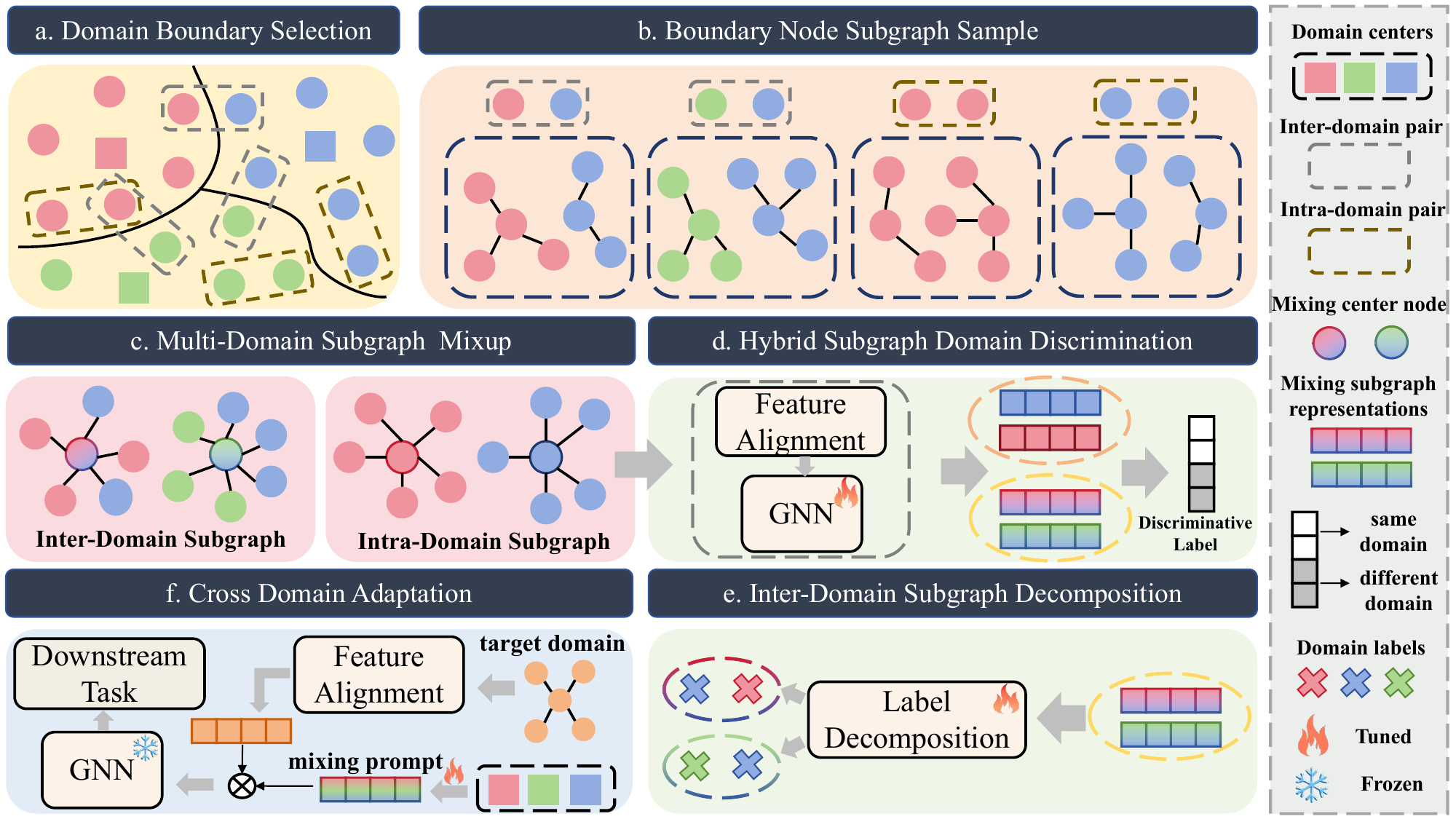} 
\caption{The overall framework of the MDGMIX.} 
\label{Fig.model}
\end{figure*}

\subsection{Boundary Node Selection}
Since the raw feature dimensions typically differ across domains, we first project all node features into a unified embedding space following prior work ~\cite{samgpt}:
\begin{equation}
    \mathbf{X}^{(k)} = \text{Proj}(\mathbf{X}^{(k)}_{\text{init}}),
\end{equation}
where $\mathbf{X}^{(k)} \in \mathbb{R}^{|V^{(k)}| \times d}$ denotes the aligned feature matrix, and $\text{Proj}(\cdot)$ corresponds to PCA-based dimensionality reduction~\cite{PCA}. 

We then compute a domain center for each graph domain. For domain $k$, the center vector $\mathbf{c}_k$ is defined as the mean feature of all nodes within the domain:
\begin{equation}
\mathbf{c}_k = \frac{1}{|\mathcal{V}^{(k)}|} \sum_{v_i \in \mathcal{V}^{(k)}} \mathbf{x}_i^{(k)},
\end{equation}
where $\mathbf{x}_i^{(k)} \in \mathbb{R}^d$ denotes the feature vector of domain node $v_i$. The domain center serves as a reference for measuring node-to-domain proximity.

To identify nodes that are easily confused across domains, we propose a multi-domain consensus boundary detection mechanism that quantifies the relative margin between a node and multiple domain centers. For a node $v_i \in \mathcal{G}^{(k)}$, its distance to the center of domain $k$ is defined as:
\begin{equation}
d_{i,k} = \lVert \mathbf{x}_i - \mathbf{c}_k \rVert_2 .
\end{equation}
For each domain pair $(k,m)$ with $k \neq m$, we compute a pairwise margin:
\begin{equation}
\Delta_i^{(k,m)} = |d_{i,k} - d_{i,m}|,
\end{equation}
where smaller absolute margins indicate higher boundary ambiguity between domains $\mathcal{G}^{(k)}$ and $\mathcal{G}^{(m)}$. Based on this margin, 
We compute confidence scores after min-max normalizing the pairwise margin.
\begin{equation}
s_i^{(k,m)} = 1 -  \text{Min-Max}(\Delta_i^{(k,m)}).
\end{equation}
For each domain pair $(k,m)$, we select the top $\lceil \rho \cdot |\mathcal{V}^{(k)}| \rceil$ nodes with the highest confidence as the candidate boundary set $\mathcal{B}_{k,m}$, where $\rho \in (0,1)$ is a hyperparameter controlling the proportion of boundary nodes.

To ensure that boundary nodes exhibit ambiguity across multiple domains
pairs, we take the intersection of all pairwise candidate sets:
\begin{equation}
\mathcal{B}_k = \bigcap_{m \neq k} \mathcal{B}_{k,m}.
\end{equation}
If the intersection is empty, we instead adopt the union of all candidate sets as a fallback strategy. By focusing on nodes that are simultaneously ambiguous across multiple domain pairs, MDGMIX explicitly concentrates pre-training on regions that are most informative for cross-domain transfer.

\subsection{Domain-Mixup Subgraph Construction}
Given boundary nodes from different domains, we construct mixed-domain subgraphs to generate challenging training samples that interpolate between domain distributions. For two boundary nodes $v_i \in \mathcal{G}^{(k)}$ and $v_j \in \mathcal{G}^{(m)}$ selected from the boundary sets 
$\mathcal{B}$, we compute their cosine similarity:
\begin{equation}
\operatorname{sim}(\mathbf{x}_i^{(k)}, \mathbf{x}_j^{(m)}) 
= \frac{{\mathbf{x}_i^{(k)}}^\top \mathbf{x}_j^{(m)}}
{\lVert \mathbf{x}_i^{(k)} \rVert \cdot \lVert \mathbf{x}_j^{(m)} \rVert}.
\end{equation}
Given a similarity threshold $\gamma$, we select $N$ high-similarity
cross-domain node pairs from the boundary nodes satisfying $\operatorname{sim}(\mathbf{x}_i, \mathbf{x}_j) > \gamma$ to construct mixed-domain samples.



For each selected pair, we extract a $k$-hop subgraph centered at the sampled node, forming $\mathcal{G}^{\text{sub}}=(\mathcal{V}^{\text{sub}}, \mathcal{E}^{\text{sub}}, \mathbf{X}^{\text{sub}})$. Inspired by mixup~\cite{mixup, agmixup}, we adopt a center-based subgraph mixup strategy to construct inter-domain mixed subgraphs.

Given two $k$-hop subgraphs $\mathcal{G}_{i,k}^{\text{sub}}$ and 
$\mathcal{G}_{j,m}^{\text{sub}}$ from different domains, the mixed 
inter-domain subgraph is defined as
\begin{equation}
\mathcal{G}_{ij}^{\text{inter}} 
= \mathcal{M}_g(\mathcal{G}_{i,k}^{\text{sub}}, 
\mathcal{G}_{j,m}^{\text{sub}}, \lambda)
= (\mathcal{V}_{ij}^{\text{inter}}, \mathcal{E}_{ij}^{\text{inter}}, \tilde{\mathbf{X}}_{ij}^{\text{inter}}).
\end{equation}
For the center nodes of the two subgraphs, we apply a mixup operation to
obtain the mixed center feature:
\begin{equation}
\tilde{x}_{ij} = \lambda \, x_{i,k}^{\text{sub}} 
+ (1-\lambda) \, x_{j,m}^{\text{sub}},
\end{equation}
where $x_{i,k}^{\text{sub}}$ and $x_{j,m}^{\text{sub}}$ denote center-node 
features from domains $D_k$ and $D_m$, respectively. For the mixed graph 
structure, we perform a union operation on the neighborhoods of different subgraphs to preserve the structural information of the original subgraphs.

We use a fixed $\lambda = 0.5$ to ensure symmetric contributions from both domains, which is particularly important since selected boundary nodes are already highly ambiguous across domains. We construct mixed-domain labels for pretraining:
\begin{equation}
\tilde{y}_{ij} = \lambda \, y_{i,k}^{\text{sub}} 
+ (1-\lambda) \, y_{j,m}^{\text{sub}},
\label{y_mix}
\end{equation}
where $\tilde{y}_{ij} \in \mathbb{R}^{K}$ is a multi-class label whose components correspond to the two mixed domains. These labels are used in the inter-domain subgraph decomposition loss.

To prevent loss of domain-specific information, we also construct 
intra-domain mixed subgraphs. For each domain $D_k$, we randomly sample
${N}/{K}$ same-domain node pairs and construct intra-domain mixed 
subgraphs:
\begin{equation}
\mathcal{G}_{ij}^{\text{intra}} 
= \mathcal{M}_g(\mathcal{G}_{i,k}^{\text{sub}}, 
\mathcal{G}_{j,k}^{\text{sub}}, \lambda)
= (\mathcal{V}_{ij}^{\text{intra}}, \mathcal{E}_{ij}^{\text{intra}}, \tilde{\mathbf{X}}_{ij}^{\text{intra}}),
\end{equation}
where the mixing operation is identical to the inter-domain case.
Different from prior methods~\cite{gfm} that depend on full datasets from each domain, our approach leverages interpolations between cross-domain samples to construct lightweight mixed-domain subgraphs, thereby enhancing the efficiency of multi-domain graph pre-training.

\subsection{Multi-level Domain Discrimination Pretraining}

\paragraph{Hybrid Subgraph Domain Discrimination.} Following the construction described above, the original multi-domain dataset is compressed into a mixed-domain subgraph dataset $\mathcal{G}^{\text{mix}} (\mathcal{G}^{\text{inter}}, 
\mathcal{G}^{\text{intra}})$, consisting of $N$ inter-domain mixed subgraphs and $N$ intra-domain mixed subgraphs. To capture coarse-grained domain differences, we assign label $0$ to intra-domain subgraphs and label $1$ to inter-domain subgraphs, resulting in binary coarse-grained domain labels $\mathbf{Y}^{\text{mix}} \in \{0,1\}^{2N}$. For each mixed subgraph, its representation is computed as:
\begin{equation}
\tilde{\mathbf{h}}_{ij}
= \mathcal{P}\!\left(\text{GNN}(\mathcal{G}_{ij}^{\text{mix}})\right),
\end{equation}
where $\tilde{\mathbf{h}}_{ij}$ denotes the graph-level representation of 
the mixed subgraph composed of subgraphs $i$ and $j$, and $\mathcal{P}$ 
represents a graph pooling operation. A domain discriminator is then used 
to determine whether a mixed subgraph belongs to the inter-domain class:
\begin{equation}
\hat{\mathbf{y}}_{ij} = D_\psi(\tilde{\mathbf{h}}_{ij}),
\end{equation}
where $D_\psi$ is a linear classifier (with softmax) predicting the coarse-grained 
domain class, and $\hat{y}_{ij}$ denote the predicted probabilities.
The domain discrimination loss is defined as
\begin{equation}
\mathcal{L}_{\text{dis}}
= -\mathbb{E}_{(i,j)}
\Big[
y_{ij}^{\text{mix}} \log(\hat{y}_{ij}^)
+ (1-y_{ij}^{\text{mix}}) \log(\hat{y}_{ij})
\Big].
\end{equation}
We learn coarse-grained domain boundaries by classifying inter-domain subgraphs and intra-domain subgraphs.
\paragraph{Inter-Domain Subgraph Decomposition.} 
To further distill fine-grained domain-shared knowledge across domains and suppress explicit domain-specific bias in subgraph representations, we introduce an adversarial cross-domain subgraph label decomposition strategy.

Specifically, based on the mixed-domain subgraph predictions obtained from the coarse-grained discriminator, we introduce a binary gating mechanism to select inter-domain mixed subgraphs for fine-grained domain decomposition.

Let $\hat{y}_{ij}^{\text{mix}} \in \mathbb{R}^2$ denote the predicted probability vector indicating whether a mixed subgraph $G_{ij}$ is intra-domain or inter-domain. We define the subgraph encoding mask as
\begin{equation}
\mathbf{m}_{ij} = \mathbbm{1}\big[\hat{y}_{ij,\,\text{inter}}^{\text{mix}} > 0.5\big],
\end{equation}
where $\mathbf{m}_{ij} \in \{0,1\}$ is a binary indicator that activates only for inter-domain mixed subgraphs.

Since the mixed-domain labels $\tilde{\mathbf{Y}}_{ij}$ defined in Eq.~(\ref{y_mix}) represent a continuous domain composition distribution rather than discrete categorical labels, we adopt the KL divergence to model distribution-level consistency. For an inter-domain mixed subgraph, the predicted domain proportion distribution is computed as
\begin{equation}
\mathbf{p}_{ij}
= Q_\phi(\mathrm{GRL}(\tilde{\mathbf{h}}_{ij} \cdot \mathbf{m}_{ij})),
\end{equation}
where $Q_\phi$ denotes the domain label decomposition network parameterized by $\phi$, and $\mathrm{GRL}(\cdot)$ represents a Gradient Reversal Layer. The gradient reversal layer enforces the encoder to suppress explicit domain-identifiable signals while preserving the decomposer’s ability to recover domain proportions. The binary mask $\mathbf{m}_{ij}$ is applied at the loss level to ensure that the fine-grained decomposition loss is computed only for inter-domain mixed subgraphs. The fine-grained distribution prediction loss is defined as:
\begin{equation}
\mathcal{L}_{\text{fine}}
= \frac{1}{|\mathcal{M}|} \sum_{(i,j) \in \mathcal{M}}
\sum_{d=1}^{K}
\tilde{y}_{ij}^{(d)}
\log \frac{\tilde{y}_{ij}^{(d)}}{p_{ij}^{(d)}},
\end{equation}
where $\mathcal{M}$ denotes the set of predicted inter-domain mixed subgraphs, $\tilde{y}_{ij}^{(d)}$ is the ground-truth domain proportion, and $p_{ij}^{(d)}$ is the predicted proportion. Under this mechanism, the domain decomposer is encouraged to accurately recover the domain composition distribution of each mixed subgraph, while the shared encoder is forced to produce subgraph representations that are indistinguishable with respect to their domain origins, thus facilitating the learning of domain-invariant representations across domains.

\paragraph{Multi-level Domain Discrimination Loss}

Combining the coarse-grained and fine-grained objectives, the overall pre-training loss is
\begin{equation}
\mathcal{L}_{\text{pre}}
= \mathcal{L}_{\text{dis}}
+ \mathcal{L}_{\text{fine}}.
\end{equation}
The coarse-grained loss $\mathcal{L}_{\text{dis}}$ learns the basic patterns distinguishing intra- and inter-domain mixtures, while the fine-grained decomposition loss $\mathcal{L}_{\text{fine}}$ explicitly models the domain composition of each mixed sample. This multi-level supervision mechanism enables the model to capture cross-domain relationships at different granularities, providing richer domain-invariant feature representations for subsequent domain generalization tasks.

\subsection{Cross Domain Adaptation}
During adaptation, we freeze the pretrained GNN and transfer source-domain knowledge through a lightweight prompt-weighting mechanism. Instead of introducing domain-specific tokens during pre-training, we leverage source-domain centers $\{c_i \}^K_{i=1}$ and compute a mixed prompt:
\begin{equation}
\mathbf{p}^{\text{mix}} = \sum_{i=1}^{K} \alpha_i \mathbf{c}_i,
\end{equation}
where only the weighting coefficients $\alpha_i$ are learnable. Then, we use the mixed prompt to enhance node features of the target domain. The augmented node is represented as:
\begin{equation}
H = \text{GNN}(\mathcal{G}^t,\ \mathbf{X}^t \odot \mathbf{p}^{\text{mix}}),
\end{equation}
where $\mathbf{X}^t$ is the aligned target domain feature. For downstream node or graph classification tasks, we follow prior designs and adopt a generic task template based on subgraph similarity. The loss function is defined as:
\begin{equation}
\mathcal{L}_{\mathrm{ds}}
=-\sum_{(x_i, y_i)}
\ln
\frac{
\exp\left(\mathrm{sim}(\mathbf{h}_{x_i}, \hat{\mathbf{h}}_{y_i})/\tau\right)
}{
\sum_{c\in\mathcal{Y}}
\exp\left(\mathrm{sim}(\mathbf{h}_{x_i}, \hat{\mathbf{h}}_c)/\tau\right)
},
\end{equation}
where $\tau$ is the temperature parameter and $\mathrm{sim}(\cdot)$ denotes cosine similarity. $\mathbf{h}_{x_i}$ represents the final node or graph embedding, and $\hat{\mathbf{h}}_c$ denotes the class prototype of class $c$, computed as the mean representation of labeled nodes or graphs of that class. Notably, the number of trainable parameters scales linearly with the number of source domains, making the adaptation phase both efficient and stable.

\section{Theoretical Analysis}
\label{sec:theory}

In this section, we provide a theoretical justification for MDGMIX from a multi-source domain generalization perspective~\cite{advdomain, refdis}. Our analysis shows that training exclusively on boundary-centered mixed subgraphs suffices for effective cross-domain transfer, with detailed proofs in Appendix~\ref{app:theory}.

\subsection{Generalization Bound}

We analyze the subgraph-level generalization error. Let $\mathcal{H} = \{h = g \circ f_\theta\}$ be the hypothesis class comprising a GNN encoder and a task head. Let $P_t$ be the unseen target domain distribution. MDGMIX constructs a proxy distribution $\tilde{P}$ by mixing subgraphs centered strictly at boundary nodes to approximate an idealized reference distribution $R^*$, which uniformly mixes all source domains and serves as an intermediate bridge between source and target domains.

\begin{theorem}[Domain generalization error bound]
\label{thm:generalization_bound_main}
Under standard assumptions of encoder stability and Lipschitz continuity of the loss,
structural semantic consistency (Assumption~\ref{assump:structural_consistency}),
and non-vanishing boundary mass (Assumption~\ref{assump:boundary_mass}), for any hypothesis $h \in \mathcal{H}$, with probability at least $1 - \delta$, the target risk $\epsilon_{P_t}(h)$ is bounded by:
\begin{equation}
\begin{aligned}
\epsilon_{P_t}(h) &\leq \hat{\epsilon}_{\tilde{P}}(h) 
+ \underbrace{L_\ell L_f \cdot W_1(\tilde{P}, R^*)}_{\text{(I) Approximation Error}} \\
&+ \underbrace{L_\ell L_f \epsilon_{\mathrm{struct}} 
+ \epsilon_{\mathrm{align}}}_{\text{(II) Transfer Gap}} 
+ \underbrace{\sigma_{\text{dep}}\sqrt{\frac{\log(2/\delta)}{2n}}}_{\text{(III) Sampling Variance}}
\label{eq:generalization_bound}
\end{aligned}
\end{equation}
where $\hat{\epsilon}_{\tilde{P}}(h)$ is the empirical training risk on mixed subgraphs,
$L_\ell$ and $L_f$ denote the Lipschitz constants of the loss and encoder, respectively. $W_1$ denotes the 1-Wasserstein distance, $\epsilon_{\mathrm{struct}}$ quantifies structural semantic shift, $\epsilon_{\mathrm{align}}$ captures structural-to-label alignment error, and $\sigma_{\text{dep}} = \sqrt{\frac{1}{4} + (n-1)\Delta_{\max}}$ accounts for weak dependence among sampled subgraphs.
\end{theorem}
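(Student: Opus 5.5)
The plan is to decompose the target risk by a telescoping chain through two intermediate distributions:
\[
\epsilon_{P_t}(h) = \hat{\epsilon}_{\tilde P}(h) + \big(\epsilon_{\tilde P}(h)-\hat{\epsilon}_{\tilde P}(h)\big) + \big(\epsilon_{R^*}(h)-\epsilon_{\tilde P}(h)\big) + \big(\epsilon_{P_t}(h)-\epsilon_{R^*}(h)\big).
\]
Each difference maps onto one term of the bound. The second difference will give term (III), the third term (I), and the fourth term (II). Throughout, I treat the loss $\ell(g(f_\theta(G)),y)$ as a function of the subgraph $G$ (with its label) on a metric space of subgraphs. The composition $G\mapsto \ell(h(G),y)$ is then $L_\ell L_f$-Lipschitz, using encoder stability ($\|f_\theta(G)-f_\theta(G')\|\le L_f\, d(G,G')$) together with Lipschitzness of the loss in the representation (absorbing the head's constant into $L_\ell$).

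\textbf{Term (I).} By Kantorovich--Rubinstein duality,
\[
|\epsilon_{R^*}(h)-\epsilon_{\tilde P}(h)| \le \sup_{\|\phi\|_{\mathrm{Lip}}\le L_\ell L_f}\big|\mathbb{E}_{R^*}\phi-\mathbb{E}_{\tilde P}\phi\big| = L_\ell L_f\, W_1(\tilde P,R^*).
\]
Non-vanishing boundary mass (Assumption~\ref{assump:boundary_mass}) enters here only to guarantee that $\tilde P$ is well defined, i.e.\ boundary-centred subgraphs have positive probability under each source. This makes the coupling used in $W_1$ nontrivial and keeps the term finite. I keep $W_1(\tilde P,R^*)$ explicit rather than bounding it further.

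\textbf{Term (II).} I would split $P_t$ and $R^*$ into their subgraph marginals and conditional labellings. Structural semantic consistency (Assumption~\ref{assump:structural_consistency}) is taken to say that there is a coupling of the target and reference subgraph marginals with expected transport cost at most $\epsilon_{\mathrm{struct}}$. Applying the same Lipschitz argument to the marginal part gives $L_\ell L_f\epsilon_{\mathrm{struct}}$. The residual comes from labels disagreeing on matched subgraphs, i.e.\ the mismatch between the labelling functions $\mathbb{E}|\ell(h,y_t)-\ell(h,y_{R^*})|$. This residual is bounded by definition by $\epsilon_{\mathrm{align}}$, in the spirit of the joint-optimal-risk term $\lambda$ of Ben-David-type bounds.

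\textbf{Term (III), the main obstacle.} This requires concentration of $\hat\epsilon_{\tilde P}$ around $\epsilon_{\tilde P}$ when the $n$ mixed subgraphs are not independent, since their $k$-hop neighbourhoods overlap. First I normalise the loss to $[0,1]$, so that each summand has variance at most $1/4$. Then I would bound the variance of the empirical mean $\frac1n\sum_i Z_i$ by
\[
\frac1{n^2}\Big(\sum_i \mathrm{Var}(Z_i)+\sum_{i\ne j}|\mathrm{Cov}(Z_i,Z_j)|\Big)\le \frac{1}{n}\Big(\tfrac14+(n-1)\Delta_{\max}\Big)=\frac{\sigma_{\mathrm{dep}}^2}{n},
\]
with $\Delta_{\max}$ the maximal pairwise covariance. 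Turning this into the stated sub-Gaussian tail $\sigma_{\mathrm{dep}}\sqrt{\log(2/\delta)/(2n)}$ requires a Hoeffding/McDiarmid-type inequality for weakly dependent variables. I would first try a martingale (Azuma) argument over the Doob filtration, in which the bounded-difference constant is inflated by the dependency coefficients to $\sigma_{\mathrm{dep}}$. If that fails, the fallback is a Janson-style dependency-graph Hoeffding bound, interpreting $\Delta_{\max}$ accordingly.

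A union bound at level $\delta$ is needed only for this single step, because the other terms are deterministic. Since the Lipschitz argument holds for every $h$, the bound holds uniformly in the sense stated. Summing the four pieces then yields the result.
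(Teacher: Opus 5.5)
Your decomposition matches the paper's proof. Term (I) is its risk-difference bound (Lemma~\ref{lem:risk_diff}) via Kantorovich--Rubinstein duality, term (II) is its split of $\epsilon_{P_t}(h)-\epsilon_{R^*}(h)$ into $\epsilon_{\mathrm{align}}$ plus a structural shift, and term (III) is its weak-dependence concentration lemma. Your explicit telescoping is what the paper's final ``combining'' step actually needs, since the paper's written chain leaves a term $L_\ell L_f W_1(R^*,P_t)$ that its transfer argument never bounds.

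\textbf{The gap is term (III).} You rightly flag it, but neither route you propose can close it. $\Delta_{\max}$ is a pairwise covariance, so with $\bar Z=\frac1n\sum_i Z_i$ it controls only $\mathrm{Var}(\bar Z)\le\sigma_{\mathrm{dep}}^2/n$. That gives Chebyshev's radius $\sigma_{\mathrm{dep}}/\sqrt{n\delta}$, not a $\sqrt{\log(2/\delta)}$ tail. Azuma/McDiarmid for dependent data needs bounded differences of the Doob martingale, i.e.\ mixing coefficients. Janson's inequality needs a dependency graph and yields radius $\sqrt{\chi^*\log(2/\delta)/(2n)}$, where $\chi^*\ge 1$ is its fractional chromatic number. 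Neither quantity is a function of $\Delta_{\max}$. Pairwise covariance provably cannot suffice. Take fair bits $b_1,\dots,b_m$ and the $n=2^m-1$ parities $X_S=\bigoplus_{i\in S}b_i$ for $\emptyset\neq S\subseteq\{1,\dots,m\}$. These are pairwise independent, so using them as the losses gives $\Delta_{\max}=0$. Yet all of them vanish with probability $2^{-m}=1/(n+1)$, so the empirical risk is off by $1/2$ with only polynomially small probability, while the claimed tail is $2e^{-2n}$.

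The constant is also wrong even under full independence. With $\Delta_{\max}=0$ you get $\sigma_{\mathrm{dep}}=1/2$, so (III) equals $\tfrac12\sqrt{\log(2/\delta)/(2n)}$, half the Hoeffding radius. Take i.i.d.\ losses equal to $0$ or $1$ with probability $1/2$. Then $2\sqrt n\,\bigl(\epsilon_{\tilde P}(h)-\hat{\epsilon}_{\tilde P}(h)\bigr)\to N(0,1)$, so the probability that the deviation exceeds (III) tends to $\Phi\bigl(-\sqrt{\log(2/\delta)/2}\bigr)$, with $\Phi$ the standard normal CDF. At $\delta=0.05$ this is about $0.087>\delta$. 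Hence in an instance with such losses where the other terms vanish (e.g.\ $P_t=R^*=\tilde P$), the theorem as stated fails. Even a sub-Gaussian bound with variance proxy $\sigma_{\mathrm{dep}}^2/n$, which a variance bound alone does not provide, would give radius $\sigma_{\mathrm{dep}}\sqrt{2\log(2/\delta)/n}$, twice the stated term.

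The paper's proof of Lemma~\ref{lemma:concentration} makes exactly the leap you flagged. After the variance bound it asserts that ``a variance-controlled concentration inequality for weakly dependent bounded variables'' gives $2\exp(-2nt^2/\sigma_{\mathrm{dep}}^2)$, and both examples above refute this. A correct (III) must either use the Chebyshev radius or replace $\Delta_{\max}$ by a genuine dependence measure such as $\chi^*$.

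A smaller point on (II). Assumption~\ref{assump:structural_consistency} bounds $W_1$ between push-forwards to $\mathcal{S}$ under $\psi\circ f_\theta$, not a transport cost between subgraph marginals. The paper also defines $\epsilon_{\mathrm{align}}$ as a supremum over $s$ of conditional-risk differences, not as your label mismatch along a coupling. Your reading is a stronger assumption under which the factor $L_\ell L_f$ is honestly justified. The paper instead derives that factor from non-expansiveness of $\psi$, which runs the wrong way: $\|s_1-s_2\|\le\|z_1-z_2\|$ does not make a Lipschitz function of $z$ Lipschitz in $s$. So state your version as your own assumption, not as Assumption~\ref{assump:structural_consistency}.
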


The bound decomposes the target error into three interpretable components, each corresponding to a key design choice in MDGMIX:

\begin{enumerate}
    \item \textbf{Approximation Error} $L_\ell L_f \cdot W_1(\tilde{P}, R^*)$ is controlled by boundary sampling ratio $\rho_{\min}$. Lemma~\ref{lem:boundary_approx} shows this error scales with $(1-\rho_{\min})$, theoretically justifying the severe data redundancy observed in Figure~\ref{fig:robust}.
    
    \item \textbf{Transfer Gap} $L_\ell L_f \epsilon_{\mathrm{struct}} + \epsilon_{\mathrm{align}}$ captures source-target discrepancy without double-counting structural shift. The decomposition enables generalization under label space shift: $\epsilon_{\mathrm{struct}}$ quantifies domain-invariant structural semantics (Assumption~\ref{assump:structural_consistency}), while $\epsilon_{\mathrm{align}}$ accounts for task-specific label mismatches that require few-shot calibration.
    
    \item \textbf{Sampling Variance} $\sigma_{\text{dep}}\sqrt{\log(2/\delta)/2n}$ ensures $\mathcal{O}(1/\sqrt{n})$ convergence under weak dependence (Lemma~\ref{lemma:concentration}), despite subgraph overlap from boundary mixing.
\end{enumerate}

\paragraph{Empirical validation of key assumptions.}
We further provide empirical evidence for the key assumptions used in Theorem~5.1. 
First, to validate the structural--semantic consistency of boundary regions, we train a domain classifier on domain-center subgraphs and evaluate it on center, random, and boundary subgraphs. 
Boundary subgraphs lead to the lowest domain classification accuracy and the highest loss, indicating that they are indeed more domain-ambiguous and less dominated by domain-specific signals. 
Second, we count the number of selected boundary nodes under strict boundary-selection settings and observe non-zero boundary mass across all domain pairs used in our experiments. 
The detailed results are reported in Appendix~\ref{app:assumption_validation}.

\section{Experiment}
In this section, we conduct extensive experiments to evaluate the effectiveness of MDGMIX, primarily addressing the following research questions:
\begin{itemize}
\setlength\itemsep{-0.3em}   
    \item \textbf{RQ1.} How does MDGMIX perform under few-shot classification tasks?
    \item \textbf{RQ2.} How do different model components affect overall performance?
    \item \textbf{RQ3.} How efficient is MDGMIX during both pre-training and adaptation compared to existing baselines?
    \item \textbf{RQ4.} How sensitive is the model to hyperparameter variations?
\end{itemize}

\begin{table*}[t]
\caption{Performance comparison on node classification tasks (Accuracy). The best results are shown in bold and the runner-ups are underlined.}
\centering
\setlength{\tabcolsep}{4pt}
\renewcommand{\arraystretch}{1.15}
\begin{tabular}{l|c|c|c|c|c|c|c|c}
\toprule
Method 
& Cora 
& CiteSeer 
& Pubmed
& Photo
& Computers
& Squirrel
& Chameleon
& OGBN-Arxiv \\
\midrule

GCN
& 37.64\textsubscript{$\pm$5.75}
& 29.61\textsubscript{$\pm$5.85}
& 49.34\textsubscript{$\pm$6.74}
& 53.89\textsubscript{$\pm$6.90}
& 43.55\textsubscript{$\pm$7.94}
& 20.81\textsubscript{$\pm$1.36}
& 23.31\textsubscript{$\pm$2.86}
& 17.24\textsubscript{$\pm$3.10} \\

GAT
& 43.17\textsubscript{$\pm$6.41}
& 33.26\textsubscript{$\pm$6.66}
& 48.00\textsubscript{$\pm$8.43}
& 59.40\textsubscript{$\pm$7.64}
& 45.20\textsubscript{$\pm$8.57}
& 21.15\textsubscript{$\pm$3.41}
& 23.34\textsubscript{$\pm$4.88}
& 21.01\textsubscript{$\pm$4.36} \\
\midrule
GraphCL
& 41.05\textsubscript{$\pm$6.81}
& \underline{41.25}\textsubscript{$\pm$8.24}
& 47.21\textsubscript{$\pm$7.80}
& 52.37\textsubscript{$\pm$7.28}
& 42.80\textsubscript{$\pm$8.15}
& 21.40\textsubscript{$\pm$3.21}
& 25.48\textsubscript{$\pm$3.97}
& 20.87\textsubscript{$\pm$4.68} \\

SimGRACE
& 41.69\textsubscript{$\pm$7.98}
& 34.92\textsubscript{$\pm$8.30}
& \underline{53.32}\textsubscript{$\pm$8.42}
& 62.39\textsubscript{$\pm$7.83}
& 52.05\textsubscript{$\pm$9.43}
& 21.63\textsubscript{$\pm$3.16}
& 22.85\textsubscript{$\pm$4.11}
& 19.16\textsubscript{$\pm$5.20} \\
\midrule
GPF+
& 38.22\textsubscript{$\pm$6.97}
& 38.53\textsubscript{$\pm$7.80}
& 49.34\textsubscript{$\pm$6.22}
& 53.88\textsubscript{$\pm$7.60}
& 45.42\textsubscript{$\pm$9.28}
& 21.85\textsubscript{$\pm$3.78}
& 26.09\textsubscript{$\pm$5.03}
& 20.47\textsubscript{$\pm$4.58} \\

GraphPrompt
& 38.44\textsubscript{$\pm$7.39}
& 37.75\textsubscript{$\pm$8.61}
& 52.57\textsubscript{$\pm$7.84}
& 54.57\textsubscript{$\pm$7.98}
& 42.21\textsubscript{$\pm$7.62}
& 21.66\textsubscript{$\pm$3.36}
& 25.94\textsubscript{$\pm$4.54}
& 20.67\textsubscript{$\pm$4.64} \\

EdgePrompt+
& 40.66\textsubscript{$\pm$7.66}
& 32.19\textsubscript{$\pm$6.23}
& 44.72\textsubscript{$\pm$6.17}
& 60.67\textsubscript{$\pm$8.35}
& 50.90\textsubscript{$\pm$9.34}
& 20.78\textsubscript{$\pm$1.87}
& 25.00\textsubscript{$\pm$3.98}
& 20.53\textsubscript{$\pm$4.31} \\
\midrule
GCOPE
& 36.85\textsubscript{$\pm$6.62}
& 34.79\textsubscript{$\pm$8.08}
& 52.69\textsubscript{$\pm$7.52}
& 59.18\textsubscript{$\pm$9.10}
& 53.28\textsubscript{$\pm$9.60}
& \underline{22.34}\textsubscript{$\pm$3.12}
& 25.50\textsubscript{$\pm$3.86}
& \underline{21.25}\textsubscript{$\pm$3.96} \\

MDGPT
& \underline{43.38}\textsubscript{$\pm$7.52}
& 33.93\textsubscript{$\pm$8.71}
& 50.77\textsubscript{$\pm$10.47}
& 60.65\textsubscript{$\pm$8.21}
& 49.00\textsubscript{$\pm$10.07}
& 21.83\textsubscript{$\pm$3.25}
& 24.57\textsubscript{$\pm$3.86}
& 19.19\textsubscript{$\pm$5.33} \\

SAMGPT
& 40.33\textsubscript{$\pm$8.32}
& 36.79\textsubscript{$\pm$8.72}
& 49.38\textsubscript{$\pm$5.95}
& 63.77\textsubscript{$\pm$9.50}
& 51.14\textsubscript{$\pm$8.65}
& 20.83\textsubscript{$\pm$2.14}
& 24.72\textsubscript{$\pm$3.97}
& 17.88\textsubscript{$\pm$4.69} \\

MDGFM
& 41.45\textsubscript{$\pm$8.27}
& 37.58\textsubscript{$\pm$8.99}
& 51.73\textsubscript{$\pm$7.47}
& \underline{67.35}\textsubscript{$\pm$9.49}
& 43.24\textsubscript{$\pm$9.94}
& 21.40\textsubscript{$\pm$2.32}
& \underline{26.65}\textsubscript{$\pm$4.86}
& 17.06\textsubscript{$\pm$5.21} \\

BRIDGE
& 41.93\textsubscript{$\pm$7.47}
& 38.68\textsubscript{$\pm$9.27}
& 50.74\textsubscript{$\pm$9.48}
& 60.25\textsubscript{$\pm$9.85}
& \underline{53.61}\textsubscript{$\pm$9.86}
& 21.40\textsubscript{$\pm$3.21}
& 25.67\textsubscript{$\pm$4.22}
& 21.66\textsubscript{$\pm$4.79} \\
\midrule
\textbf{MDGMIX}
& \textbf{46.83}\textsubscript{$\pm$7.77}
& \textbf{44.56}\textsubscript{$\pm$10.69}
& \textbf{55.28}\textsubscript{$\pm$7.73}
& \textbf{68.40}\textsubscript{$\pm$8.34}
& \textbf{54.10}\textsubscript{$\pm$10.22}
& \textbf{22.92}\textsubscript{$\pm$3.84}
& \textbf{27.20}\textsubscript{$\pm$4.75}
& \textbf{23.32}\textsubscript{$\pm$4.84} \\

\bottomrule
\end{tabular}
\label{classification}
\end{table*}

\begin{table*}[t]
\caption{Performance comparison on graph classification tasks (Accuracy). The best results are shown in bold and the runner-ups are underlined.}
\centering
\setlength{\tabcolsep}{4pt}
\renewcommand{\arraystretch}{1.15}
\begin{tabular}{l|c|c|c|c|c|c|c|c}
\toprule
Method 
& Cora 
& CiteSeer 
& Pubmed
& Photo
& Computers
& Squirrel
& Chameleon
& OGBN-Arxiv \\
\midrule

GCN
& 41.36\textsubscript{$\pm$11.42}
& 30.67\textsubscript{$\pm$8.62}
& 50.09\textsubscript{$\pm$9.63}
&56.26\textsubscript{$\pm$9.77}
& 47.91\textsubscript{$\pm$11.62}
& 20.70\textsubscript{$\pm$1.49}
& 22.64\textsubscript{$\pm$3.63}
& 17.36\textsubscript{$\pm$5.18} \\

GAT
& 43.39\textsubscript{$\pm$10.79}
& 33.57\textsubscript{$\pm$9.52}
& 50.78\textsubscript{$\pm$9.30}
& 55.12\textsubscript{$\pm$9.99}
& 48.07\textsubscript{$\pm$11.10}
& 20.52\textsubscript{$\pm$1.75}
& 22.78\textsubscript{$\pm$3.93}
& 17.03\textsubscript{$\pm$4.88} \\
\midrule
GraphCL
& 46.81\textsubscript{$\pm$8.13}
& 40.38\textsubscript{$\pm$7.21}
& 51.65\textsubscript{$\pm$8.11}
& 58.12\textsubscript{$\pm$8.61}
& 49.99\textsubscript{$\pm$10.97}
& 20.32\textsubscript{$\pm$2.57}
& 26.45\textsubscript{$\pm$4.52}
& 20.72\textsubscript{$\pm$3.98} \\

SimGRACE
& 47.81\textsubscript{$\pm$8.74}
& 40.56\textsubscript{$\pm$6.86}
& 52.39\textsubscript{$\pm$9.28}
& 62.78\textsubscript{$\pm$8.10}
& 49.84\textsubscript{$\pm$10.57}
& 20.87\textsubscript{$\pm$2.35}
& 26.62\textsubscript{$\pm$4.57}
& 19.72\textsubscript{$\pm$4.28} \\
\midrule
GPF+
& 48.50\textsubscript{$\pm$7.66}
& 40.23\textsubscript{$\pm$6.97}
& 53.02\textsubscript{$\pm$8.84}
& 58.93\textsubscript{$\pm$8.76}
& 49.98\textsubscript{$\pm$11.18}
& 20.65\textsubscript{$\pm$2.97}
& 26.27\textsubscript{$\pm$3.86}
& 20.91\textsubscript{$\pm$3.92} \\

GraphPrompt
& 46.91\textsubscript{$\pm$8.28}
& 39.26\textsubscript{$\pm$8.71}
& 51.73\textsubscript{$\pm$9.59}
& 61.54\textsubscript{$\pm$8.99}
& 49.67\textsubscript{$\pm$10.80}
& 21.01\textsubscript{$\pm$2.21}
& 26.88\textsubscript{$\pm$4.14}
& \underline{20.95}\textsubscript{$\pm$4.20} \\

EdgePrompt+
& 42.87\textsubscript{$\pm$8.47}
& 37.67\textsubscript{$\pm$7.20}
& 48.49\textsubscript{$\pm$8.64}
& 62.55\textsubscript{$\pm$8.68}
& 50.76\textsubscript{$\pm$11.46}
& 21.04\textsubscript{$\pm$2.69}
& 27.00\textsubscript{$\pm$4.87}
& 20.37\textsubscript{$\pm$3.89} \\
\midrule
GCOPE
& 45.69\textsubscript{$\pm$9.30}
& 40.03\textsubscript{$\pm$7.24}
& 53.67\textsubscript{$\pm$10.40}
& 58.42\textsubscript{$\pm$8.09}
& 49.05\textsubscript{$\pm$11.96}
& \underline{21.21}\textsubscript{$\pm$2.38}
& \underline{27.07}\textsubscript{$\pm$5.06}
& 18.19\textsubscript{$\pm$4.27} \\

MDGPT
& 47.24\textsubscript{$\pm$8.52}
& 39.06\textsubscript{$\pm$10.11}
& 50.77\textsubscript{$\pm$10.29}
& 58.91\textsubscript{$\pm$10.05}
& 47.79\textsubscript{$\pm$10.81}
& 21.10\textsubscript{$\pm$2.05}
& 25.73\textsubscript{$\pm$5.26}
& 17.72\textsubscript{$\pm$4.18} \\

SAMGPT
& \underline{52.37}\textsubscript{$\pm$9.16}
& 41.15\textsubscript{$\pm$8.88}
& 50.42\textsubscript{$\pm$10.59}
& 62.06\textsubscript{$\pm$8.79}
& 47.53\textsubscript{$\pm$9.93}
& 20.84\textsubscript{$\pm$2.23}
& 26.67\textsubscript{$\pm$4.70}
& 20.54\textsubscript{$\pm$3.79} \\

MDGFM
& 50.87\textsubscript{$\pm$9.39}
& \underline{46.51}\textsubscript{$\pm$9.61}
& \underline{53.44}\textsubscript{$\pm$11.63}
& \underline{64.53}\textsubscript{$\pm$9.12}
& \underline{50.56}\textsubscript{$\pm$11.55}
& 21.06\textsubscript{$\pm$2.05}
& 26.79\textsubscript{$\pm$4.65}
& 18.72\textsubscript{$\pm$4.16} \\

BRIDGE
& 50.54\textsubscript{$\pm$8.25}
& 40.30\textsubscript{$\pm$8.21}
& 49.03\textsubscript{$\pm$9.50}
& 61.10\textsubscript{$\pm$8.89}
& 46.53\textsubscript{$\pm$10.49}
& 20.83\textsubscript{$\pm$2.14}
& 27.02\textsubscript{$\pm$4.79}
& 17.90\textsubscript{$\pm$3.12} \\
\midrule
\textbf{MDGMIX}
& \textbf{52.81}\textsubscript{$\pm$8.45}
& \textbf{46.83}\textsubscript{$\pm$9.54}
& \textbf{56.42}\textsubscript{$\pm$12.78}
& \textbf{65.42}\textsubscript{$\pm$8.22}
& \textbf{52.23}\textsubscript{$\pm$11.51}
& \textbf{21.27}\textsubscript{$\pm$2.12}
& \textbf{27.46}\textsubscript{$\pm$5.47}
& \textbf{24.32}\textsubscript{$\pm$3.78} \\

\bottomrule
\end{tabular}
\label{graph classification}
\end{table*}

\subsection{Experimental Setups}
\para{Datasets.} We conduct experiments on eight benchmark datasets spanning diverse domains, including academic networks, e-commerce graphs, and web page networks. Specifically, (1) Academic Networks: Citation networks Cora~\cite{citetation}, Citeseer~\cite{citetation}, and Pubmed~\cite{pubmed}, OGBN-Arxiv~\cite{OGBN-Arxiv}.(2) E-commerce Networks: Product co-purchase graphs Photo~\cite{photo} and Computers~\cite{computer}. (3) Web Page Networks: Hyperlink networks Chameleon and Squirrel~\cite{squchame}. We also introduce a large-scale graph dataset: the social network Reddit~\cite{reddit} in Appendix ~\ref{reddit graph}.


\para{Baselines.} (1) Supervised methods: GCN~\cite{gcn} and GAT~\cite{gat}.
(2) Graph pre-training methods: GraphCL~\cite{graphcl} and SimGRACE~\cite{simgrace}. 
(3) Graph prompting methods: GPF+~\cite{gpf}, GraphPrompt~\cite{graphprompt}, and EdgePrompt+~\cite{edgeprompt}. 
(4) Multi-domain graph pre-training methods: GCOPE~\cite{gcope}, MDGPT~\cite{mdgpt}, SAMGPT~\cite{samgpt}, BRIDGE~\cite{bridge} and MDGFM~\cite{mdgfm}.

\para{Setup of pre-training and downstream tasks.} 

We evaluate cross-domain generalization to unseen target domains. Each dataset among Cora, Citeseer, Pubmed, Photo, Computers, Chameleon, and Squirrel is treated as one domain; one is held out as the target domain while the remaining six are used for pre-training. For OGBN-Arxiv and Reddit, all seven datasets are used as source domains. Node features are aligned via PCA and projected to a 50-dimensional space, and a two-layer GCN is used as the encoder. In each training iteration, we construct 10 inter-domain and 10 intra-domain one-hop mixed subgraphs (hop=1), yielding 20 subgraphs in total.

For downstream evaluation, we conduct k-shot node and graph classification. We sample k labeled instances per class for training and use the rest for testing~\cite{fewshot}. For graph classification, node-centered subgraphs are extracted and inherit the label of the center node. Each k-shot setting is repeated 100 times with different label samplings. We report mean accuracy and standard deviation over these runs. For more detailed experiments and parameter settings, please refer to the Appendix~\ref{hypara_setting}. For the code implementation, please refer to the supplementary materials.

\begin{table*}[h]
\centering
\caption{Ablation study of MDGMIX on node and graph classification tasks.}
\label{tab:ablation}
\setlength{\tabcolsep}{4pt}
\renewcommand{\arraystretch}{1.15}
\begin{tabular}{l|ccc|ccc|ccc}
\toprule
\multirow{2}{*}{Methods}
& Boundary & Domain & Domain
& \multicolumn{3}{c|}{Node Classification}
& \multicolumn{3}{c}{Graph Classification} \\
& Mixup & Cls. & Decomp.
& Cora & Citeseer & Photo
& Cora & Citeseer & Photo \\
\midrule
{Variant 1}
& $\times$ & $\checkmark$ & $\checkmark$
& 45.01\textsubscript{$\pm$8.13} & 43.25\textsubscript{$\pm$10.09} & 65.76\textsubscript{$\pm$8.11}
& 50.98\textsubscript{$\pm$8.53} & 43.66\textsubscript{$\pm$9.81} & 63.50\textsubscript{$\pm$8.56} \\

{Variant 2}
& $\checkmark$ & $\times$ & $\checkmark$
& 44.22\textsubscript{$\pm$8.28} & 43.13\textsubscript{$\pm$9.29} & 64.62\textsubscript{$\pm$7.88}
& 51.40\textsubscript{$\pm$9.17} & 45.87\textsubscript{$\pm$8.75} & 64.90\textsubscript{$\pm$8.45} \\

{Variant 3}
& $\checkmark$ & $\checkmark$ & $\times$
& 43.15\textsubscript{$\pm$7.91} & 42.15\textsubscript{$\pm$9.22} & 65.65\textsubscript{$\pm$7.46}
& 51.08\textsubscript{$\pm$8.67} & 44.87\textsubscript{$\pm$8.92} & 65.02\textsubscript{$\pm$8.00} \\

{Variant 4}
& $\times$ & $\times$ & $\checkmark$
& 45.28\textsubscript{$\pm$8.48} & 42.57\textsubscript{$\pm$9.65} & 65.59\textsubscript{$\pm$8.52}
& 51.21\textsubscript{$\pm$8.23} & 44.48\textsubscript{$\pm$9.63} & 63.84\textsubscript{$\pm$8.73} \\

{Variant 5}
& $\times$ & $\checkmark$ & $\times$
& 41.38\textsubscript{$\pm$8.02} & 41.70\textsubscript{$\pm$9.68} & 62.79\textsubscript{$\pm$8.48}
& 49.79\textsubscript{$\pm$8.60} & 44.57\textsubscript{$\pm$9.50} & 63.57\textsubscript{$\pm$8.71} \\

\midrule
{\textbf{MDGMIX}}
& $\checkmark$ & $\checkmark$ & $\checkmark$
& \textbf{46.83}\textsubscript{$\pm$7.77}
& \textbf{44.56}\textsubscript{$\pm$10.69}
& \textbf{68.40}\textsubscript{$\pm$8.34}
& \textbf{52.81}\textsubscript{$\pm$8.45}
& \textbf{46.83}\textsubscript{$\pm$9.54}
& \textbf{65.40}\textsubscript{$\pm$8.30} \\
\bottomrule
\end{tabular}
\end{table*}

\subsection{RQ1: Performance on one-shot classification}

We compare MDGMIX with all baseline models on one-shot node and graph classification tasks. As shown in Table~\ref{classification}, MDGMIX consistently outperforms all baselines under the one-shot setting. 
Methods based on pre-training and fine-tuning or graph prompting generally perform poorly in multi-domain scenarios, as they adapt only to the target domain and fail to leverage source-domain knowledge effectively. Compared to existing graph foundation model-based approaches, MDGMIX achieves consistent gains across all datasets. 
Notably, MDGMIX yields clear improvements on challenging heterophilous graphs (Squirrel and Chameleon) as well as the large-scale OGBN-Arxiv, demonstrating robustness across diverse graph structures and scales. Unlike prior multi-domain graph methods that rely on learnable domain tokens, MDGMIX transfers only the pre-trained GNN and adapts it to the target domain via learnable weighting coefficients, resulting in improved memory and time efficiency during both pre-training and adaptation.

\subsection{RQ2: Ablation Study}

To evaluate the contribution of each component, we perform ablation studies during pre-training. 
Specifically, we construct MDGMIX variants by removing Boundary Mixup, Domain Classification, or Domain Decomposition, with Boundary Mixup replaced by random subgraph mixup when ablated. 
As shown in Table~\ref{tab:ablation}, removing any single component consistently degrades performance, confirming the necessity of all three modules. 
This also shows that the gains of MDGMIX do not simply come from subgraph mixup itself, but from selecting domain-ambiguous boundary regions for constructing more informative mixed samples. 
Moreover, the three components are complementary: Boundary Mixup provides challenging cross-domain samples, Domain Classification captures coarse inter-/intra-domain differences, and Domain Decomposition further separates shared and domain-specific signals. 
When both Boundary Mixup and Domain Classification are removed (Variant 4), performance drops but remains superior to existing baselines, indicating that Domain Decomposition alone can still enable effective cross-domain separation. 
In contrast, removing both Boundary Mixup and Domain Decomposition (Variant 5) results in substantial performance degradation across all datasets, highlighting their critical role in disentangling transferable and domain-specific representations. 
Additional ablation results for the fine-tuning are reported in Appendix~\ref{Adaptation Ablation}.

\subsection{RQ3: Efficiency Evaluation}
To validate the efficiency of our method, we compare MDGMIX with existing approaches in terms of pretraining time and memory consumption, as well as the number of trainable parameters during the adaptation stage. As shown in Figure \ref{fig:pre}, MDGMIX consistently lies in the lower-left region of the memory–time space, achieving the best efficiency–capacity trade-off with the shortest pretraining time and memory footprint.
Moreover, Figure \ref{fig:adap} shows that MDGMIX introduces only a negligible number of fine-tuning parameters, whose scale is proportional to the number of source domains, indicating that efficient knowledge reuse can be achieved with minimal adaptation cost. In contrast, existing methods typically rely on substantially increased computational budgets or extensive parameter updates. This efficiency mainly comes from the fact that MDGMIX pre-trains on a small set of boundary-centered mixed subgraphs rather than all source-domain graphs. Therefore, its computational cost depends more on the sampled subgraph budget than on the total size of the source domains. Such a design makes MDGMIX more suitable for scalable multi-domain graph pre-training when the number or size of source graphs increases. The time complexity analysis is in Appendix ~\ref{time_com}.

\begin{figure}[h]
    \centering
    \begin{subfigure}{0.48\columnwidth}
        \centering
        \includegraphics[width=\linewidth]{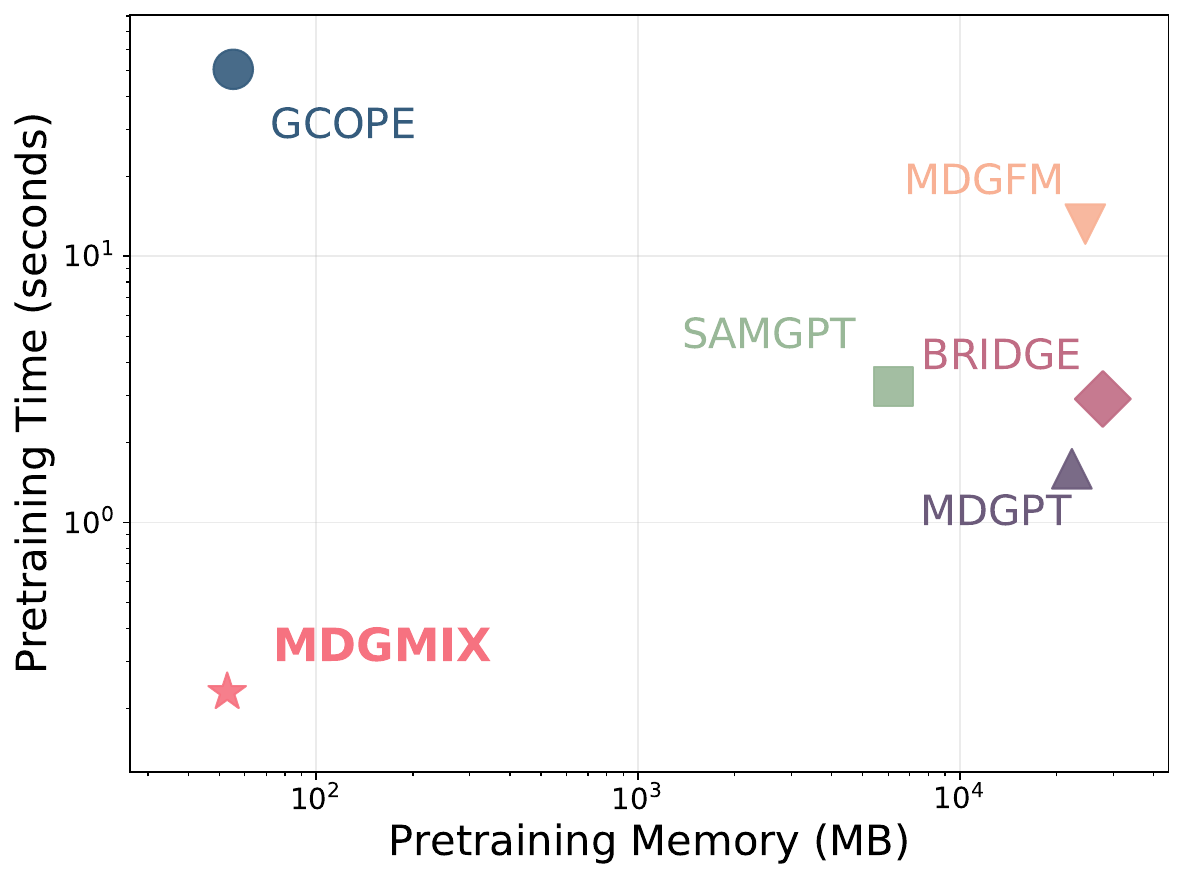}
        \caption{Pre-Training}
        \label{fig:pre}
    \end{subfigure}
    \hfill
    \begin{subfigure}{0.48\columnwidth}
        \centering
        \includegraphics[width=\linewidth]{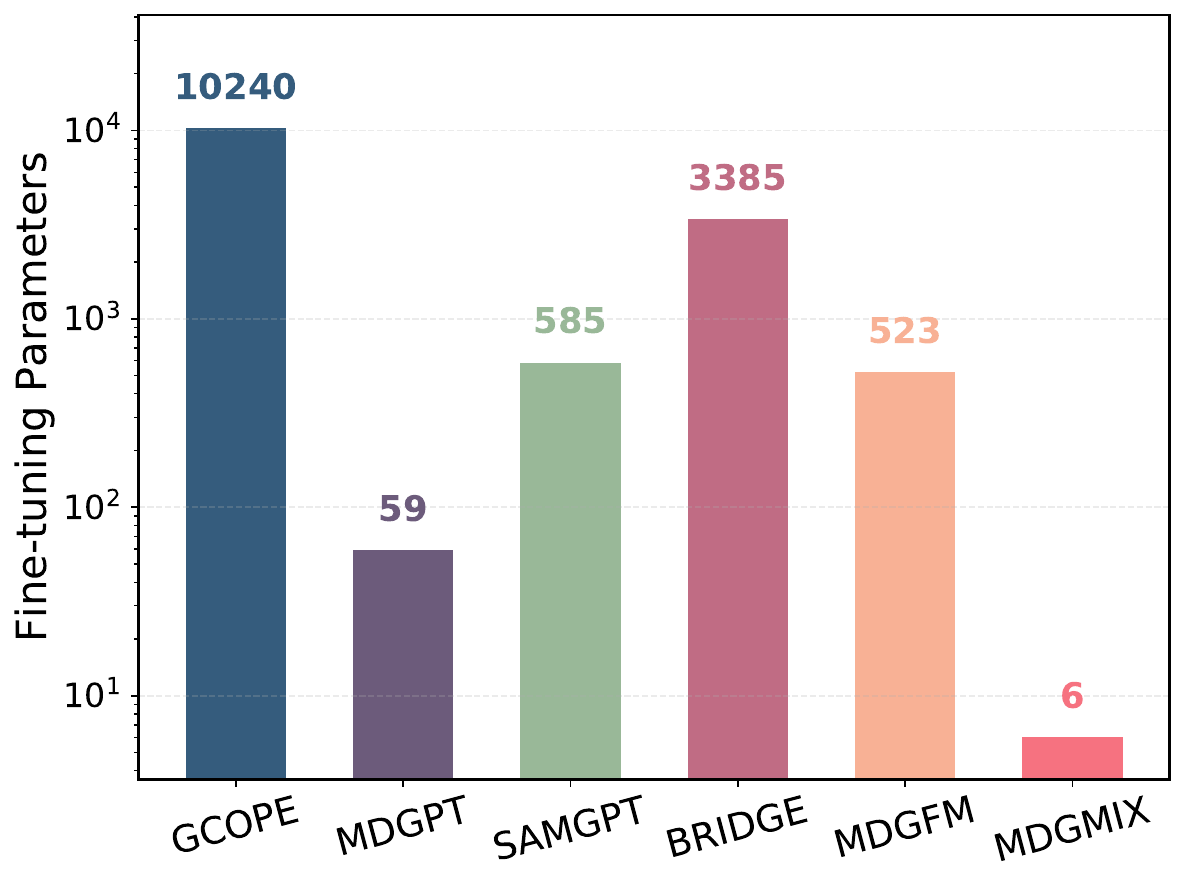}
        \caption{Fine-Tuning}
        \label{fig:adap}
    \end{subfigure}
    \caption{Analysis of pre-training and fine-tuning efficiency}
    \label{fig:time_para}
\end{figure}

\begin{table*}[h]
\centering
\caption{Performance of multi-domain mixing with different numbers of mixed domains.}
\label{tab:multi_domain_mix_app}
\resizebox{\textwidth}{!}{
\begin{tabular}{lccccccc}
\toprule
Mix Num. & Cora & Citeseer & Pubmed & Photo & Computers & Squirrel & Chameleon \\
\midrule
3 
& $46.20{\pm}7.90$ 
& $43.40{\pm}9.62$ 
& $52.76{\pm}9.43$ 
& $64.67{\pm}8.29$ 
& $52.68{\pm}10.91$ 
& $22.80{\pm}3.65$ 
& $27.17{\pm}4.84$ \\
4 
& $45.41{\pm}7.57$ 
& $43.52{\pm}9.68$ 
& $55.79{\pm}9.42$ 
& $64.29{\pm}8.41$ 
& $51.07{\pm}10.41$ 
& $22.59{\pm}3.82$ 
& $26.32{\pm}4.54$ \\
5 
& $45.06{\pm}7.58$ 
& $43.24{\pm}9.90$ 
& $52.34{\pm}8.73$ 
& $66.67{\pm}7.91$ 
& $50.84{\pm}11.36$ 
& $22.86{\pm}3.76$ 
& $26.97{\pm}4.48$ \\
6 
& $45.47{\pm}8.11$ 
& $41.66{\pm}9.74$ 
& $53.80{\pm}8.50$ 
& $65.77{\pm}7.66$ 
& $49.60{\pm}10.23$ 
& $23.04{\pm}4.52$ 
& $27.44{\pm}4.91$ \\
\bottomrule
\end{tabular}
}
\end{table*}

\begin{table}[t]
\centering
\caption{Effect of different Mixup coefficients. Larger $\alpha$ in $\mathrm{Beta}(\alpha,\alpha)$ makes $\lambda$ more concentrated around $0.5$. The fixed $\lambda=0.5$ setting is stable and avoids tuning $\alpha$.}
\label{tab:beta_mixup_main}
\resizebox{\columnwidth}{!}{
\begin{tabular}{lccc}
\toprule
Mixing Strategy & Cora & Computers & Chameleon \\
\midrule
$\lambda \sim \mathrm{Beta}(0.2,0.2)$ 
& $46.49{\pm}8.04$ 
& $53.81{\pm}12.25$ 
& $27.53{\pm}4.42$ \\
$\lambda \sim \mathrm{Beta}(1,1)$ 
& $46.52{\pm}8.02$ 
& $54.74{\pm}11.70$ 
& $27.59{\pm}4.48$ \\
$\lambda \sim \mathrm{Beta}(10,10)$ 
& $46.57{\pm}8.03$ 
& $54.95{\pm}11.15$ 
& $27.63{\pm}4.47$ \\
Fixed $\lambda=0.5$ 
& $46.55{\pm}8.05$ 
& $54.95{\pm}11.12$ 
& $27.62{\pm}4.46$ \\
\bottomrule
\end{tabular}
}
\end{table}

\subsection{RQ4: Hyperparameters Sensitivity}

\paragraph{Number of multi-domain mixing.}
We also investigate multi-domain mixing by sampling domain weights from a Dirichlet distribution and mixing more than two domains. 
As reported in Table~\ref{tab:multi_domain_mix_app}, increasing the number of mixed domains generally does not improve performance and may even lead to degradation. 
This is likely because the fine-grained domain decomposition task becomes substantially harder as the number of mixed domains increases, since the model needs to recover a higher-dimensional and more entangled domain composition distribution.  
Therefore, MDGMIX adopts pairwise cross-domain mixing, which is consistent with our boundary-pair construction strategy and provides a stable trade-off between sample diversity and decomposition reliability.

\paragraph{Why fixed mixing coefficient $\lambda=0.5$?}
Although standard Mixup often samples $\lambda$ from a Beta distribution, MDGMIX mixes subgraphs centered at boundary nodes that are already highly ambiguous across domains. 
In this case, a symmetric mixing coefficient prevents the mixed sample from collapsing toward one dominant domain and stabilizes the domain decomposition objective. 
Empirically, we compare $\lambda \sim \mathrm{Beta}(\alpha,\alpha)$ with different $\alpha$ values. 
For a symmetric Beta distribution, a small $\alpha$ makes $\lambda$ concentrate near $0$ and $1$, producing samples close to one of the original domains, whereas a larger $\alpha$ makes $\lambda$ increasingly concentrated around $0.5$, leading to more balanced cross-domain interpolation. 
We observe that performance becomes more stable as $\alpha$ increases and the sampled coefficients approach $0.5$. 
Therefore, the fixed $\lambda=0.5$ setting achieves comparable or slightly better performance while avoiding an additional hyperparameter, as shown in Table~\ref{tab:beta_mixup_main}.

\paragraph{Similarity threshold $\gamma$ and the boundary node ratio $\rho$.} We evaluated the sensitivity of MDGMIX to two critical hyperparameters: the cross-domain similarity threshold $\gamma$ and the boundary node ratio $\rho$. Figure ~\ref{fig:hyperpara} illustrates the performance surfaces for similarity threshold $\gamma$ and boundary ratio $\rho$ across three representative datasets. We observe that the accuracy surfaces are smooth and stable across a wide range of hyperparameters, indicating strong robustness. Notably, moderate values of $\gamma$ and $\rho$ consistently achieve near-optimal performance, suggesting that effective domain generalization stems from a balanced modeling of cross-domain similarity and boundary-aware representations. Detailed hyperparameter settings are provided in Appendix ~\ref{hypara_setting}.

\begin{figure}[h]
    \centering
    \begin{subfigure}{0.32\columnwidth}
        \centering
        \includegraphics[width=\linewidth]{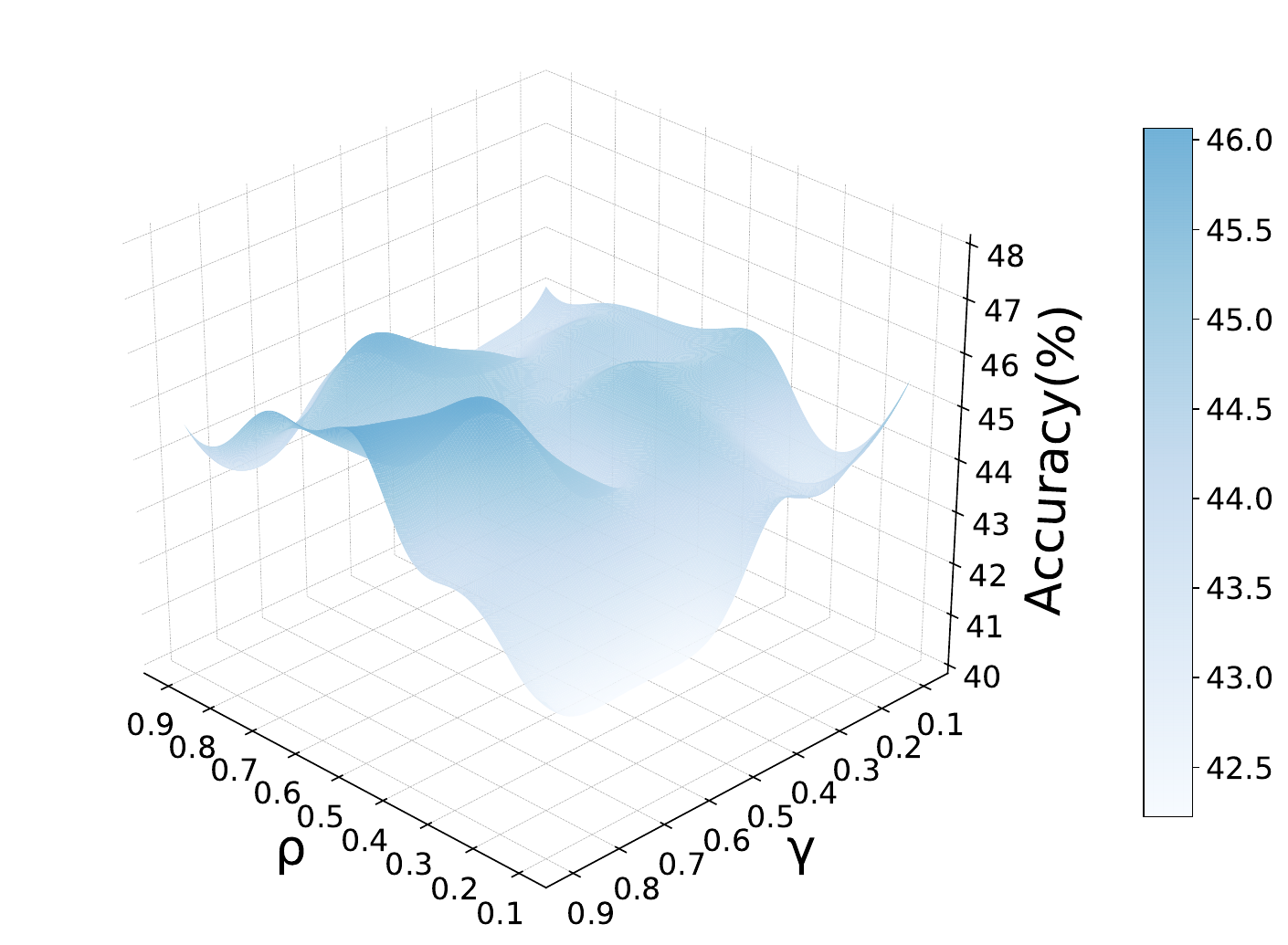}
        \caption{Cora}
    \end{subfigure}
    \hfill
    \begin{subfigure}{0.32\columnwidth}
        \centering
        \includegraphics[width=\linewidth]{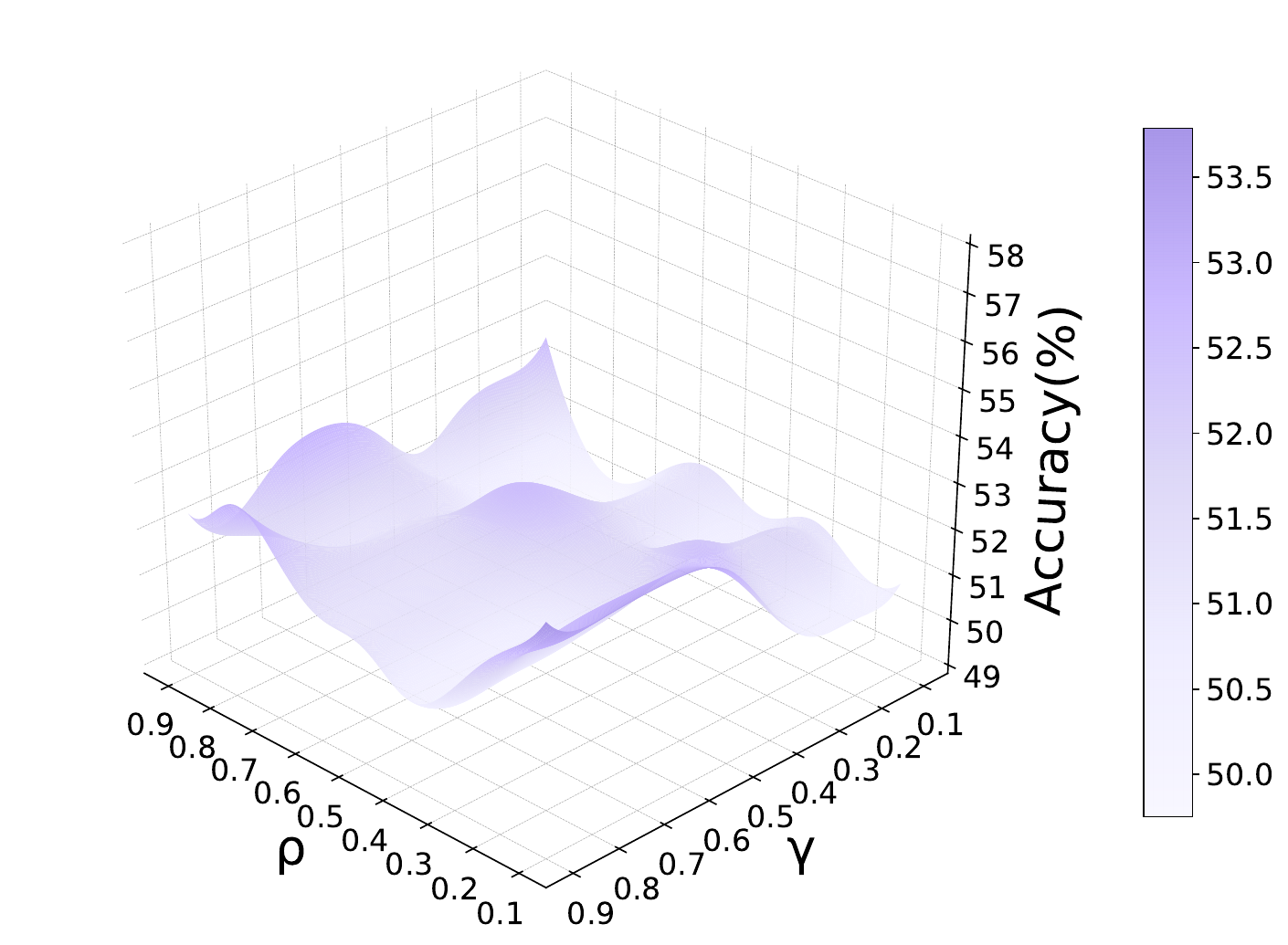}
        \caption{Computer}
    \end{subfigure}
     \hfill
    \begin{subfigure}{0.32\columnwidth}
        \centering
        \includegraphics[width=\linewidth]{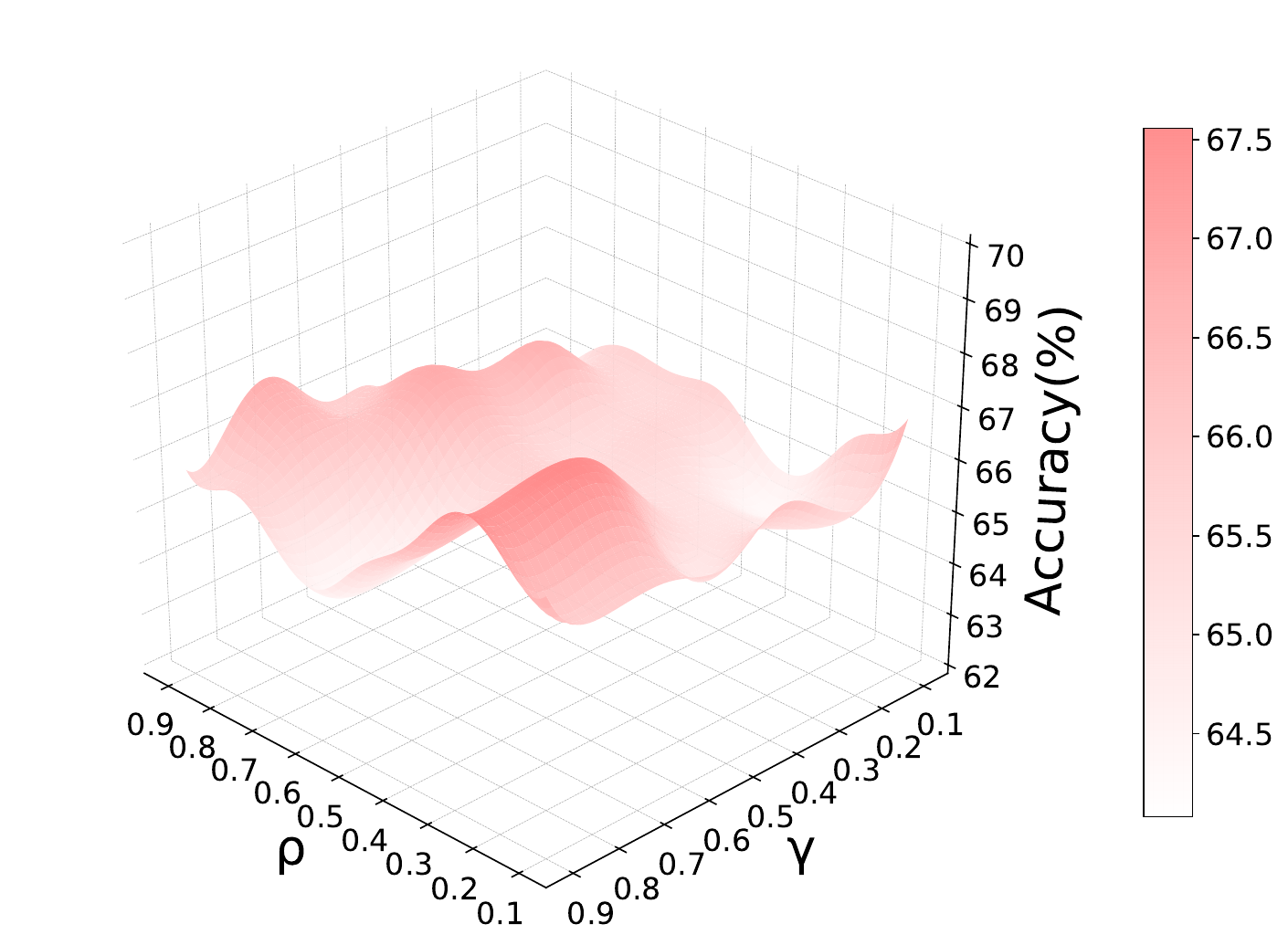}
        \caption{Photo}
    \end{subfigure}
    \caption{Sensitivity of similarity threshold $\gamma$ and boundary rate $\rho$.}
    \label{fig:hyperpara}
\end{figure}

\section{Conclusion}
In this work, we proposed MDGMIX, an efficient multi-domain graph pre-training framework that addresses data redundancy in full-source-domain pre-training. 
By selecting domain-ambiguous boundary nodes and constructing compact mixed subgraphs, MDGMIX learns transferable cross-domain patterns with reduced computational cost. A hierarchical discrimination objective further helps decouple domain-invariant knowledge from domain-specific information, while the lightweight prompt-weighting mechanism enables efficient downstream adaptation with very few trainable parameters. Extensive experiments demonstrate that MDGMIX consistently improves few-shot node and graph classification performance while achieving better time and memory efficiency. 

\section*{Acknowledgements}
This work was supported in part by the National Natural Science Foundation of China under Grants 62425605, 62133012, and 62303366, in part by the Key Research and Development Program of Shaanxi under Grants 2025CY-YBXM-041, 2022ZDLGY01-10, and 2024CY2-GJHX-15, and in part by the Fundamental Research Funds for the Central Universities and the Postgraduate Innovation Fund of Xidian University under Grant YJSJ26014.

\section*{Impact Statement}
This paper advances multi-domain graph representation learning by introducing MDGMIX, a lightweight pre-training framework for improving cross-domain generalization. 
By enabling efficient knowledge transfer across heterogeneous graph domains, MDGMIX supports scalable graph learning in applications such as social networks, recommendation systems, and biological networks. 
This work focuses on methodological advances in machine learning and does not raise specific ethical or societal concerns beyond those commonly associated with graph-based models.

\bibliography{example_paper}
\bibliographystyle{icml2026}

\newpage
\appendix
\onecolumn
\section{Experinment Details }
\subsection{Dataset Statistics and Description}
\begin{table}[htbp]
\centering
\caption{Statistic of the multi-domain graph dataset.}
\label{tab:multi-domain-graph-statistics}
\begin{tabular}{llccccc}
\toprule
\textbf{Dataset} & \textbf{Domain} & \textbf{ Nodes} & \textbf{ Edges} & \textbf{ Feature Dimensions} & \textbf{ Classes} & \textbf{Avg. Deg.} \\
\midrule
Cora       & Academic     & 2,708   & 10,556   & 1,433 & 7   & 3.90   \\
CiteSeer   & Academic     & 3,327   & 9,104    & 3,703 & 6   & 2.77   \\
PubMed     & Academic     & 19,717  & 88,648   & 500   & 3   & 4.50   \\
OGBN-Arxiv & Academic     & 169,343 & 1,166,243 & 128   & 40  & 6.89   \\
\midrule
Photo      & E-Commerce   & 7,650   & 238,162  & 745   & 8   & 31.13  \\
Computers  & E-Commerce   & 13,752  & 491,722  & 767   & 10  & 35.76  \\
\midrule
Chameleon  & Wikipedia    & 2,277   & 36,101   & 2,325 & 5   & 31.71  \\
Squirrel   & Wikipedia    & 5,201   & 217,073  & 2,325 & 5   & 83.47  \\
\midrule
Reddit     & Social Network & 232,965 & 114,615,892 & 602 & 41  & 492.00 \\
\bottomrule
\end{tabular}
\end{table}

We evaluate MDGMIX on a diverse collection of graph benchmarks covering citation networks, e-commerce graphs, web graphs, and large-scale social networks. 
These datasets span different domains, graph sizes, feature dimensions, and structural properties.
Detailed statistics of all datasets are summarized in Table~\ref{tab:multi-domain-graph-statistics}.

\paragraph{Citation Networks.}
Cora, CiteSeer, and PubMed~\cite{citetation,pubmed} are widely used citation network benchmarks. 
In these datasets, nodes represent scientific publications and edges denote citation relationships. 
Each node is associated with a bag-of-words or TF-IDF feature vector derived from the paper content, and the task is to predict the research topic of each paper. 
Specifically, Cora and CiteSeer contain publications from various computer science fields, while PubMed focuses on biomedical articles related to diabetes research. OGBN-Arxiv~\cite{OGBN-Arxiv} is a large-scale citation graph from the Open Graph Benchmark (OGB). 
Nodes correspond to computer science papers indexed in arXiv, and edges represent citation links. 
Each node is described by a 128-dimensional feature vector extracted from paper titles and abstracts, and node labels indicate subject areas. 
Compared to classical citation datasets, OGBN-Arxiv is significantly larger and poses additional challenges in scalability and generalization.

\paragraph{E-commerce Networks.}
Photo and Computers~\cite{computer,photo} are Amazon co-purchase networks. 
In these graphs, nodes represent products and edges indicate that two products are frequently purchased together. 
Node features describe product metadata, and labels correspond to high-level product categories. 
These datasets capture user-driven relational patterns and exhibit relatively dense local connectivity.

\paragraph{Wikipedia Networks.}
Chameleon and Squirrel~\cite{squchame} are page-to-page networks derived from Wikipedia. 
Nodes represent web pages and edges correspond to hyperlinks between pages. 
Node features are constructed from textual content, typically using bag-of-words representations of nouns appearing on the pages. 
Labels are defined based on page categories or traffic patterns. 
These datasets are known to exhibit strong heterophily, where connected nodes often belong to different classes, making them particularly challenging for graph learning methods.

\paragraph{Social Network.}
Reddit~\cite{reddit} is a large-scale social network dataset. 
Nodes represent posts, and edges connect posts authored by the same user. 
Each node is associated with feature vectors derived from textual content, and labels correspond to discussion topics or communities. 
This dataset is characterized by its large size, high average node degree, and substantial class diversity.

\subsection{Experiment Setting}
\label{hypara_setting}
During pre-training, we adopt the Adam optimizer with both the learning rate and weight decay set to 0.0001. The hidden dimension of the GCN encoder is fixed to 256 for all methods. During downstream adaptation, the learning rate is set to 
0.001. For all baseline methods, we use the optimal hyperparameters reported in their original papers. All experiments are conducted on a single machine equipped with an NVIDIA 5090 GPU with 32GB memory, using PyTorch 2.8.0, CUDA 12.8, and PyG 2.6.1. Detailed hyperparameter settings for different datasets are summarized in Table ~\ref{tab:hyperparams}.

\begin{table}[h]
\centering
\caption{Hyperparameter settings for different datasets.}
\label{tab:hyperparams}
\setlength{\tabcolsep}{3pt}
\renewcommand{\arraystretch}{1.15}
\begin{tabular}{lcccccc}
\toprule
Dataset & Pre-learning rate & Down-learning rate & Subgraph hop & Sample size & Boundary ratio $\rho$ & Similarity threshold $\gamma$ \\
\midrule
Cora        & 0.0001 & 0.001 & 1 & 10 & 0.1 & 0.7 \\
Citeseer    & 0.0001 & 0.001 & 1 & 10 & 0.1 & 0.4 \\
Pubmed      & 0.0001 & 0.001 & 1 & 10 & 0.3 & 0.5 \\
Computers   & 0.0001 & 0.001 & 1 & 10 & 0.2 & 0.6 \\
Photo       & 0.0001 & 0.001 & 1 & 10 & 0.3 & 0.5 \\
Chameleon   & 0.0001 & 0.001 & 1 & 10 & 0.3 & 0.5 \\
Squirrel    & 0.0001 & 0.001 & 1 & 10 & 0.4 & 0.8 \\
OGBN-Arxiv  & 0.0001 & 0.001 & 1 & 10 & 0.4 & 0.3 \\
Reddit      & 0.0001 & 0.001 & 1 & 10 & 0.3 & 0.6 \\
\bottomrule
\end{tabular}
\end{table}

\subsection{Time Complexity Analysis}
\label{time_com}
Let $K$ denote the number of source domains and $d$ the unified feature dimension. For boundary node identification, we compute the distances between nodes and all domain centers, resulting in a complexity of $\mathcal{O}\!\left(\sum_k |V^{(k)}| K d\right)$, with an additional $\mathcal{O}\!\left(\sum_k |V^{(k)}| K \log |V^{(k)}|\right)$ cost for top-$\rho$ boundary node selection. Since $K$ is typically small, this stage scales linearly with the total number of nodes. MDGMIX constructs only $N$ mixed subgraphs from a small set of boundary nodes. Extracting $h$-hop subgraphs incurs a cost of $\mathcal{O}\!\left(N (|V_h| + |E_h|)\right)$, where $|V_h|$ and $|E_h|$ denote the average numbers of nodes and edges in the sampled subgraphs. The above operations are performed prior to pre-training, so they only need to be executed once. 

Pretraining on $N_{intra}$ and $N_{inter}$ mixed subgraphs is dominated by GNN encoding. During cross-domain adaptation, the pretrained GNN is frozen, and only $K$ prompt weight parameters are optimized. Overall, MDGMIX avoids full-source-domain joint pretraining and scales with the number of sampled mixed subgraphs rather than the total size of all source-domain graphs, leading to substantially lower training cost compared with existing multi-domain graph foundation models.

\section{Additional Related Work}
\label{more works}
\subsection{Multi-Domain Graph Foundation Models}
Multi-domain graph foundation models typically adopt a unified pre-training framework that learns general structural and semantic knowledge from multiple source-domain graphs, and subsequently transfer the model to unseen target domains to evaluate its generalization ability. This setting aims to integrate knowledge from diverse domains to support a wide range of downstream tasks across distribution shifts. 

Existing methods generally follow a two-stage paradigm of multi-domain pre-training and cross-domain adaptation: the pre-training stage focuses on extracting cross-domain shared knowledge. In contrast, the adaptation stage aligns or adjusts the learned source-domain knowledge to the target domain. For example, GCOPE~\cite{gcope} introduces virtual nodes to build bridges across domains, mitigating structural discrepancies among source graphs. MDGPT~\cite{mdgpt} assigns domain tokens to each domain to enhance feature discriminability and cross-domain alignability. SAMGPT~\cite{samgpt} introduces structural tokens to align graph structural information across different domains. MDGFM~\cite{mdgfm} and SA2GFM~\cite{sa2gfm} address the robustness of multi-domain graph learning. MDGFM enhances generalization to target domains by learning universal graph structures, while SA2GFM introduces an information bottleneck that leverages encoded tree text prompts and self-supervised information.  BRIDGE~\cite{bridge} proposes Alignment Risk Regularization to strengthen the transferability of pre-trained models and adopts an MoE-based expert mechanism during adaptation to capture domain-specialized knowledge. GRAVER~\cite{graver} constructs a graph lexicon by extracting transferable subgraph patterns and achieves robust and efficient prompt fine-tuning through a generation-augmented lightweight routing mechanism. RAG-GFM~\cite{RAG-GFM} proposes a Retrieval-Augmented Generation framework that externalizes knowledge by constructing a dual-modal retrieval repository integrating semantic and structural information. By leveraging context augmentation, it achieves efficient downstream adaptation.

Despite the architectural innovations and sophisticated adaptation strategies proposed by these methods, they utilize the complete set of source-domain graph data during pre-training. This paradigm largely overlooks the substantial redundancy within multi-domain datasets, resulting in unnecessarily high pre-training costs and necessitating additional complex parameter modules in the adaptation stage to compensate for cross-domain discrepancies.

\subsection{Graph Mixup}
Existing research has extended Mixup techniques~\cite{mixup} to the graph domain and explored various mixing strategies. GraphMixup~\cite{graphmixup} introduces reinforcement mechanisms to dynamically adjust mixing ratios, mitigating class imbalance; GraphMix~\cite{graphmix} blends node latent representations before feeding them into a fully connected layer; NodeMixup~\cite{nodemixup} designs inter-class and intra-class mixing separately for labeled and unlabeled nodes to alleviate underfitting; iGraphMix~\cite{igraphmix} focuses on blending labeled node features and connections within the input graph; AGMixup~\cite{agmixup} proposes a subgraph-based adaptive mixing strategy. IntraMix~\cite{intramix} generates high-quality labels at low cost by blending and enhancing inaccurately labeled data within the same category, and enriches graph structures through high-confidence neighbor selection. 

The aforementioned graph mixup technique primarily focuses on supervised scenarios involving single-domain graphs. We utilize domain categories as mixup labels and employ domain boundary-aware techniques to select domain-similar nodes for mixing.

\section{Theoretical Analysis and Proofs of MDGMIX}
\label{app:theory}

This section provides the complete theoretical analysis of MDGMIX, including all assumptions and detailed proofs. Our analysis is conducted entirely at the subgraph distribution level, which aligns with MDGMIX's boundary-aware subgraph mixing mechanism.

\subsection{Problem Setup and Notation}

Let $\{P_k\}_{k=1}^K$ denote $K$ source-domain distributions over subgraphs $G \in \mathcal{G}_{\text{sub}}$, where each subgraph is defined as a fixed-radius ego-network extracted from a source graph. Each subgraph $G$ is associated with a label $Y \in \mathcal{Y}$. The target-domain distribution $P_t$ is not observed during training.

Let $f_\theta: \mathcal{G}_{\text{sub}} \rightarrow \mathcal{Z} \subset \mathbb{R}^d$ be a graph neural network (GNN) encoder, and let $g: \mathcal{Z} \rightarrow \mathcal{Y}$ be a task head. We define the hypothesis class $\mathcal{H} = \{h = g \circ f_\theta\}$.

For any distribution $P$ over subgraphs, the expected risk of $h \in \mathcal{H}$ is defined as:
\[
\epsilon_P(h) = \mathbb{E}_{(G,Y)\sim P}[\ell(h(G), Y)]
\]
where $\ell: \mathcal{Y} \times \mathcal{Y} \rightarrow [0,1]$ is a bounded loss function.

\subsection{Core Assumptions}

We now formally state all assumptions required for our theoretical analysis.

\begin{assumption}[Lipschitz Encoder]
\label{assump:lipschitz_encoder_app}
The encoder $f_\theta$ is $L_f$-Lipschitz with respect to the subgraph edit distance $d_{\text{edit}}$, i.e.,
\[
\|f_\theta(G_1) - f_\theta(G_2)\|_2 \leq L_f \cdot d_{\text{edit}}(G_1, G_2), \quad \forall G_1, G_2 \in \mathcal{G}_{\text{sub}}.
\]
\end{assumption}


Lipschitz Constant Estimation for GNNs: For an $L$-layer GCN with weight matrices $\{W^{(l)}\}_{l=1}^L$ and normalized adjacency matrix $\tilde{A}$, we have:
\[
L_f \leq \prod_{l=1}^L \sigma_{\max}(W^{(l)}) \cdot \|\tilde{A}\|_2^L
\]
where $\sigma_{\max}(W^{(l)})$ denotes the largest singular value of $W^{(l)}$. Spectral normalization \cite{spectral} can be applied during training to enforce $L_f \leq 1$. Similar bounds hold for GIN and GraphSAGE encoders with appropriate normalization of the aggregation function.

\begin{assumption}[Bounded and Lipschitz Loss]
\label{assump:lipschitz_loss_app}
The loss function $\ell(\cdot, \cdot)$ satisfies:
\begin{enumerate}
    \item Boundedness: $\ell \in [0, 1]$
    \item Lipschitz continuity: $|\ell(z_1, y) - \ell(z_2, y)| \leq L_\ell \|z_1 - z_2\|_2, \quad \forall z_1, z_2 \in \mathcal{Z}, y \in \mathcal{Y}$
\end{enumerate}
\end{assumption}

\begin{assumption}[Structural Semantic Consistency]
\label{assump:structural_consistency}
There exists a structural semantic space $\mathcal{S}$ and a mapping $\psi: \mathcal{Z} \to \mathcal{S}$ such that the push-forward distributions satisfy:
\[
W_1(P_t^{\mathcal{S}}, R^{*\mathcal{S}}) \leq \epsilon_{\mathrm{struct}}
\]
where $P^{\mathcal{S}}$ denotes the distribution induced by $\psi$, and $\epsilon_{\mathrm{struct}} \geq 0$ quantifies the structural distribution shift. This assumption relaxes label-space consistency and enables generalization to target domains with unseen or shifted label spaces.
\end{assumption}

\begin{assumption}[Non-vanishing Boundary Mass]
\label{assump:boundary_mass}
There exists $\rho_{\min} > 0$ such that for all domains $k$:
\[
P_k(B_k) \geq \rho_{\min}
\]
where $B_k$ denotes the boundary node set of domain $k$. This ensures sufficient boundary samples exist for effective mixing.
\end{assumption}

This assumption holds for most real-world graphs where nodes sufficiently far apart have negligible influence on each other's features and connections.

\subsection{Boundary-Aware Subgraph Mixing Operator}

We now provide a mathematically rigorous definition of the boundary-aware subgraph mixing operator used in MDGMIX. Crucially, we strictly separate discrete topological operations from continuous feature interpolation.

\begin{definition}[Boundary-Aware Subgraph Mixing Operator]
\label{def:mixing_operator}
Let $G_1 = (V_1, E_1, X_1)$ and $G_2 = (V_2, E_2, X_2)$ be two $h$-hop ego-subgraphs centered at boundary nodes from different domains. For $\lambda \in [0,1]$, the mixed subgraph is defined as:
\[
\mathcal{M}_\lambda(G_1, G_2)
=
\big(
\underbrace{V_1 \cup V_2}_{\text{topology (discrete, $\lambda$-invariant)}},\ 
\underbrace{E_1 \cup E_2}_{\text{topology (discrete, $\lambda$-invariant)}},\ 
\underbrace{X_\lambda}_{\text{features (continuous, $\lambda$-dependent)}}
\big),
\]
where the mixed feature matrix $X_\lambda \in \mathbb{R}^{|V_1 \cup V_2| \times d}$ is defined node-wise as:
\[
X_\lambda(v) =
\begin{cases}
\lambda X_1(v) + (1-\lambda) X_2(v),
& \text{if } v \in V_1 \cap V_2 \quad \text{(overlap nodes)},\\
X_1(v),
& \text{if } v \in V_1 \setminus V_2 \quad \text{(boundary nodes from $G_1$)},\\
X_2(v),
& \text{if } v \in V_2 \setminus V_1 \quad \text{(boundary nodes from $G_2$)}.
\end{cases}
\]
\end{definition}

\noindent
\textbf{Remark.}
The operator $\mathcal{M}_\lambda$ does not define a linear combination in the space of graphs. Instead, the mixing coefficient $\lambda$ induces a convex interpolation only in the node feature space, while the graph topology is merged discretely via set union. This design strictly adheres to the principle that continuous coefficients should not be improperly applied to discrete topological metrics (e.g., graph edit distance).

\begin{lemma}[Representation Stability under Subgraph Mixing]
\label{lem:rep_stability}
Let $f_\theta: \mathcal{G}_{\mathrm{sub}}^{(h)} \to \mathbb{R}^d$ be an $L_f$-Lipschitz GNN encoder with respect to the subgraph edit distance $d_{\mathrm{edit}}$ (Assumption~\ref{assump:lipschitz_encoder_app}). For any boundary subgraphs $G_1, G_2$ and $\lambda \in [0,1]$:
\[
\|f_\theta(\mathcal{M}_\lambda(G_1, G_2)) - f_\theta(G_1)\|_2
\;\le\;
(1-\lambda)\, L_f \, \|X_2 - X_1\|_{V_1 \cap V_2}
\;+\;
L_f \cdot \bigl(|V_2 \setminus V_1| + |E_2 \setminus E_1|\bigr),
\]
where $\|X_2 - X_1\|_{V_1 \cap V_2} = \bigl(\sum_{v \in V_1 \cap V_2} \|X_2(v) - X_1(v)\|_2^2\bigr)^{1/2}$.
\end{lemma}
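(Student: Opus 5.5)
The plan is to insert an intermediate subgraph between $G_1$ and $\mathcal{M}_\lambda(G_1,G_2)$ and use the triangle inequality. Define $G' = (V_1 \cup V_2,\, E_1 \cup E_2,\, X')$ with $X'(v) = X_1(v)$ for $v \in V_1$ and $X'(v) = X_2(v)$ for $v \in V_2 \setminus V_1$. Then $G'$ has exactly the topology of the mixed subgraph, and its features agree with $G_1$ on every node of $G_1$. This gives
\[
\|f_\theta(\mathcal{M}_\lambda(G_1,G_2)) - f_\theta(G_1)\|_2 \le \|f_\theta(\mathcal{M}_\lambda(G_1,G_2)) - f_\theta(G')\|_2 + \|f_\theta(G') - f_\theta(G_1)\|_2 .
\]
The first term isolates the feature interpolation on overlap nodes. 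The second term isolates the purely topological enlargement.

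For the second term, $G'$ is obtained from $G_1$ by inserting the $|V_2\setminus V_1|$ new nodes, with their $G_2$ features, and the $|E_2\setminus E_1|$ new edges, and by changing nothing else. Counting each insertion as one unit edit gives $d_{\mathrm{edit}}(G', G_1) \le |V_2\setminus V_1| + |E_2\setminus E_1|$. Assumption~\ref{assump:lipschitz_encoder_app} then yields the $L_f\,(|V_2\setminus V_1| + |E_2\setminus E_1|)$ term directly. One caveat: I will assume the edit distance charges unit cost for a node insertion regardless of its attached feature vector. If it did not, the inserted features would enter the bound and the stated form would not hold.

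For the first term, $\mathcal{M}_\lambda(G_1,G_2)$ and $G'$ share the same node and edge sets. They differ only on $V_1 \cap V_2$, where $X_\lambda(v) - X'(v) = (1-\lambda)(X_2(v) - X_1(v))$. So I need to control $f_\theta$ under a pure feature perturbation on a fixed topology.

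This step is the main obstacle, because Assumption~\ref{assump:lipschitz_encoder_app} is stated only with respect to $d_{\mathrm{edit}}$, a discrete metric. I would resolve it by adopting the attributed edit distance convention: on a common topology, substituting features costs the feature discrepancy, so $d_{\mathrm{edit}}$ restricted to same-topology graphs is bounded by the Frobenius norm of the feature difference. An equivalent justification is the GCN spectral bound quoted after the assumption. There, $f_\theta$ is $\prod_l \sigma_{\max}(W^{(l)})\|\tilde A\|_2^L \le L_f$-Lipschitz in $X$ in Frobenius norm for fixed $\tilde A$, since pooling is nonexpansive for mean-type readouts. Either route gives
\[
\|f_\theta(\mathcal{M}_\lambda) - f_\theta(G')\|_2 \le L_f\,(1-\lambda)\,\|X_2 - X_1\|_{V_1\cap V_2} .
\]
Adding the two bounds completes the proof. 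I will state this feature-Lipschitz interpretation explicitly as the convention under which the lemma holds.
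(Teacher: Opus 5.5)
Your proposal is correct and follows the paper's proof essentially step for step: your $G'$ is the paper's $G_1^{\mathrm{ext}}$, the topology term is bounded by counting the $|V_2\setminus V_1|+|E_2\setminus E_1|$ insertions, the feature term by the $(1-\lambda)$-scaled discrepancy on $V_1\cap V_2$, and the triangle inequality combines them; your explicit attributed-edit-distance convention makes precise what the paper's ``applying Lipschitz continuity again'' silently assumes. One small caution: drop the GCN spectral route as an ``equivalent'' justification, because the paper only states $L_f \le \prod_l \sigma_{\max}(W^{(l)})\,\|\tilde{A}\|_2^L$, which is the reverse of the inequality you need there, so rest the feature step on the edit-distance convention alone.
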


\begin{proof}
We decompose the representation difference into two sequential operations:

\textbf{Step 1: Topology expansion.}
Extend $G_1$ to the supergraph topology $(V_1 \cup V_2, E_1 \cup E_2)$ by adding nodes $V_2 \setminus V_1$ and edges $E_2 \setminus E_1$. Denote the resulting graph as $G_1^{\mathrm{ext}} = (V_1 \cup V_2, E_1 \cup E_2, X_1^{\mathrm{ext}})$, where
\[
X_1^{\mathrm{ext}}(v)=\begin{cases}
X_1(v), & v\in V_1,\\
X_2(v), & v\in V_2\setminus V_1.
\end{cases}
\]

The edit distance between $G_1$ and $G_1^{\mathrm{ext}}$ equals the number of added nodes and edges:
\[
d_{\mathrm{edit}}(G_1, G_1^{\mathrm{ext}}) = |V_2 \setminus V_1| + |E_2 \setminus E_1|.
\]
By Assumption~\ref{assump:lipschitz_encoder_app}:
\[
\|f_\theta(G_1^{\mathrm{ext}}) - f_\theta(G_1)\|_2 \leq L_f \cdot d_{\mathrm{edit}}(G_1, G_1^{\mathrm{ext}})
= L_f \cdot \bigl(|V_2 \setminus V_1| + |E_2 \setminus E_1|\bigr).
\]

\textbf{Step 2: Feature interpolation.}
On the fixed supergraph topology $(V_1 \cup V_2, E_1 \cup E_2)$, the feature difference between $G_1^{\mathrm{ext}}$ and $\mathcal{M}_\lambda(G_1, G_2)$ is non-zero only on $V_1 \cap V_2$:
\[
\|X_\lambda - X_1^{\mathrm{ext}}\|_{V_1 \cap V_2} 
= (1-\lambda) \cdot \|X_2 - X_1\|_{V_1 \cap V_2}.
\]
Applying Lipschitz continuity again:
\[
\|f_\theta(\mathcal{M}_\lambda(G_1, G_2)) - f_\theta(G_1^{\mathrm{ext}})\|_2 
\leq L_f \cdot (1-\lambda) \cdot \|X_2 - X_1\|_{V_1 \cap V_2}.
\]

\textbf{Step 3: Triangle inequality.}
Combining Steps 1--2 via triangle inequality:
\[
\begin{aligned}
\|f_\theta(\mathcal{M}_\lambda(G_1, G_2)) - f_\theta(G_1)\|_2
&\leq \|f_\theta(\mathcal{M}_\lambda) - f_\theta(G_1^{\mathrm{ext}})\|_2 
+ \|f_\theta(G_1^{\mathrm{ext}}) - f_\theta(G_1)\|_2 \\
&\leq (1-\lambda) L_f \|X_2 - X_1\|_{V_1 \cap V_2} 
+ L_f \bigl(|V_2 \setminus V_1| + |E_2 \setminus E_1|\bigr).
\end{aligned}
\]
This completes the proof.
\end{proof}

\noindent
\textbf{Remark.}
Unlike iGraphMix \cite{igraphmix} which uses random edge sampling to achieve ``mixing'' in topology space, MDGMIX employs deterministic set union for topology merging. Consequently, the mixing coefficient $\lambda$ affects only feature interpolation on overlapping nodes (scaled by $1-\lambda$), while boundary node contributions form a $\lambda$-invariant constant term $L_f \cdot (|V_2 \setminus V_1| + |E_2 \setminus E_1|)$. This distinction is critical for theoretical soundness.

\subsection{Reference Distribution and Boundary Approximation}

Following the reference-distribution framework for multi-source domain generalization, we define the ideal (infeasible) reference distribution:
\[
R^* = \sum_{k,m=1}^K \alpha_{km} \text{Mix}_{\lambda=1/2}(P_k, P_m), \quad \sum_{k,m} \alpha_{km} = 1, \alpha_{km} \geq 0
\]
where $\text{Mix}_{\lambda}(P_k, P_m)$ denotes the distribution of mixed subgraphs $\mathcal{M}_\lambda(G_k, G_m)$ with $G_k \sim P_k$ and $G_m \sim P_m$.

MDGMIX constructs a practical training distribution $\tilde{P}$ by mixing subgraphs centered strictly at boundary nodes.

\begin{definition}[Mixed Subgraph Distribution]
\label{def:mixed_distribution}
Let $\mathcal{B}^{(k)}$ denote the boundary node set of domain $k$. The mixed subgraph distribution $\tilde{P}$ is defined as:
\[
\tilde{P} = \sum_{k,m=1}^K \alpha_{km} \, \text{Mix}_{\lambda=1/2}\bigl(P_k^{\mathcal{B}}, P_m^{\mathcal{B}}\bigr),
\]
where $P_k^{\mathcal{B}}$ denotes the subgraph distribution restricted to boundary nodes ($\text{supp}(P_k^{\mathcal{B}}) = \{ G_{v_i} \mid v_i \in \mathcal{B}^{(k)} \}$).
\end{definition}

\begin{lemma}[Boundary Approximation Error]
\label{lem:boundary_approx}
Define 
\[
\Delta^{(k)}_{\mathrm{bd}} = 
\sup_{G \in \mathrm{supp}(P_k^{\mathrm{int}})}
\inf_{G' \in \mathrm{supp}(P_k^{\mathrm{bd}})}
d_{\mathrm{edit}}(G, G'),
\]
where $P^{\mathrm{int}}_k$ and $P^{\mathrm{bd}}_k$ denote the interior and boundary subgraph distributions in domain $k$, respectively.  
Under Assumptions~\ref{assump:lipschitz_encoder_app} and~\ref{assump:boundary_mass}, the approximation error satisfies:
\[
W_1(\tilde{P}, R^*)
\;\le\;
C \cdot (1 - \rho_{\min}) L_f \, \mathbb{E}_k\!\left[\Delta^{(k)}_{\mathrm{bd}}\right],
\]
where $C>0$ is an absolute constant independent of the domains.
\end{lemma}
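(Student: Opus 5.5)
The plan is a coupling argument. Since $\tilde{P}$ and $R^*$ use the same mixture weights $\alpha_{km}$ and the same coefficient $\lambda=1/2$, joint convexity of $W_1$ under common mixture weights reduces the claim to a per-pair bound
\[
W_1\bigl(\mathrm{Mix}_{1/2}(P_k^{\mathcal{B}},P_m^{\mathcal{B}}),\,\mathrm{Mix}_{1/2}(P_k,P_m)\bigr).
\]
Here $W_1$ is understood in the representation space, i.e.\ between push-forwards under $f_\theta$. This is the reading in which the factor $L_f$ appears, and it matches how term (I) enters Theorem~\ref{thm:generalization_bound_main}. For each pair I will build a coupling and bound the expected transport cost under it.

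\textbf{Step 1 (decomposition of each source).} Write $P_k=P_k(B_k)\,P_k^{\mathrm{bd}}+(1-P_k(B_k))\,P_k^{\mathrm{int}}$, with $P_k^{\mathrm{bd}}=P_k^{\mathcal{B}}$. By Assumption~\ref{assump:boundary_mass} the interior weight is at most $1-\rho_{\min}$. Couple a draw $G\sim P_k$ with a draw $G'\sim P_k^{\mathcal{B}}$ as follows:
\begin{itemize}
\item if $G$ is a boundary subgraph, take $G'=G$;
\item if $G$ is interior, take $G'$ to be a (measurably selected, near-)minimizer of $d_{\mathrm{edit}}(G,\cdot)$ over $\mathrm{supp}(P_k^{\mathrm{bd}})$, so that $d_{\mathrm{edit}}(G,G')\le\Delta^{(k)}_{\mathrm{bd}}$.
\end{itemize}
The second marginal of this coupling is not exactly $P_k^{\mathcal{B}}$. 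To fix this, I will either replace the interior part by any coupling between $P_k^{\mathrm{int}}$ and $P_k^{\mathrm{bd}}$ and bound its cost by the sup–inf quantity, or absorb the mismatch into the constant $C$. This gives a coupling $\pi_k$ with $\mathbb{E}_{\pi_k}[d_{\mathrm{edit}}(G,G')]\le(1-\rho_{\min})\Delta^{(k)}_{\mathrm{bd}}$.

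\textbf{Step 2 (propagating through mixing).} Take independent couplings $\pi_k$ and $\pi_m$ and feed both components through $\mathcal{M}_{1/2}$. I need
\[
\|f_\theta(\mathcal{M}_{1/2}(G_k,G_m))-f_\theta(\mathcal{M}_{1/2}(G_k',G_m'))\|_2\le C L_f\bigl(d_{\mathrm{edit}}(G_k,G_k')+d_{\mathrm{edit}}(G_m,G_m')\bigr).
\]
I will argue this in the same style as Lemma~\ref{lem:rep_stability}. First change $G_k$ to $G_k'$ inside the union, then $G_m$ to $G_m'$, and apply the triangle inequality. At each stage the edits are the node and edge insertions or deletions of the component being changed. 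Features on overlap nodes are halved, so feature changes contribute at most the same edit count. Hence the mixed-graph edit distance is at most a constant multiple of the componentwise edit distances, and Assumption~\ref{assump:lipschitz_encoder_app} then gives the factor $L_f$. Taking expectations yields the per-pair bound
\[
C L_f(1-\rho_{\min})\bigl(\Delta^{(k)}_{\mathrm{bd}}+\Delta^{(m)}_{\mathrm{bd}}\bigr).
\]

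\textbf{Step 3 (summing).} Sum over $(k,m)$ with weights $\alpha_{km}$. Interpreting $\mathbb{E}_k$ as the expectation under the marginals of $\alpha$ turns the sum into $C(1-\rho_{\min})L_f\,\mathbb{E}_k[\Delta^{(k)}_{\mathrm{bd}}]$, after enlarging $C$.

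\textbf{Main obstacle.} I expect Step 2 to be the hardest part: showing that the set-union mixing operator is Lipschitz in edit distance jointly in its two arguments. Overlap sets $V_1\cap V_2$ can change when one component is replaced, which toggles nodes between interpolated and raw features. My first attempt is to charge every toggled node to an edit in the corresponding component; this requires that node features be bounded, absorbing the bound into $C$. A secondary issue is the exact-marginal requirement in Step 1, which I would handle with the sup–inf definition of $\Delta^{(k)}_{\mathrm{bd}}$, valid for arbitrary couplings supported on the two supports.
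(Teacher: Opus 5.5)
Your approach is the same as the paper's. You split $P_k=\rho_kP_k^{\mathrm{bd}}+(1-\rho_k)P_k^{\mathrm{int}}$, move the interior mass (at most $1-\rho_{\min}$) to nearby boundary subgraphs at unit cost at most $\Delta^{(k)}_{\mathrm{bd}}$, and let $C$ absorb the mixing weights.

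The genuine gap is in Step~1, and neither of your patches for the marginal mismatch works. The quantity $\Delta^{(k)}_{\mathrm{bd}}$ is a sup--inf, i.e.\ the one-sided Hausdorff distance from $\mathrm{supp}(P_k^{\mathrm{int}})$ to $\mathrm{supp}(P_k^{\mathrm{bd}})$. It bounds $d_{\mathrm{edit}}(G,T_k(G))$ for the (near-)nearest-point map $T_k$. It does not bound $d_{\mathrm{edit}}(G,G')$ for an arbitrary boundary subgraph $G'$, so an arbitrary coupling of $P_k^{\mathrm{int}}$ with $P_k^{\mathrm{bd}}$ is controlled only by the corresponding sup--sup quantity. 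Nor can the mismatch be absorbed into an absolute $C$, because it is not comparable to $\Delta^{(k)}_{\mathrm{bd}}$ at all.

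For a concrete counterexample, take $P_k^{\mathrm{int}}=\delta_A$ and $P_k^{\mathrm{bd}}=\tfrac{1}{2}\delta_{A'}+\tfrac{1}{2}\delta_B$, with $d_{\mathrm{edit}}(A,A')=\varepsilon$ and $d_{\mathrm{edit}}(A,B)=D$. Then $\Delta^{(k)}_{\mathrm{bd}}=\varepsilon$. The $1$-Lipschitz test function $d_{\mathrm{edit}}(\cdot,A)$, together with the obvious coupling, gives $W_1(P_k,P_k^{\mathrm{bd}})=\tfrac{1-\rho_k}{2}(\varepsilon+D)$, because half of the interior mass must travel to $B$. Since $(\varepsilon+D)/(2\varepsilon)$ is unbounded, no $\pi_k$ with your marginals and cost at most $C(1-\rho_{\min})\Delta^{(k)}_{\mathrm{bd}}$ exists. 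The discrepancy also survives into $W_1(\tilde{P},R^*)$ in simple configurations, e.g.\ a partner domain contributing a single fixed subgraph $E$ on disjoint nodes, so that $\mathcal{M}_{1/2}(G,E)$ is just the disjoint union of $G$ and $E$. So the bound with $\Delta^{(k)}_{\mathrm{bd}}$ on the right-hand side does not follow from the stated assumptions.

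The paper's proof does not supply the missing step. It asserts that ``a feasible transport plan moves this mass to boundary-supported components with cost proportional to $L_f\Delta^{(k)}_{\mathrm{bd}}$'' and lets $C$ absorb the ``transport couplings'', which is exactly the move that fails above. There are two ways to repair it:
\begin{enumerate}
\item What your nearest-point coupling actually proves is $W_1(P_k,Q_k)\le(1-\rho_k)\Delta^{(k)}_{\mathrm{bd}}$ for the reweighted boundary law $Q_k=\rho_kP_k^{\mathrm{bd}}+(1-\rho_k)(T_k)_{\#}P_k^{\mathrm{int}}$. Your argument therefore closes if $\tilde{P}$ is built from $Q_k$ (boundary subgraphs sampled in proportion to the interior mass they cover) instead of $P_k^{\mathcal{B}}$.
\item Alternatively, keep $\tilde{P}$ and replace $\Delta^{(k)}_{\mathrm{bd}}$ by $W_1(P_k^{\mathrm{int}},P_k^{\mathrm{bd}})$. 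The coupling that is the identity on the common part $\rho_kP_k^{\mathrm{bd}}$ and optimal on the remainder has exact marginals and gives $W_1(P_k,P_k^{\mathrm{bd}})\le(1-\rho_k)W_1(P_k^{\mathrm{int}},P_k^{\mathrm{bd}})$.
\end{enumerate}

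Your Step~2 is something the paper does not address at all, since it compares component subgraphs and never mixed ones. It is the milder issue. With a label-based edit distance and bounded features your toggling argument works, but the constant then depends on the feature bound, so $C$ is not absolute as claimed. With an isomorphism-invariant edit distance it can fail outright. Two isomorphic components (edit distance $0$) can overlap the partner subgraph differently, and $\mathcal{M}_\lambda$ depends on node identities through $V_1\cap V_2$.
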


\begin{proof}
By Assumption~\ref{assump:boundary_mass}, each source distribution admits the decomposition:
\[
P_k = \rho_k P_k^{\mathrm{bd}} + (1-\rho_k)P_k^{\mathrm{int}},
\quad \text{with } \rho_k \ge \rho_{\min}.
\]

The reference distribution $R^*$ is a convex combination of mixture distributions constructed from $\{P_k\}_{k=1}^K$, and therefore contains mixture components supported on both boundary and interior subgraphs. In contrast, the practical distribution $\tilde{P}$ only retains mixture components supported on boundary subgraphs.

Consider transporting probability mass from $R^*$ to $\tilde{P}$. All mixture components involving interior-supported subgraphs must be mapped to boundary-supported ones. By definition of $\Delta^{(k)}_{\mathrm{bd}}$, for any $G \sim P_k^{\mathrm{int}}$ there exists $G' \sim P_k^{\mathrm{bd}}$ such that
\[
d_{\mathrm{edit}}(G, G') \le \Delta^{(k)}_{\mathrm{bd}}.
\]
Under Assumption~\ref{assump:lipschitz_encoder_app}, the corresponding representation distance is bounded by $L_f \Delta^{(k)}_{\mathrm{bd}}$.

Since the total probability mass associated with interior subgraphs is at most $1-\rho_{\min}$, a feasible transport plan moves this mass to boundary-supported components with cost proportional to $L_f \Delta^{(k)}_{\mathrm{bd}}$. Aggregating across all domains yields
\[
W_1(\tilde{P}, R^*)
\;\le\;
C \cdot (1 - \rho_{\min}) L_f \, \mathbb{E}_k\!\left[\Delta^{(k)}_{\mathrm{bd}}\right],
\]
where $C>0$ is a constant that absorbs domain-pair mixing weights and transport couplings. This completes the proof.
\end{proof}

\begin{lemma}[Lipschitz Risk Functional]
\label{lem:lipschitz_risk}
For any $h \in \mathcal{H}$, the mapping $\phi_h(G) = \ell(h(G), Y)$ is $L_\ell L_f$-Lipschitz with respect to $d_{\text{edit}}$.
\end{lemma}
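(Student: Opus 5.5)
The plan is a direct composition of the two Lipschitz assumptions already in place. Fix $h = g \circ f_\theta \in \mathcal{H}$ and a label $Y \in \mathcal{Y}$. Take two subgraphs $G_1, G_2 \in \mathcal{G}_{\text{sub}}$; the goal is to bound $|\phi_h(G_1) - \phi_h(G_2)|$ by $L_\ell L_f \, d_{\text{edit}}(G_1, G_2)$.

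The Lipschitz condition in Assumption~\ref{assump:lipschitz_loss_app} is stated in terms of points $z_1, z_2 \in \mathcal{Z}$, so the loss should be viewed as acting on the encoder output. Concretely, I will read $\ell(h(G), Y)$ as $\ell(f_\theta(G), Y)$ with the fixed head $g$ absorbed into $\ell$. In other words, the $L_\ell$-Lipschitz property is applied to the map $z \mapsto \ell(g(z), Y)$.

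With this reading, the argument is two steps.

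\textbf{Step 1 (loss).} Assumption~\ref{assump:lipschitz_loss_app} gives
$|\phi_h(G_1) - \phi_h(G_2)| \le L_\ell \|f_\theta(G_1) - f_\theta(G_2)\|_2$.

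\textbf{Step 2 (encoder).} Assumption~\ref{assump:lipschitz_encoder_app} gives
$\|f_\theta(G_1) - f_\theta(G_2)\|_2 \le L_f \, d_{\text{edit}}(G_1, G_2)$.

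Chaining the two inequalities yields the constant $L_\ell L_f$. Since nothing depends on the particular $Y$, the bound holds uniformly in $Y$, which is the statement as it will later be used in Kantorovich--Rubinstein duality.

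The only real obstacle is the type mismatch between $h(G) \in \mathcal{Y}$ and the Lipschitz hypothesis, which is stated on $\mathcal{Z}$. I will handle it explicitly, as above, by absorbing $g$ into the loss. If instead $g$ must be kept separate, I would additionally assume $g$ is $L_g$-Lipschitz, for example a linear head with bounded operator norm. The constant then becomes $L_\ell L_g L_f$, which collapses to $L_\ell L_f$ after normalizing $L_g \le 1$.
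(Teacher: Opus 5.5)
Your proof is correct and is the same composition argument the paper uses. The paper applies the loss bound directly to $h(G_1), h(G_2)$, then invokes an unlisted assumption that the head $g$ is $1$-Lipschitz before using the encoder bound, which is exactly your fallback with $L_g \le 1$; your primary route of absorbing $g$ into $\ell$ instead resolves the same $\mathcal{Y}$-versus-$\mathcal{Z}$ type mismatch by reading the loss assumption as a bound on $z \mapsto \ell(g(z), Y)$ for every head in $\mathcal{H}$, which is likewise an implicit extra hypothesis, and you correctly state it.
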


\begin{proof}
For any $G_1, G_2 \in \mathcal{G}_{\text{sub}}$:
\begin{align*}
|\phi_h(G_1) - \phi_h(G_2)| &= |\ell(h(G_1), Y) - \ell(h(G_2), Y)| \\
&\leq L_\ell \|h(G_1) - h(G_2)\|_2 \quad \text{(by Assumption~\ref{assump:lipschitz_loss_app})} \\
&= L_\ell \|g(f_\theta(G_1)) - g(f_\theta(G_2))\|_2 \\
&\leq L_\ell \|f_\theta(G_1) - f_\theta(G_2)\|_2 \quad \text{(assuming $g$ is 1-Lipschitz, standard for normalized heads)} \\
&\leq L_\ell L_f d_{\text{edit}}(G_1, G_2) \quad \text{(by Assumption~\ref{assump:lipschitz_encoder_app})}
\end{align*}
Therefore, $\phi_h$ is $L_\ell L_f$-Lipschitz with respect to $d_{\text{edit}}$.
\end{proof}

\begin{lemma}[Risk Difference Bound]
\label{lem:risk_diff}
For any $h \in \mathcal{H}$:
\[
|\epsilon_{P_t}(h) - \epsilon_{\tilde{P}}(h)| \leq L_\ell L_f W_1(P_t, \tilde{P})
\]
\end{lemma}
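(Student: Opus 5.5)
The plan is to use Kantorovich--Rubinstein duality for the 1-Wasserstein distance, with Lemma~\ref{lem:lipschitz_risk} supplying the Lipschitz constant. Throughout, $W_1$ is taken on the metric space $(\mathcal{G}_{\text{sub}}, d_{\text{edit}})$, matching how it is used in Lemma~\ref{lem:boundary_approx}. First, write both risks as expectations of the same test function:
\[
\epsilon_{P}(h)=\mathbb{E}_{(G,Y)\sim P}[\phi_h(G)], \qquad \phi_h(G)=\ell(h(G),Y).
\]
Then
\[
\epsilon_{P_t}(h)-\epsilon_{\tilde P}(h)=\int \phi_h \, d(P_t-\tilde P).
\]

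Second, handle the label. Lemma~\ref{lem:lipschitz_risk} gives Lipschitz continuity in $G$ only for a fixed $Y$. I would therefore treat the label as a deterministic function of the subgraph, $Y=y(G)$ (e.g.\ the center-node label inherited by the ego-net). Alternatively, I would work on the product space with the label coordinate held fixed. Either way, $\phi_h$ becomes a function of $G$ alone, and it is $L_\ell L_f$-Lipschitz with respect to $d_{\text{edit}}$. This is the step I expect to be the main obstacle. If labels differ between $P_t$ and $\tilde P$, the Lipschitz property in $G$ does not by itself control label shift. In that case, the cleanest fix is to state the lemma under the labeling convention (or a shared labeling function) and defer label mismatch to the $\epsilon_{\mathrm{align}}$ term of Theorem~\ref{thm:generalization_bound_main}.

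Third, apply duality. If $L_\ell L_f=0$ the claim is trivial. Otherwise, $\phi_h/(L_\ell L_f)$ belongs to the unit-Lipschitz ball, so
\[
\Bigl|\int \phi_h \, d(P_t-\tilde P)\Bigr| \le L_\ell L_f \sup_{\|\varphi\|_{\mathrm{Lip}}\le 1}\Bigl|\int \varphi\, d(P_t-\tilde P)\Bigr| = L_\ell L_f\, W_1(P_t,\tilde P).
\]
Both measures have finite first moments because $\ell\in[0,1]$ and the subgraphs are bounded-radius ego-nets, so the integrals are finite. An equivalent coupling-based argument avoids duality altogether. Take any coupling $\pi$ of $P_t$ and $\tilde P$. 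Then
\[
|\mathbb{E}_\pi[\phi_h(G)-\phi_h(G')]| \le L_\ell L_f\, \mathbb{E}_\pi[d_{\text{edit}}(G,G')],
\]
and taking the infimum over $\pi$ gives the same bound.
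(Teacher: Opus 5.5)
Your core argument is the paper's: Kantorovich--Rubinstein duality applied to $\phi_h/\|\phi_h\|_{\mathrm{Lip}}$, with Lemma~\ref{lem:lipschitz_risk} supplying the constant $L_\ell L_f$. Your coupling version is slightly cleaner, because it uses only the easy direction of the duality. It also needs no first-moment condition, since if $W_1=\infty$ the bound holds trivially. The paper never addresses the label; it simply writes both risks as $\mathbb{E}_G[\phi_h(G)]$. So your second step goes beyond it, but one of your two fixes does not work.

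Setting $Y=y(G)$, even with a labeling function shared by $P_t$ and $\tilde P$, does not make $G\mapsto\ell(h(G),y(G))$ Lipschitz. Lemma~\ref{lem:lipschitz_risk} only controls changes in $G$ at fixed $Y$. The leftover term $|\ell(h(G'),y(G))-\ell(h(G'),y(G'))|$ can be as large as $1$ for two subgraphs that are very close (or one edit apart) but carry different labels. No bound of the form $L_\ell L_f\,d_{\text{edit}}(G,G')$ controls that term.

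Your product-space route is the one that works. On $\mathcal{G}_{\text{sub}}\times\mathcal{Y}$, use the cost $\bar d\bigl((G,Y),(G',Y')\bigr)=d_{\text{edit}}(G,G')+(L_\ell L_f)^{-1}\mathbbm{1}[Y\neq Y']$, or $+\infty$ on mismatched labels. Because $\ell\in[0,1]$, the triangle inequality gives $|\ell(h(G),Y)-\ell(h(G'),Y')|\le L_\ell L_f\,\bar d$. Your coupling argument then yields the lemma verbatim, with $W_1$ computed for $\bar d$.

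Note that this is not the label-free $W_1$ on $(\mathcal{G}_{\text{sub}},d_{\text{edit}})$ that you announced at the start and that Lemma~\ref{lem:boundary_approx} uses. The statement should therefore be read with this label-aware cost. That is also the reading under which the paper's one-line proof, which silently holds $Y$ fixed, is actually valid.
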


\begin{proof}
By the Kantorovich-Rubinstein duality theorem for Wasserstein distance:
\[
W_1(P, Q) = \sup_{\|\phi\|_{\text{Lip}} \leq 1} |\mathbb{E}_{x \sim P}[\phi(x)] - \mathbb{E}_{x \sim Q}[\phi(x)]|
\]
where $\|\phi\|_{\text{Lip}}$ is the Lipschitz constant of $\phi$.

Applying this to our risk functional with $\phi = \phi_h/\| \phi_h \|_{\text{Lip}}$ and using Lemma~\ref{lem:lipschitz_risk}:
\begin{align*}
|\epsilon_{P_t}(h) - \epsilon_{\tilde{P}}(h)| &= |\mathbb{E}_{G \sim P_t}[\phi_h(G)] - \mathbb{E}_{G \sim \tilde{P}}[\phi_h(G)]| \\
&\leq \|\phi_h\|_{\text{Lip}} \cdot W_1(P_t, \tilde{P}) \\
&\leq L_\ell L_f W_1(P_t, \tilde{P})
\end{align*}
completing the proof.
\end{proof}

\subsection{Concentration under Weak Dependence}

Subgraphs extracted from the same graph are generally not independent due to node and edge overlap. Instead of assuming independence, we quantify statistical dependence through pairwise covariance.

\begin{definition}[Subgraph Overlap Ratio]
\label{def:overlap_ratio}
For two sampled subgraphs $G_i=(V_i,E_i)$ and $G_j=(V_j,E_j)$, we define their normalized node overlap ratio as
\[
\operatorname{ovlp}(G_i, G_j) = \frac{|V_i \cap V_j|}{\min\{|V_i|, |V_j|\}} \in [0,1].
\]
\end{definition}

\begin{assumption}[Overlap-controlled Dependence]
\label{assump:overlap_dependence}
There exists a constant $\kappa_{\mathrm{ovlp}}\in[0,1/4]$ such that for any $i\neq j$ and any $h\in\mathcal{H}$,
\[
|\mathrm{Cov}(Z_i,Z_j)| \le \kappa_{\mathrm{ovlp}}\, \operatorname{ovlp}(G_i,G_j),
\]
where $Z_i = \ell(h(G_i), Y_i) - \mathbb{E}[\ell(h(G_i), Y_i)]$.
\end{assumption}

\begin{definition}[Dependence Measure]
\label{def:dependence_measure}
Let $Z_i = \ell(h(G_i), Y_i) - \mathbb{E}[\ell(h(G_i), Y_i)]$. Define:
\[
\Delta_{\max} = \max_{i \neq j} \sup_{h \in \mathcal{H}} |\text{Cov}(Z_i, Z_j)| = \max_{i \neq j} \sup_{h \in \mathcal{H}} |\mathbb{E}[Z_i Z_j]|
\]
\end{definition}

\begin{corollary}[Computable upper bound for $\Delta_{\max}$]
\label{cor:deltamax_overlap}
Under Assumption~\ref{assump:overlap_dependence} and with $\Delta_{\max}$ defined as in Definition~\ref{def:dependence_measure}:
\[
\Delta_{\max} \le \kappa_{\mathrm{ovlp}} \cdot \max_{i \neq j} \operatorname{ovlp}(G_i, G_j),
\]
which can be computed directly from the sampled subgraphs.
\end{corollary}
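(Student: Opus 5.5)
The bound in Corollary~\ref{cor:deltamax_overlap} follows by taking suprema in Assumption~\ref{assump:overlap_dependence}; no new estimate is needed. For a fixed ordered pair $(i,j)$ with $i\neq j$ and a fixed $h\in\mathcal{H}$, the assumption gives
\[
|\mathrm{Cov}(Z_i,Z_j)| \le \kappa_{\mathrm{ovlp}}\,\operatorname{ovlp}(G_i,G_j).
\]
The right-hand side does not depend on $h$: $\kappa_{\mathrm{ovlp}}$ is a uniform constant, and the overlap ratio of Definition~\ref{def:overlap_ratio} depends only on the node sets $V_i, V_j$. Taking the supremum over $h\in\mathcal{H}$ on the left therefore yields
\[
\sup_{h\in\mathcal{H}} |\mathrm{Cov}(Z_i,Z_j)| \le \kappa_{\mathrm{ovlp}}\,\operatorname{ovlp}(G_i,G_j).
\]
This is the key step, and it is valid only because the covariance bound holds with the same constant for every $h$.

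Next, bound the pair-specific overlap by the global maximum, $\operatorname{ovlp}(G_i,G_j)\le \max_{i'\neq j'}\operatorname{ovlp}(G_{i'},G_{j'})$. Since $\kappa_{\mathrm{ovlp}}\ge 0$, multiplying preserves the inequality. Taking $\max_{i\neq j}$ on the left and using the identity $\mathrm{Cov}(Z_i,Z_j)=\mathbb{E}[Z_iZ_j]$ for centered variables, which is already built into Definition~\ref{def:dependence_measure}, gives $\Delta_{\max}\le \kappa_{\mathrm{ovlp}}\max_{i\neq j}\operatorname{ovlp}(G_i,G_j)$.

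The only points to check are that the index set $\{(i,j): i\neq j\}$ is finite, since there are $n$ sampled subgraphs, so the maximum is attained, and that each $\operatorname{ovlp}$ is well defined in $[0,1]$ because subgraphs are nonempty ego-networks. Computability then follows because the right-hand side involves only the sampled node sets and the constant $\kappa_{\mathrm{ovlp}}$. As a consequence, $\Delta_{\max}\le 1/4$, which is consistent with $\sigma_{\mathrm{dep}}$ in Theorem~\ref{thm:generalization_bound_main}.
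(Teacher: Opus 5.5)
Your proposal is correct and follows the paper's proof, which applies Assumption~\ref{assump:overlap_dependence} to each pair and then takes the maximum over $i\neq j$ in a single chain of inequalities. You also take the supremum over $h\in\mathcal{H}$ explicitly, justified because the right-hand side does not depend on $h$; the paper's one-line proof silently drops this supremum, so your version is slightly more careful.
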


\begin{proof}
By Definition~\ref{def:dependence_measure} and Assumption~\ref{assump:overlap_dependence}:
\[
\Delta_{\max} = \max_{i \neq j} |\mathrm{Cov}(Z_i, Z_j)| \leq \max_{i \neq j} \kappa_{\mathrm{ovlp}} \cdot \operatorname{ovlp}(G_i, G_j) = \kappa_{\mathrm{ovlp}} \cdot \max_{i \neq j} \operatorname{ovlp}(G_i, G_j).
\]
\end{proof}

\begin{lemma}[Concentration with Weak Dependence]
\label{lemma:concentration}
For $n$ sampled mixed subgraphs $\{G_i\}_{i=1}^n$, with probability at least $1-\delta$:
\[
\left|\epsilon_{\tilde{P}}(h) - \hat{\epsilon}_{\tilde{P}}(h)\right| \leq  \sigma_{\text{dep}} \cdot \sqrt{\frac{\log(2/\delta)}{2n}}
\]
where $\sigma_{\text{dep}} = \sqrt{\frac{1}{4} + (n-1)\Delta_{\max}}$.
\end{lemma}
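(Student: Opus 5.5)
The plan is to write the empirical risk as an average of centered, bounded random variables, control the variance of that average using the covariance bound $\Delta_{\max}$, and then turn the variance bound into the stated tail bound.

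\textbf{Setup.} Let $\hat{\epsilon}_{\tilde{P}}(h) = \frac{1}{n}\sum_{i=1}^n \ell(h(G_i),Y_i)$. With $Z_i$ as in Definition~\ref{def:dependence_measure}, we have $\hat{\epsilon}_{\tilde{P}}(h) - \epsilon_{\tilde{P}}(h) = \frac{1}{n}\sum_i Z_i$, using that each $G_i$ is marginally distributed as $\tilde{P}$.

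\textbf{Variance bound.} Since $\ell\in[0,1]$ by Assumption~\ref{assump:lipschitz_loss_app}, Popoviciu's inequality gives $\mathrm{Var}(Z_i)\le 1/4$. Expanding the variance of the sum and bounding each of the $n(n-1)$ cross terms by $\Delta_{\max}$ gives
\[
\mathrm{Var}\Bigl(\tfrac{1}{n}\textstyle\sum_i Z_i\Bigr) \le \frac{1}{n^2}\Bigl(\frac{n}{4} + n(n-1)\Delta_{\max}\Bigr) = \frac{\sigma_{\mathrm{dep}}^2}{n}.
\]
This is exactly where the expression $\sigma_{\mathrm{dep}}^2 = \frac14 + (n-1)\Delta_{\max}$ comes from. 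By Corollary~\ref{cor:deltamax_overlap}, $\Delta_{\max}$ can in turn be bounded by the observed overlap ratios.

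\textbf{From variance to a tail bound.} In the independent case, $\Delta_{\max}=0$ and $\sigma_{\mathrm{dep}}=1/2$. Hoeffding's inequality for variables with range $1$ then gives deviation $\sqrt{\log(2/\delta)/(2n)}$ at level $\delta$. Since $\sigma_{\mathrm{dep}} = 1/2$ here, this is twice the stated bound $\sigma_{\mathrm{dep}}\sqrt{\log(2/\delta)/(2n)} = \tfrac12\sqrt{\log(2/\delta)/(2n)}$. So even in this case, matching the constant requires a variance-sensitive argument rather than plain range-based Hoeffding.

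In general, I would aim for the sub-Gaussian tail
\[
\Pr\Bigl(\bigl|\tfrac1n\textstyle\sum_i Z_i\bigr|\ge t\Bigr)\le 2\exp\!\Bigl(-\frac{2nt^2}{4\sigma_{\mathrm{dep}}^2}\Bigr).
\]
Setting the right-hand side equal to $\delta$ and solving for $t$ would give deviation $\sigma_{\mathrm{dep}}\sqrt{2\log(2/\delta)/n}$, which exceeds the stated bound by a factor of $2$. Reaching the stated constant therefore needs the sharper proxy $\sigma_{\mathrm{dep}}^2/(4n)$ for the variance of the mean. I would try to obtain the tail in one of two ways.
\begin{enumerate}
\item Use a Chernoff bound on the moment generating function $\mathbb{E}\exp(s\sum_i Z_i)$, bounding it by $\exp\bigl(s^2\cdot\mathrm{Var}(\sum_i Z_i)/2\bigr)$ with the variance bound above.
\item Use a concentration inequality for dependency-graph or weakly dependent sums, such as Janson's extension of Hoeffding or a Bernstein-type inequality under mixing.
\end{enumerate}
I would then track whether either route recovers the exact constant in the statement, or only the same form up to a constant factor.

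\textbf{Main obstacle.} Covariance control alone only yields second-moment information. The honest consequence is the Chebyshev-type bound $\sigma_{\mathrm{dep}}/\sqrt{n\delta}$, not the $\sqrt{\log(2/\delta)}$ rate. I therefore expect to need one of two additional ingredients:
\begin{itemize}
\item an explicit sub-Gaussianity assumption on the centered sum, with variance proxy equal to the bound above; or
\item a bounded-differences structure of the loss in each sampled subgraph, combined with the overlap-controlled dependence of Assumption~\ref{assump:overlap_dependence}, so that a McDiarmid-type argument applies.
\end{itemize}
I would first attempt the second ingredient, since it stays closest to the paper's assumptions. If it fails, I would state the sub-Gaussian proxy as an assumption, so that the claimed form with $\sigma_{\mathrm{dep}}$ follows directly. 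The resulting bound holds for a fixed $h$; uniformity over $\mathcal{H}$ is not claimed.
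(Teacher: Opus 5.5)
Your variance computation matches the paper's, and you located the weak step correctly. The paper closes the argument only by asserting that an unnamed ``variance-controlled concentration inequality for weakly dependent bounded variables'' gives $\Pr\{|S_n/n|\ge t\}\le 2\exp(-2nt^2/\sigma_{\text{dep}}^2)$. In the independent case ($\sigma_{\text{dep}}^2=1/4$) this reads $2e^{-8nt^2}$, four times Hoeffding's exponent.

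The genuine gap in your proposal is that you treat the missing factor of $2$ as recoverable, either by a variance-sensitive argument or by an added sub-Gaussian assumption. It is not recoverable, because the statement is false as written. Take independent samples, so $\Delta_{\max}=0$ and $\sigma_{\text{dep}}=1/2$. Let the labels $Y_i$ be uniform on $\{0,1\}$, take $h\equiv 0$, and let $\ell(z,y)=\min\{|z-y|,1\}$. The losses are then i.i.d.\ Bernoulli$(1/2)$, and all hypotheses used by the lemma hold.
\begin{itemize}
\item For $n=2$ and $\delta=0.05$, the claimed radius is $\tfrac12\sqrt{(\log 40)/4}\approx 0.48$. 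Yet $|\hat{\epsilon}_{\tilde{P}}(h)-\epsilon_{\tilde{P}}(h)|=1/2$ with probability $1/2$.
\item For large $n$, the CLT gives a violation probability tending to $\Pr(|N(0,1)|>\sqrt{\log(2/\delta)/2})\approx 0.17$ at $\delta=0.05$.
\end{itemize}

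None of your proposed routes can get around this.
\begin{itemize}
\item In this example the variance bound $1/4$ is attained and equals $(\text{range})^2/4$. That is exactly the regime where Hoeffding is asymptotically sharp, so no Bernstein/Bennett-type refinement helps.
\item McDiarmid with bounded differences $1/n$ just reproduces Hoeffding's $2e^{-2nt^2}$.
\item The proxy $\sigma_{\text{dep}}^2/(4n)$ you say is needed is a quarter of the actual variance $\sigma_{\text{dep}}^2/n$ of the mean here. An mgf proxy can never lie below the variance, so assuming it would exclude even the i.i.d.\ case.
\item Your route 1 is not a valid inequality: $\mathbb{E}e^{sZ}\le e^{s^2\mathrm{Var}(Z)/2}$ fails for skewed bounded $Z$. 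A covariance bound also controls nothing beyond second moments of a dependent sum.
\end{itemize}

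So the correct endpoint of your analysis is a corrected statement, not a proof of this one. With only the covariance control the paper assumes, your variance bound and Chebyshev give $|\hat{\epsilon}_{\tilde{P}}(h)-\epsilon_{\tilde{P}}(h)|\le\sigma_{\text{dep}}/\sqrt{n\delta}$ with probability at least $1-\delta$. Alternatively, assume additionally that $\sum_i Z_i$ is sub-Gaussian with proxy $n\sigma_{\text{dep}}^2$; Hoeffding's lemma guarantees this for independent samples. You then get the radius $\sigma_{\text{dep}}\sqrt{2\log(2/\delta)/n}=2\sigma_{\text{dep}}\sqrt{\log(2/\delta)/(2n)}$. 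That is the stated form with an extra factor $2$, and the factor must then be carried into term (III) of Theorem~\ref{thm:generalization_bound_main}.
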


\begin{proof}
Since the loss function $\ell$ is bounded in $[0,1]$, the centered random variables $Z_i$ satisfy $|Z_i| \leq 1$ and $\text{Var}(Z_i) \leq \frac{1}{4}$.

Let $S_n = \sum_{i=1}^n Z_i$. The variance of $S_n$ can be bounded as:
\[
\text{Var}(S_n) = \sum_{i=1}^n \text{Var}(Z_i) + \sum_{i \neq j} \text{Cov}(Z_i, Z_j)
\leq n \cdot \frac{1}{4} + n(n-1)\Delta_{\max}
= n\left[\frac{1}{4} + (n-1)\Delta_{\max}\right]
\]
Under this variance bound, a variance-controlled concentration inequality for weakly dependent bounded variables yields:
\[
\Pr\left\{\left|\frac{1}{n}S_n\right| \geq t\right\} \leq 2\exp\left(-\frac{2nt^2}{\sigma^2_{\text{dep}}}\right)
\]
where $\sigma^2_{\text{dep}} = \frac{1}{4} + (n-1)\Delta_{\max}$ absorbs the pairwise dependence through $\Delta_{\max}$.

Setting the right-hand side equal to $\delta$ and solving for $t$:
\[
t = \sigma_{\text{dep}}\sqrt{\frac{\log(2/\delta)}{2n}} = \sqrt{\frac{1}{4} + (n-1)\Delta_{\max}} \cdot \sqrt{\frac{\log(2/\delta)}{2n}}
\]

Since $\frac{1}{n}\sum_{i=1}^n Z_i = \hat{\epsilon}_{\tilde{P}}(h) - \epsilon_{\tilde{P}}(h)$, we have with probability at least $1-\delta$:
\[
\left|\epsilon_{\tilde{P}}(h) - \hat{\epsilon}_{\tilde{P}}(h)\right| \leq   \sigma_{\text{dep}}\cdot \sqrt{\frac{\log(2/\delta)}{2n}}
\]
which completes the proof.
\end{proof}

\subsection{Proof of Generalization Error Bound}

\begin{theorem}[Generalization Bound of MDGMIX]
\label{thm:generalization_bound}
For any $h \in \mathcal{H}$, with probability at least $1-\delta$:
\[
\epsilon_{P_t}(h) \leq \hat{\epsilon}_{\tilde{P}}(h) 
+ L_\ell L_f \cdot W_1(\tilde{P}, R^*) 
+ L_\ell L_f \cdot \epsilon_{\mathrm{struct}} 
+ \epsilon_{\mathrm{align}} 
+ \sigma_{\text{dep}}\sqrt{\frac{\log(2/\delta)}{2n}}
\]
where $\sigma_{\text{dep}} = \sqrt{\frac{1}{4} + (n-1)\Delta_{\max}}$ and $\epsilon_{\mathrm{align}}$ is defined as:
\[
\epsilon_{\mathrm{align}} = \sup_{s \in \mathcal{S}} \left| 
\mathbb{E}_{Y \sim P_t(Y \mid s)}[\ell(h(z), Y)] 
- \mathbb{E}_{Y \sim R^*(Y \mid s)}[\ell(h(z), Y)] 
\right|.
\]
\end{theorem}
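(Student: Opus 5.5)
The plan is a telescoping decomposition of the target risk through the reference distribution $R^*$ and the proxy $\tilde{P}$:
\[
\epsilon_{P_t}(h) = \hat{\epsilon}_{\tilde{P}}(h) + \bigl[\epsilon_{\tilde{P}}(h)-\hat{\epsilon}_{\tilde{P}}(h)\bigr] + \bigl[\epsilon_{R^*}(h)-\epsilon_{\tilde{P}}(h)\bigr] + \bigl[\epsilon_{P_t}(h)-\epsilon_{R^*}(h)\bigr].
\]
Each bracket is bounded separately, and the three bounds are added. Only the first bracket is random, so the final bound holds with probability at least $1-\delta$.

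\textbf{Sampling term.} The first bracket is handled directly by Lemma~\ref{lemma:concentration}. It gives $\sigma_{\text{dep}}\sqrt{\log(2/\delta)/(2n)}$ with probability at least $1-\delta$. This is the only probabilistic step, so no union bound is needed.

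\textbf{Approximation term.} For the second bracket I use the argument of Lemma~\ref{lem:risk_diff} with $P_t$ replaced by $R^*$. By Lemma~\ref{lem:lipschitz_risk}, $\phi_h$ is $L_\ell L_f$-Lipschitz. Kantorovich--Rubinstein duality then gives $|\epsilon_{R^*}(h)-\epsilon_{\tilde{P}}(h)|\le L_\ell L_f W_1(\tilde{P},R^*)$. This is term (I), and Lemma~\ref{lem:boundary_approx} can then be quoted to interpret it. One subtlety: $\phi_h$ depends on the label $Y$, so the transport must be over joint pairs $(G,Y)$, or labels must be attached deterministically to subgraphs. I will assume the latter, as Lemma~\ref{lem:risk_diff} implicitly does.

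\textbf{Transfer gap.} The third bracket is the main obstacle. The two distributions need not share a label space, so I would not bound it by $W_1(P_t,R^*)$ on subgraphs. Instead I condition on the structural semantic variable $s=\psi(f_\theta(G))$ and write
\[
\epsilon_{P}(h)=\mathbb{E}_{s\sim P^{\mathcal{S}}}\bigl[\mathbb{E}_{Y\sim P(Y\mid s)}\ell(h(z),Y)\bigr]
\]
for $P\in\{P_t,R^*\}$. I then add and subtract $\mathbb{E}_{s\sim P_t^{\mathcal{S}}}\mathbb{E}_{Y\sim R^*(Y\mid s)}[\ell]$. This splits the third bracket into two pieces.

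\begin{enumerate}
\item A conditional-label mismatch piece. It is bounded pointwise in $s$ by the supremum defining $\epsilon_{\mathrm{align}}$, and integrating preserves the bound.
\item A marginal-shift piece. It compares the same conditional risk function $r(s)=\mathbb{E}_{Y\sim R^*(Y\mid s)}\ell(h(z),Y)$ under $P_t^{\mathcal{S}}$ versus $R^{*\mathcal{S}}$. By duality this is at most $\mathrm{Lip}(r)\cdot W_1(P_t^{\mathcal{S}},R^{*\mathcal{S}})\le \mathrm{Lip}(r)\,\epsilon_{\mathrm{struct}}$, using Assumption~\ref{assump:structural_consistency}.
\end{enumerate}

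The delicate point is showing $\mathrm{Lip}(r)\le L_\ell L_f$ on $\mathcal{S}$. Here $h(z)$ must be treated as a function of $s$. I would assume $\psi$ is an isometry onto $\mathcal{S}$, or at least that $h$ factors through $\psi$ with a 1-Lipschitz inverse, and that $g$ is 1-Lipschitz as in Lemma~\ref{lem:lipschitz_risk}. The constant $L_f$ then appears by measuring $\mathcal{S}$-distances via edit distance through the encoder. Where this is not automatic, I would state it explicitly as the same convention used in Lemma~\ref{lem:lipschitz_risk}.

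Summing the four pieces yields exactly the claimed bound.
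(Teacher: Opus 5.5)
Your architecture matches the paper's: Lemma~\ref{lemma:concentration} for the sampling term, Kantorovich--Rubinstein duality between $\tilde{P}$ and $R^*$, and the add-and-subtract over $s$ that splits the transfer gap into $\epsilon_{\mathrm{align}}$ plus a marginal-shift piece. Your risk-level telescoping is in fact the bookkeeping the paper's final ``combining'' step needs. The paper's opening triangle inequality on $W_1(P_t,\tilde{P})$ leaves a term $L_\ell L_f W_1(R^*,P_t)$, and its later decomposition of $\epsilon_{P_t}(h)-\epsilon_{R^*}(h)$ never bounds that term.

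\textbf{The gap in the transfer term.} The genuine gap is the step you flag, $\mathrm{Lip}(r)\le L_\ell L_f$. An isometric $\psi$ and a $1$-Lipschitz $g$ only control $s\mapsto\ell(h(z),y)$ for a \emph{fixed} label $y$. But $r$ also depends on $s$ through the conditional law, so all you get is
\[
|r(s_1)-r(s_2)|\le L_\ell\|z_1-z_2\|_2+\mathrm{TV}\bigl(R^*(\cdot\mid s_1),R^*(\cdot\mid s_2)\bigr).
\]
Nothing in the assumptions bounds the last term. If labels are a deterministic function of $s$, as your convention suggests, it equals $1$ for points on opposite sides of a class boundary, however close.

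This is not just a proof artifact; the inequality itself fails. Use one labeling rule for both domains, so $\epsilon_{\mathrm{align}}=0$, and put a class boundary at $s_0$. Let $R^{*\mathcal{S}}$ be uniform on $\{s_0-\eta,s_0+\eta\}$ and $P_t^{\mathcal{S}}$ the point mass at $s_0+\eta$. Let $h$ be locally constant with loss $0$ on the left label and $1$ on the right. Then $\epsilon_{P_t}(h)-\epsilon_{R^*}(h)=\tfrac12$, while $L_\ell L_f\,\epsilon_{\mathrm{struct}}=L_\ell L_f\,\eta\to 0$.

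The paper's proof has the same hole. Its appeal to $\|\mathbb{E}[\phi_1]-\mathbb{E}[\phi_2]\|\le\mathbb{E}\|\phi_1-\phi_2\|$ tacitly uses one law for $Y$ at both $s_1$ and $s_2$. Its non-expansive $\psi$ also points the wrong way, which you correctly noticed. So you must add a hypothesis, for example that $s\mapsto R^*(\cdot\mid s)$ is Lipschitz in total variation, or directly that $r$ is $L_\ell L_f$-Lipschitz in the metric defining $\epsilon_{\mathrm{struct}}$. Separately, with a Euclidean isometric $\psi$ the $z$-part alone gives $L_\ell$, which is $\le L_\ell L_f$ only when $L_f\ge 1$. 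Getting the factor $L_f$ requires the pulled-back edit metric on $\mathcal{S}$ you allude to.

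\textbf{The approximation and sampling terms.} The same defect sits in your approximation term. Lemma~\ref{lem:lipschitz_risk} holds $Y$ fixed, and attaching labels deterministically does not make $G\mapsto\ell(h(G),y(G))$ Lipschitz, because $y(\cdot)$ is discontinuous while features vary continuously. Of your two options, only the joint one works. Transport pairs $(G,y)$ under the cost $d_{\mathrm{edit}}(G,G')+\frac{1}{L_\ell L_f}\mathbbm{1}[y\neq y']$. Since $\ell\in[0,1]$, the map $(G,y)\mapsto\ell(h(G),y)$ is then $L_\ell L_f$-Lipschitz, and $W_1(\tilde{P},R^*)$ should be read on that joint space.

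Finally, the step you treat as routine is not safe. With $\Delta_{\max}=0$, Lemma~\ref{lemma:concentration} claims deviations of at most $\tfrac12\sqrt{\log(2/\delta)/(2n)}$, half of Hoeffding's bound. The central limit theorem refutes this for i.i.d.\ Bernoulli$(1/2)$ losses: at $\delta=0.05$ the limiting failure probability is about $0.17$. In the independent case the tail should read $2\exp\bigl(-nt^2/(2\sigma_{\mathrm{dep}}^2)\bigr)$ with $\sigma_{\mathrm{dep}}^2=\tfrac14$, which doubles the sampling term. Under genuine dependence, a pairwise-covariance bound by itself only supports a Chebyshev-type term $\sigma_{\mathrm{dep}}/\sqrt{n\delta}$, not a sub-Gaussian one.
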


\begin{proof}
Starting from Lemma~\ref{lem:risk_diff} and applying the triangle inequality:
\begin{align*}
\epsilon_{P_t}(h) 
&\leq \epsilon_{\tilde{P}}(h) + L_\ell L_f W_1(P_t, \tilde{P}) \\
&\leq \epsilon_{\tilde{P}}(h) + L_\ell L_f \bigl( W_1(\tilde{P}, R^*) + W_1(R^*, P_t) \bigr)
\end{align*}

By Lemma~\ref{lem:boundary_approx}, the boundary approximation error satisfies:
\[
W_1(\tilde{P}, R^*) \leq C \cdot (1 - \rho_{\min}) L_f \mathbb{E}_k[\Delta^{(k)}_{\text{bd}}]
\]

For the transfer term between the reference distribution $R^*$ and the target distribution $P_t$, we proceed under Assumption~\ref{assump:structural_consistency}. There exists a structural semantic space $\mathcal{S}$ and a mapping $\psi:\mathcal{Z}\to\mathcal{S}$ such that the push-forward distributions satisfy $W_1(P_t^{\mathcal{S}}, R^{*\mathcal{S}}) \le \epsilon_{\mathrm{struct}}$.

Let $z = f_\theta(G)$ denote the subgraph representation produced by the GNN encoder, and $s = \psi(z)$ its corresponding structural semantic representation. We decompose the transfer gap as follows:
\begin{align}
\epsilon_{P_t}(h) - \epsilon_{R^*}(h)
&= \mathbb{E}_{(G,Y)\sim P_t}[\ell(h(G), Y)]
   - \mathbb{E}_{(G,Y)\sim R^*}[\ell(h(G), Y)] \nonumber \\
&= \mathbb{E}_{s \sim P_t^{\mathcal{S}}}
   \bigl[ \mathbb{E}_{Y \sim P_t(Y \mid s)}[\ell(h(z), Y)] \bigr]
   - \mathbb{E}_{s \sim R^{*\mathcal{S}}}
   \bigl[ \mathbb{E}_{Y \sim R^*(Y \mid s)}[\ell(h(z), Y)] \bigr] \nonumber \\
&= \underbrace{
   \mathbb{E}_{s \sim P_t^{\mathcal{S}}}
   \bigl[
   \mathbb{E}_{Y \sim P_t(Y \mid s)}[\ell(h(z), Y)]
   -
   \mathbb{E}_{Y \sim R^*(Y \mid s)}[\ell(h(z), Y)]
   \bigr]
   }_{\epsilon_{\mathrm{align}}}
\nonumber\\
&\quad
+ \underbrace{
   \mathbb{E}_{s \sim P_t^{\mathcal{S}}}
   \bigl[ \mathbb{E}_{Y \sim R^*(Y \mid s)}[\ell(h(z), Y)] \bigr]
   -
   \mathbb{E}_{s \sim R^{*\mathcal{S}}}
   \bigl[ \mathbb{E}_{Y \sim R^*(Y \mid s)}[\ell(h(z), Y)] \bigr]
   }_{\text{(structural distribution shift)}}.
   \nonumber
\end{align}

The first term $\epsilon_{\mathrm{align}}$ captures the residual discrepancy between task semantics and structural semantics across domains. We do not impose additional assumptions on this term, as it reflects task-specific semantic mismatch that cannot be controlled by structural invariance alone.

For the second term, define $g(s) = \mathbb{E}_{Y \sim R^*(Y \mid s)}[\ell(h(z), Y)]$. We assume $\psi$ is non-expansive, so that $\|s_1 - s_2\| \leq \|z_1 - z_2\|$. Since $\ell$ is $L_\ell$-Lipschitz and $f_\theta$ is $L_f$-Lipschitz, the composition $z \mapsto \ell(h(z), Y)$ is $L_\ell L_f$-Lipschitz. Taking expectation over $Y|s$ preserves Lipschitz continuity as $\|\mathbb{E}[\phi_1] - \mathbb{E}[\phi_2]\| \leq \mathbb{E}[\|\phi_1 - \phi_2\|]$. Therefore, $g(s)$ is $L_\ell L_f$-Lipschitz with respect to $s$.

By the Kantorovich--Rubinstein duality of the Wasserstein distance, we obtain:
\[
\mathbb{E}_{s \sim P_t^{\mathcal{S}}}[g(s)]
-
\mathbb{E}_{s \sim R^{*\mathcal{S}}}[g(s)]
\le
L_\ell L_f \cdot W_1(P_t^{\mathcal{S}}, R^{*\mathcal{S}}) 
\le
L_\ell L_f \cdot \epsilon_{\mathrm{struct}}.
\]

Combining the above bounds yields:
\[
\epsilon_{P_t}(h) - \epsilon_{R^*}(h)
\le
\epsilon_{\mathrm{align}}
+
L_\ell L_f \cdot \epsilon_{\mathrm{struct}}.
\]

Finally, by Lemma~\ref{lemma:concentration}, with probability at least $1-\delta$:
\[
|\epsilon_{\tilde{P}}(h) - \hat{\epsilon}_{\tilde{P}}(h)| \leq \sigma_{\text{dep}}\sqrt{\frac{\log(2/\delta)}{2n}}
\]

Combining all these results:
\begin{align*}
\epsilon_{P_t}(h) 
&\leq \epsilon_{\tilde{P}}(h) + L_\ell L_f \cdot W_1(\tilde{P}, R^*) + L_\ell L_f \cdot \epsilon_{\mathrm{struct}} + \epsilon_{\mathrm{align}} \\
&\leq \hat{\epsilon}_{\tilde{P}}(h) + \sigma_{\text{dep}}\sqrt{\frac{\log(2/\delta)}{2n}} + L_\ell L_f \cdot W_1(\tilde{P}, R^*) + L_\ell L_f \cdot \epsilon_{\mathrm{struct}} + \epsilon_{\mathrm{align}}
\end{align*}

This completes the proof.
\end{proof}

\begin{remark}[Interpretation of Error Terms]
The generalization bound decomposes the target error into five interpretable components:
\begin{itemize}
    \item \textbf{Empirical risk} $\hat{\epsilon}_{\tilde{P}}(h)$: Training error on boundary-mixed subgraphs
    \item \textbf{Approximation error} $L_\ell L_f \cdot W_1(\tilde{P}, R^*)$: Gap between practical training distribution and ideal reference distribution, controlled by boundary sampling ratio $\rho_{\min}$
    \item \textbf{Structural shift} $L_\ell L_f \cdot \epsilon_{\mathrm{struct}}$: Intrinsic discrepancy between source and target structural semantics (coefficient $L_\ell L_f$ is mandatory by KR duality)
    \item \textbf{Alignment error} $\epsilon_{\mathrm{align}}$: Mismatch between structural roles and task labels across domains (dominant when label spaces differ)
    \item \textbf{Sampling variance} $\sigma_{\text{dep}}\sqrt{\log(2/\delta)/2n}$: Statistical error due to finite samples and weak dependence
\end{itemize}
This decomposition provides theoretical insight into why boundary-aware mixing can remain effective even when interior subgraphs exhibit high redundancy: the approximation error $W_1(\tilde{P}, R^*)$ is controlled by boundary mass $\rho_{\min}$ (Lemma~\ref{lem:boundary_approx}), suggesting that transferable knowledge concentrates near domain boundaries.
\end{remark}

\subsection{Empirical Validation of Theoretical Assumptions}
\label{app:assumption_validation}

\paragraph{Validation of structural--semantic consistency.}
Assumption~C.3 requires that boundary regions contain structurally and semantically transferable patterns across domains. 
To empirically examine this assumption, we conduct a domain classification experiment. 
Specifically, we train a GNN-based domain classifier on domain-center subgraphs, which are closest to the domain centroids and thus contain strong domain-specific biases. 
We then evaluate the classifier on three types of subgraphs: center subgraphs, random subgraphs, and boundary subgraphs.

As shown in Table~\ref{tab:domain_classifier_app}, boundary subgraphs produce the lowest domain classification accuracy and the highest loss. 
Compared with center subgraphs, the relative accuracy drop on boundary subgraphs is approximately 32\%. 
This indicates that boundary subgraphs are more difficult to assign to a specific source domain, supporting our claim that they lie in domain-ambiguous regions and are suitable for learning domain-invariant representations.

\begin{table}[h]
\centering
\caption{Domain classification results on different node populations. Boundary subgraphs are the most domain-ambiguous.}
\label{tab:domain_classifier_app}
\begin{tabular}{lcc}
\toprule
Subgraph Type & Domain Acc. & Loss \\
\midrule
Center subgraphs & $83.33$ & $0.4777$ \\
Random subgraphs & $64.00$ & $0.9012$ \\
Boundary subgraphs & $56.67$ & $1.1130$ \\
\bottomrule
\end{tabular}
\end{table}

\paragraph{Validation of non-vanishing boundary mass.}
Assumption~C.4 requires that each source domain contains a non-zero amount of boundary samples. 
We count the selected boundary nodes across domain pairs under strict boundary-selection settings. 
As shown in Table~\ref{tab:boundary_mass_app}, all domain pairs contain non-zero boundary nodes. 
When the strict intersection of boundary candidates is empty in extreme cases, MDGMIX adopts the union-based fallback strategy described in Section~4.1, which selects the nearest cross-domain anchors and constructively ensures that boundary-centered subgraphs can still be generated. 

\begin{table*}[t]
\centering
\caption{Number of selected boundary nodes across domain pairs under strict boundary-selection settings. Rows denote source domains and columns denote target domains.}
\label{tab:boundary_mass_app}
\resizebox{\textwidth}{!}{
\begin{tabular}{lccccccc}
\toprule
Source / Target & Cora & Citeseer & Pubmed & Computers & Photo & Squirrel & Chameleon \\
\midrule
Cora      & --  & 25  & 25  & 25  & 25  & 25  & 26 \\
Citeseer  & 31  & --  & 32  & 32  & 32  & 32  & 32 \\
Pubmed    & 190 & 193 & --  & 192 & 192 & 193 & 191 \\
Computers & 128 & 128 & 128 & --  & 122 & 122 & 122 \\
Photo     & 72  & 73  & 69  & 66  & --  & 68  & 67 \\
Squirrel  & 47  & 49  & 49  & 48  & 47  & --  & 48 \\
Chameleon & 18  & 16  & 17  & 17  & 17  & 16  & -- \\
\bottomrule
\end{tabular}
}
\end{table*}

\section{More Experiments}
\subsection{Few-shot Classification.}

We supplemented our node classification experiments with additional sample settings (3-shot, 5-shot, 10-shot), as shown in Table ~\ref{few_shot_classification}. MDGMIX achieved overall optimal performance across five benchmark datasets under all three settings—3-shot, 5-shot, and 10-shot—demonstrating stable and consistent advantages. Particularly in low-label scenarios (3-shot), MDGMIX achieves significant improvements over existing cross-domain pretraining methods (such as GCOPE, MDGPT, and MDGFM) on citation networks like Cora, CiteSeer, and PubMed. This demonstrates its ability to more effectively leverage cross-domain shared structural knowledge under minimal supervisory signals. From 3-shot to 10-shot settings, MDGMIX achieves an average accuracy improvement of 6.30\% across five benchmarks, which is substantially larger than the strongest baseline MDGFM (3.94\%). This indicates that MDGMIX benefits more effectively from additional supervision, suggesting that its boundary-aware subgraph mixing and hierarchical discrimination lead to more scalable and sample-efficient representations. We also provided the performance of different shots under graph classification, as shown in Table ~\ref{few_shot_classification_grapho}. Our approach also achieved competitive performance in graph classification.
\begin{table*}[h]
\caption{Performance comparison on few-shot node classification tasks (Accuracy). The best results for each shot are shown in bold and the runner-ups are underlined.}
\centering
\begin{tabular}{l|c|c|c|c|c}
\toprule
Method 
& Cora 
& CiteSeer 
& Pubmed
& Photo
& Computers \\
\midrule

\multicolumn{6}{c}{\textbf{3-shot}} \\
\midrule
GCOPE
& 53.78\textsubscript{$\pm$5.00}
& 48.96\textsubscript{$\pm$4.70}
& \underline{61.92}\textsubscript{$\pm$5.20}
& 76.58\textsubscript{$\pm$5.11}
& \underline{68.27}\textsubscript{$\pm$6.06} \\

MDGPT
& 54.25\textsubscript{$\pm$5.46}
& 48.88\textsubscript{$\pm$6.87}
& 55.10\textsubscript{$\pm$4.99}
& 69.73\textsubscript{$\pm$6.74}
& 59.28\textsubscript{$\pm$8.14} \\

SAMGPT
& 44.50\textsubscript{$\pm$6.30}
& 53.08\textsubscript{$\pm$5.99}
& 53.08\textsubscript{$\pm$5.94}
& 74.06\textsubscript{$\pm$6.52}
& 63.55\textsubscript{$\pm$7.88} \\

MDGFM
& \underline{57.61}\textsubscript{$\pm$4.77}
& \underline{57.05}\textsubscript{$\pm$6.01}
& 60.13\textsubscript{$\pm$5.06}
& \underline{79.02}\textsubscript{$\pm$4.57}
& 67.15\textsubscript{$\pm$5.98} \\

BRIDGE
& 53.73\textsubscript{$\pm$5.02}
& 55.66\textsubscript{$\pm$5.92}
& 56.26\textsubscript{$\pm$5.73}
& 65.91\textsubscript{$\pm$7.13}
& 67.51\textsubscript{$\pm$7.61} \\

MDGMIX
& \textbf{59.81}\textsubscript{$\pm$4.79}
& \textbf{59.16}\textsubscript{$\pm$5.58}
& \textbf{62.60}\textsubscript{$\pm$7.02}
& \textbf{79.26}\textsubscript{$\pm$5.42}
& \textbf{68.30}\textsubscript{$\pm$7.20} \\
\midrule

\multicolumn{6}{c}{\textbf{5-shot}} \\
\midrule
GCOPE
& \underline{61.40}\textsubscript{$\pm$3.87}
& 50.84\textsubscript{$\pm$4.85}
& \underline{63.62}\textsubscript{$\pm$5.29}
& 79.44\textsubscript{$\pm$5.22}
& \underline{71.21}\textsubscript{$\pm$5.68} \\

MDGPT
& 58.19\textsubscript{$\pm$4.73}
& 54.42\textsubscript{$\pm$5.56}
& 57.92\textsubscript{$\pm$3.95}
& 73.79\textsubscript{$\pm$5.47}
& 65.17\textsubscript{$\pm$7.70} \\

SAMGPT
& 49.32\textsubscript{$\pm$6.04}
& 57.81\textsubscript{$\pm$5.32}
& 55.64\textsubscript{$\pm$5.40}
& 77.32\textsubscript{$\pm$5.10}
& 69.04\textsubscript{$\pm$5.26} \\

MDGFM
& 60.25\textsubscript{$\pm$5.38}
& 58.95\textsubscript{$\pm$5.72}
& 62.32\textsubscript{$\pm$5.67}
& \underline{80.24}\textsubscript{$\pm$4.11}
& 69.30\textsubscript{$\pm$5.95} \\

BRIDGE
& 59.28\textsubscript{$\pm$4.63}
& \underline{60.56}\textsubscript{$\pm$4.22}
& 58.61\textsubscript{$\pm$5.38}
& 70.05\textsubscript{$\pm$6.06}
& 71.12\textsubscript{$\pm$5.62} \\

MDGMIX
& \textbf{62.69}\textsubscript{$\pm$3.97}
& \textbf{62.39}\textsubscript{$\pm$3.95}
& \textbf{64.58}\textsubscript{$\pm$5.75}
& \textbf{81.14}\textsubscript{$\pm$3.91}
& \textbf{72.10}\textsubscript{$\pm$4.44} \\
\midrule

\multicolumn{6}{c}{\textbf{10-shot}} \\
\midrule
GCOPE
& 65.82\textsubscript{$\pm$5.48}
& 53.76\textsubscript{$\pm$5.92}
& \underline{66.03}\textsubscript{$\pm$5.82}
& \underline{81.99}\textsubscript{$\pm$5.74}
& 73.69\textsubscript{$\pm$5.97} \\

MDGPT
& 62.66\textsubscript{$\pm$3.94}
& 61.36\textsubscript{$\pm$3.34}
& 61.07\textsubscript{$\pm$4.27}
& 74.69\textsubscript{$\pm$5.66}
& 69.46\textsubscript{$\pm$4.49} \\

SAMGPT
& 54.82\textsubscript{$\pm$4.72}
& 63.15\textsubscript{$\pm$3.02}
& 59.40\textsubscript{$\pm$5.68}
& 77.32\textsubscript{$\pm$5.27}
& 72.29\textsubscript{$\pm$4.20} \\

MDGFM
& 62.87\textsubscript{$\pm$6.04}
& 60.96\textsubscript{$\pm$5.63}
& 64.27\textsubscript{$\pm$5.74}
& 81.12\textsubscript{$\pm$3.83}
& 71.42\textsubscript{$\pm$6.12} \\

BRIDGE
& 63.56\textsubscript{$\pm$3.71}
& \underline{65.22}\textsubscript{$\pm$2.25}
& 61.10\textsubscript{$\pm$6.01}
& 70.71\textsubscript{$\pm$4.56}
& \underline{74.79}\textsubscript{$\pm$3.45} \\

MDGMIX
& \textbf{68.26}\textsubscript{$\pm$3.18}
& \textbf{65.45}\textsubscript{$\pm$2.17}
& \textbf{67.36}\textsubscript{$\pm$2.92}
& \textbf{84.01}\textsubscript{$\pm$3.03}
& \textbf{75.57}\textsubscript{$\pm$2.41} \\

\bottomrule
\end{tabular}
\label{few_shot_classification}
\end{table*}

\begin{table*}[t]
\caption{Performance comparison on few-shot graph classification tasks (Micro-F1). 
The best results for each shot are shown in bold and the runner-ups are underlined.}
\centering
\begin{tabular}{l|c|c|c|c|c}
\toprule
Method 
& Cora 
& CiteSeer 
& Pubmed
& Photo
& Computers \\
\midrule

\multicolumn{6}{c}{\textbf{3-shot}} \\
\midrule
GCOPE
& 63.58\textsubscript{$\pm$5.10}
& 49.88\textsubscript{$\pm$6.18}
& \underline{63.12}\textsubscript{$\pm$5.57}
& \underline{74.44}\textsubscript{$\pm$4.12}
& 62.68\textsubscript{$\pm$7.44} \\

MDGPT
& 54.53\textsubscript{$\pm$6.16}
& 48.17\textsubscript{$\pm$6.95}
& 57.80\textsubscript{$\pm$7.60}
& 66.58\textsubscript{$\pm$7.69}
& 56.33\textsubscript{$\pm$8.30} \\

SAMGPT
& \underline{63.79}\textsubscript{$\pm$4.89}
& 52.73\textsubscript{$\pm$5.88}
& 60.91\textsubscript{$\pm$6.21}
& 72.56\textsubscript{$\pm$6.07}
& 60.48\textsubscript{$\pm$7.73} \\

MDGFM
& 63.45\textsubscript{$\pm$6.00}
& \underline{58.59}\textsubscript{$\pm$5.57}
& 60.92\textsubscript{$\pm$7.11}
& 74.15\textsubscript{$\pm$4.89}
& \underline{63.25}\textsubscript{$\pm$7.08} \\

BRIDGE
& 56.11\textsubscript{$\pm$5.97}
& 48.38\textsubscript{$\pm$5.41}
& 58.38\textsubscript{$\pm$7.13}
& 66.53\textsubscript{$\pm$7.52}
& 56.09\textsubscript{$\pm$9.65} \\

MDGMIX
& \textbf{65.09}\textsubscript{$\pm$5.36}
& \textbf{59.06}\textsubscript{$\pm$5.65}
& \textbf{64.13}\textsubscript{$\pm$6.65}
& \textbf{74.87}\textsubscript{$\pm$5.65}
& \textbf{63.40}\textsubscript{$\pm$7.72} \\
\midrule

\multicolumn{6}{c}{\textbf{5-shot}} \\
\midrule
GCOPE
& \underline{66.09}\textsubscript{$\pm$4.99}
& 52.80\textsubscript{$\pm$6.03}
& 65.49\textsubscript{$\pm$5.48}
& \underline{76.16}\textsubscript{$\pm$4.24}
& 65.74\textsubscript{$\pm$7.21} \\

MDGPT
& 60.24\textsubscript{$\pm$4.57}
& 54.76\textsubscript{$\pm$4.93}
& 63.18\textsubscript{$\pm$5.47}
& 71.93\textsubscript{$\pm$6.65}
& 62.99\textsubscript{$\pm$5.38} \\

SAMGPT
& 65.31\textsubscript{$\pm$2.74}
& 58.77\textsubscript{$\pm$3.96}
& \underline{66.07}\textsubscript{$\pm$4.60}
& 75.96\textsubscript{$\pm$5.65}
& 67.19\textsubscript{$\pm$5.21} \\

MDGFM
& 65.83\textsubscript{$\pm$5.72}
& \underline{60.36}\textsubscript{$\pm$5.01}
& 63.11\textsubscript{$\pm$6.50}
& 75.52\textsubscript{$\pm$4.35}
& \underline{67.70}\textsubscript{$\pm$4.36} \\

BRIDGE
& 59.26\textsubscript{$\pm$4.83}
& 55.42\textsubscript{$\pm$4.79}
& 62.74\textsubscript{$\pm$5.62}
& 71.35\textsubscript{$\pm$6.95}
& 63.35\textsubscript{$\pm$7.03} \\

MDGMIX
& \textbf{67.48}\textsubscript{$\pm$4.98}
& \textbf{61.47}\textsubscript{$\pm$4.88}
& \textbf{67.50}\textsubscript{$\pm$5.53}
& \textbf{76.56}\textsubscript{$\pm$5.37}
& \textbf{67.86}\textsubscript{$\pm$7.49} \\
\midrule

\multicolumn{6}{c}{\textbf{10-shot}} \\
\midrule
GCOPE
& \underline{68.93}\textsubscript{$\pm$5.85}
& 55.48\textsubscript{$\pm$6.38}
& 67.48\textsubscript{$\pm$5.56}
& \underline{78.40}\textsubscript{$\pm$4.44}
& 68.47\textsubscript{$\pm$7.20} \\

MDGPT
& 65.46\textsubscript{$\pm$3.93}
& 61.49\textsubscript{$\pm$3.14}
& 67.03\textsubscript{$\pm$3.54}
& 72.27\textsubscript{$\pm$6.32}
& 66.90\textsubscript{$\pm$3.83} \\

SAMGPT
& 68.34\textsubscript{$\pm$1.92}
& \textbf{64.99}\textsubscript{$\pm$1.97}
& 68.77\textsubscript{$\pm$3.13}
& 78.25\textsubscript{$\pm$5.28}
& 69.98\textsubscript{$\pm$2.89} \\

MDGFM
& 67.33\textsubscript{$\pm$2.29}
& 62.09\textsubscript{$\pm$4.89}
& \underline{69.50}\textsubscript{$\pm$3.42}
& 78.34\textsubscript{$\pm$4.04}
& \underline{70.43}\textsubscript{$\pm$3.46} \\

BRIDGE
& 66.35\textsubscript{$\pm$3.34}
& 61.39\textsubscript{$\pm$3.04}
& 67.59\textsubscript{$\pm$3.37}
& 71.88\textsubscript{$\pm$6.26}
& 66.48\textsubscript{$\pm$4.69} \\

MDGMIX
& \textbf{69.67}\textsubscript{$\pm$5.28}
& \underline{62.88}\textsubscript{$\pm$4.57}
& \textbf{69.76}\textsubscript{$\pm$5.20}
& \textbf{79.81}\textsubscript{$\pm$5.75}
& \textbf{70.98}\textsubscript{$\pm$6.98} \\
\bottomrule
\end{tabular}
\label{few_shot_classification_grapho}
\end{table*}

\begin{figure}[h]
\centering
\begin{minipage}[h]{0.55\linewidth}
    \centering
    \includegraphics[width=\linewidth]{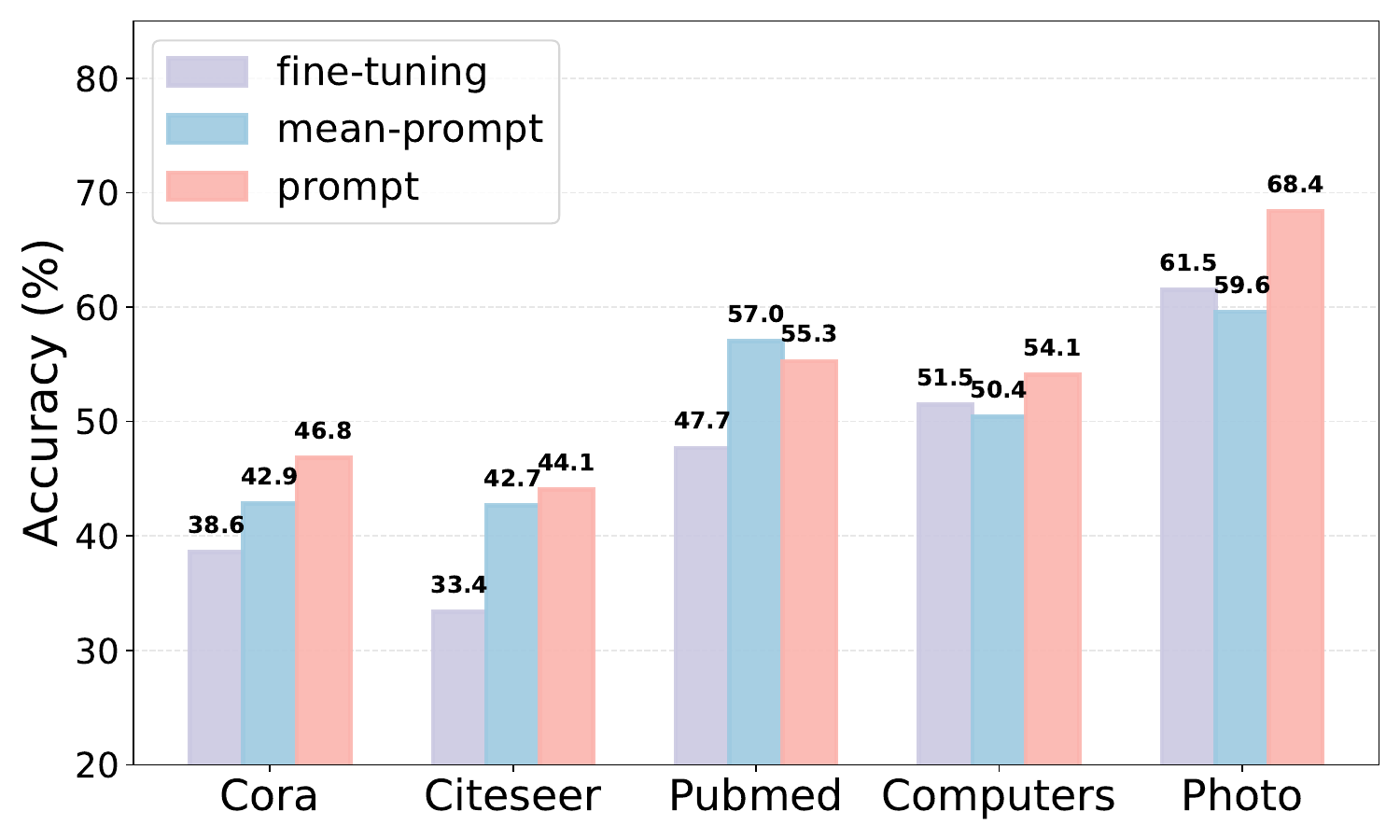}
    \caption{Ablation study of MDGMIX on adaptation phase}
    \label{Fig.prompt_ablation}
\end{minipage}
\hfill
\begin{minipage}[h]{0.44\linewidth}
    \centering
    \includegraphics[width=\linewidth]{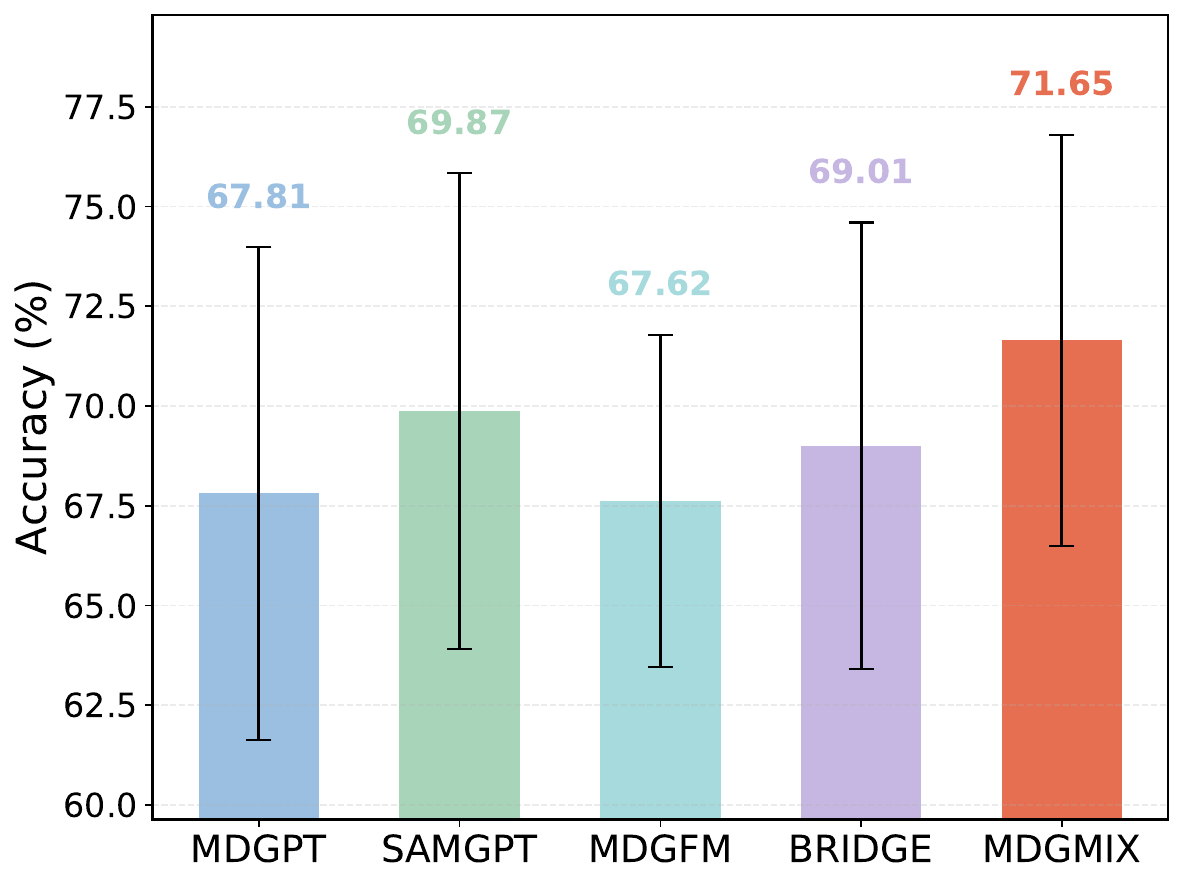}
    \caption{Large graph experiment on Reddit Dataset.}
    \label{Fig.reddit}
\end{minipage}
\end{figure}

\subsection{Adaptation Ablation} 
\label{Adaptation Ablation}
We compare different prompting strategies, including no prompt, mean prompt, and learnable prompt. As shown in Figure~\ref{Fig.prompt_ablation}, the pre-trained model achieves competitive performance even without prompts, indicating that the learned representations already exhibit strong generalization ability across domains. This observation suggests that the proposed multi-domain pre-training effectively captures transferable structural and semantic patterns, thereby reducing the dependence on task-specific adaptation. Mean prompting consistently improves performance over the no-prompt setting, despite introducing no additional learnable parameters. This result implies that a simple aggregation of domain-relevant information is sufficient to better align the pre-trained representations with downstream tasks. In contrast, learnable prompts further enhance performance by providing greater flexibility to adapt to domain shifts, albeit at the cost of additional parameters.

These results demonstrate that effective multi-domain pre-training can substantially reduce the reliance on complex and parameter-intensive adaptation mechanisms, while still allowing lightweight prompting strategies to yield consistent performance gains.

\subsection{Large Graph Experiment}
\label{reddit graph}
We conducted experiments on large-scale social network datasets, as shown in Figure ~\ref{Fig.reddit}, comparing the performance of five multi-domain graph pre-training methods. The figure clearly demonstrates: MDGMIX achieved the best performance with an accuracy of 71.65\%, significantly outperforming other methods. SAMGPT and BRIDGE followed with 69.87\% and 69.01\% accuracy, respectively, MDGPT and MDGFM demonstrated relatively lower performance at 67.81\% and 67.62\%, respectively.

\subsection{Hyperparameters Analysis}
\para{Number of Hops.} Figure ~\ref{fig:hop} examines the impact of neighborhood expansion depth by varying hop counts from 1 to 5. Model performance consistently peaks at 1-2 hops across diverse datasets before plateauing or declining. This unimodal pattern indicates that excessive neighborhood expansion introduces noise that dilutes boundary-specific structural patterns. These results validate our boundary-focused design principle while establishing practical guidelines for hop selection.

\para{Number of Samples.}
Figure ~\ref{fig:samplesize} demonstrates MDGMIX's remarkable data efficiency with respect to subgraph sampling quantity. Performance rapidly converges as sample size increases from 10 to 100, with diminishing returns beyond this threshold across all benchmark datasets. This observation directly corroborates our theoretical premise regarding substantial redundancy in multi-domain pretraining data. Even with merely 30 boundary-aware subgraphs, MDGMIX achieves 95\% of its peak performance on Cora and CiteSeer. These results demonstrate that effective multi-domain knowledge transfer does not necessitate exhaustive training on all available graph data.


\begin{figure}[t]
    \centering
    \begin{subfigure}{0.48\columnwidth}
        \centering
        \includegraphics[width=\linewidth]{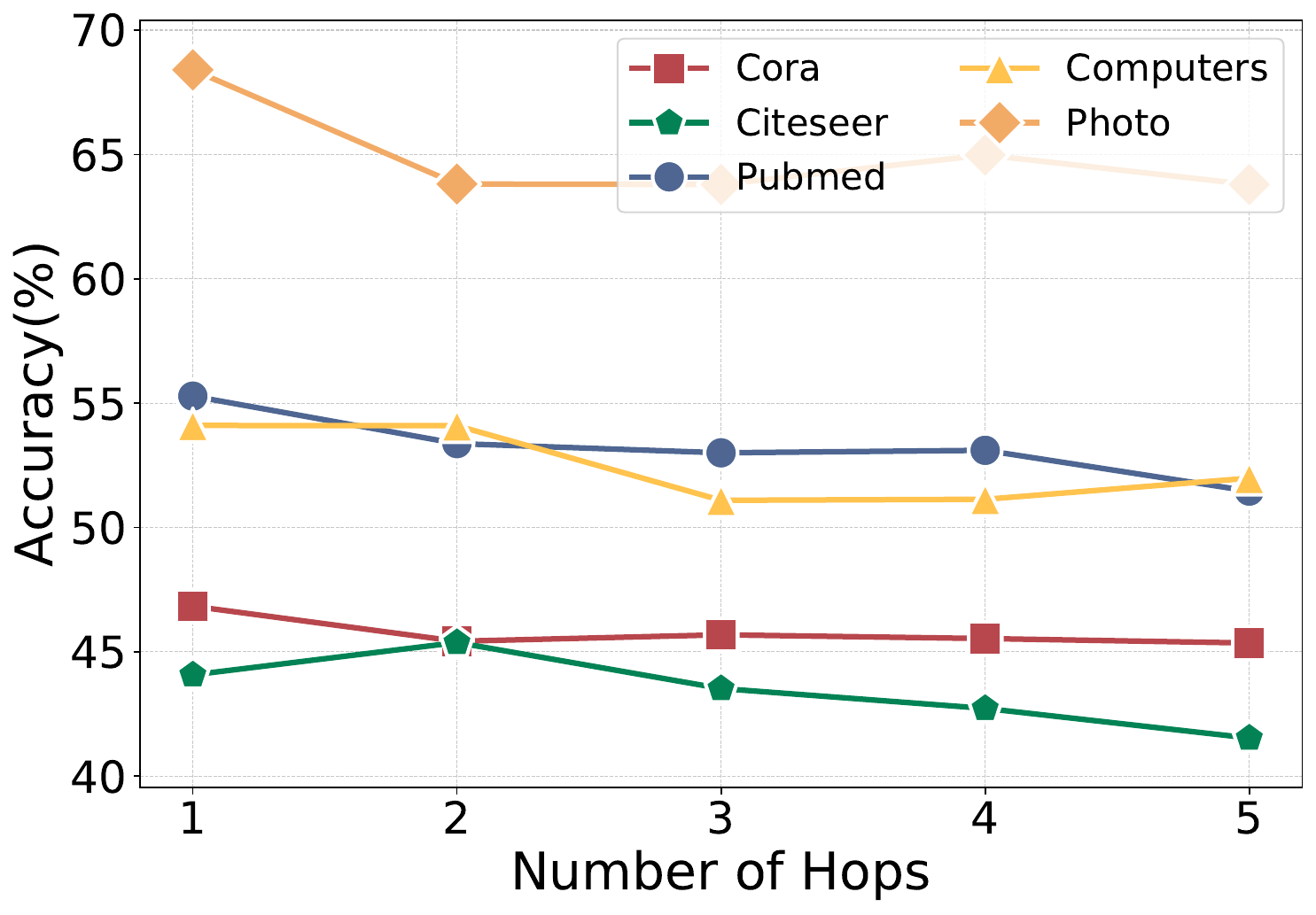}
        \caption{Hop number}
        \label{fig:hop}
    \end{subfigure}
    \hfill
    \begin{subfigure}{0.48\columnwidth}
        \centering
        \includegraphics[width=\linewidth]{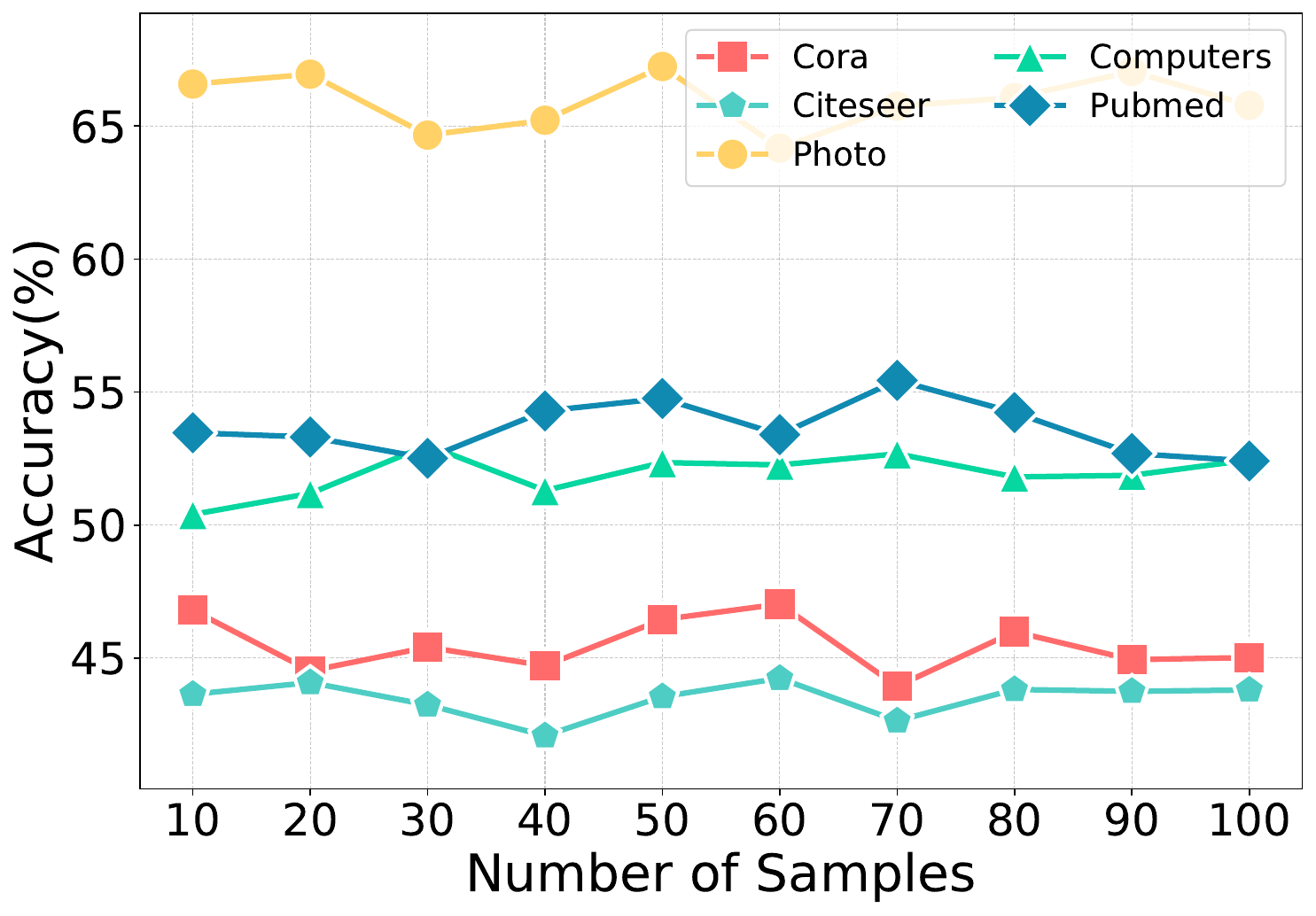}
        \caption{Sample size}
        \label{fig:samplesize}
    \end{subfigure}
    \caption{Hyperparameters analysis of the number of hops and sample sizes.}
    \label{fig:time_para}
\end{figure}

\subsection{Comparison with Graph Mixup Baselines}
\label{app:graph_mixup}

Existing graph mixup methods are mainly designed for supervised or semi-supervised single-domain graph learning, where label information is required to construct meaningful mixed samples. 
In contrast, MDGMIX uses domain labels and boundary-aware cross-domain selection, which is more suitable for multi-domain pre-training under low-label downstream settings. We adapt representative graph mixup methods, including NodeMixup and iGraphMix, to the few-shot setting. 
As shown in Table~\ref{tab:graph_mixup_app}, these methods do not perform well under extremely limited supervision, whereas MDGMIX consistently achieves the best results. 
This supports the necessity of boundary-aware domain-level mixing for multi-domain graph pre-training.
\begin{table}[h]
\centering
\caption{Comparison with graph mixup methods on Computers under few-shot node classification.}
\label{tab:graph_mixup_app}

\begin{tabular}{lccc}
\toprule
Method & 1-shot & 3-shot & 5-shot \\
\midrule
GCN 
& $43.55{\pm}7.94$ 
& $61.87{\pm}4.33$ 
& $68.27{\pm}3.61$ \\
NodeMixup 
& $36.30{\pm}7.41$ 
& $61.63{\pm}6.17$ 
& $67.34{\pm}3.83$ \\
iGraphMix 
& $29.15{\pm}4.64$ 
& $45.98{\pm}8.84$ 
& $53.16{\pm}9.89$ \\
MDGMIX 
& $\mathbf{54.10{\pm}10.22}$ 
& $\mathbf{67.86{\pm}7.49}$ 
& $\mathbf{72.10{\pm}4.44}$ \\
\bottomrule
\end{tabular}
\end{table}
\subsection{Robustness to heterogeneous source domains.}
We further construct highly divergent source-domain combinations involving heterophilic graphs such as Squirrel, Chameleon, Amazon-Ratings, and Roman-Empire, and evaluate transfer to homophilic target graphs. 
Despite large shifts in graph homophily and feature distributions, MDGMIX consistently outperforms competing methods, as shown in Table~\ref{tab:heterogeneous_sources_app}. 
This demonstrates that MDGMIX is robust to heterogeneous source-domain configurations.

\begin{table}[h]
\centering
\caption{Robustness under highly heterogeneous source domains.}
\label{tab:heterogeneous_sources_app}
\begin{tabular}{lccc}
\toprule
Method & Cora & Citeseer & Computers \\
\midrule
MDGPT 
& $35.49{\pm}6.60$ 
& $30.38{\pm}8.09$ 
& $47.15{\pm}8.11$ \\
SAMGPT 
& $40.18{\pm}6.37$ 
& $33.70{\pm}8.70$ 
& $52.41{\pm}9.36$ \\
MDGFM
& $41.53{\pm}6.85$ 
& $38.23{\pm}8.32$ 
& $51.23{\pm}9.04$ \\
MDGMIX 
& $\mathbf{42.80{\pm}8.52}$ 
& $\mathbf{44.16{\pm}9.97}$ 
& $\mathbf{53.30{\pm}8.93}$ \\
\bottomrule
\end{tabular}
\end{table}

\subsection{Domain Sensitivity}
As shown in Table ~\ref{tab:domain sensitivity}, by ablating different combinations of source domains, we observe that MDGMIX achieves the best and most stable performance across nearly all source-domain subsets on both Cora and Photo, reaching up to 45.17\% on Cora and 66.17\% on Photo, which demonstrates strong robustness to variations in source domains. In contrast, SAMGPT and MDGFM are more sensitive to the choice of source domains, exhibiting noticeable performance fluctuations; in particular, MDGFM suffers a significant degradation on the Photo target domain, with performance dropping as low as 55.20. Moreover, in some cases, removing certain source domains even leads to performance improvements, suggesting that multi-domain graph data may introduce conflicts or negative transfer rather than pure complementarity. By effectively mitigating inter-domain interference and extracting stable, transferable knowledge shared across domains, MDGMIX reduces reliance on any single source domain and maintains superior cross-domain generalization performance even under limited source availability.

\begin{table}[h]
\centering
\caption{Performance of domain sensitivity analysis on different source domain datasets.}
\label{tab:domain sensitivity}
\renewcommand{\arraystretch}{1.0}
\begin{tabular}{c|cccc|ccc}
\toprule
\multirow{2}{*}{\textbf{Target domain}} & \multicolumn{4}{c|}{\textbf{Source domain}} & \multicolumn{3}{c}{\textbf{Method}} \\
\cmidrule(lr){2-5} \cmidrule(lr){6-8}
 & Citeseer & Pubmed & Squirrel & Computers & SAMGPT & MDGFM & MDGMIX \\
\midrule
\multirow{4}{*}{Cora} 
 &  & &\checkmark &\checkmark & 38.81$\pm$5.79 & 41.03$\pm$9.23 & 41.97$\pm$9.09 \\
 &\checkmark & \checkmark & & & 42.22$\pm$7.41 & 41.85$\pm$8.33 & 44.04$\pm$8.74 \\
 &\checkmark & \checkmark& \checkmark & & 41.63$\pm$6.60 & 40.65$\pm$8.03 & 44.69$\pm$9.01 \\
 &\checkmark &\checkmark &\checkmark & \checkmark & 42.37$\pm$6.27 & 43.17$\pm$9.82 & 45.17$\pm$8.24 \\
\midrule
\multirow{4}{*}{Photo} 
 & & &\checkmark & \checkmark& 62.99$\pm$7.71 & 55.20$\pm$9.44 & 63.32$\pm$7.81 \\
 &\checkmark & \checkmark & & & 64.23$\pm$8.01 & 55.78$\pm$7.78 & 66.17$\pm$9.33 \\
 & \checkmark&\checkmark & \checkmark & & 64.58$\pm$7.34 & 62.90$\pm$9.93 & 65.15$\pm$8.52 \\
 & \checkmark& \checkmark& \checkmark& \checkmark & 63.58$\pm$8.11 & 60.34$\pm$8.46 & 63.52$\pm$8.89 \\
\bottomrule
\end{tabular}
\end{table}

\begin{figure}[h]
    \centering
    \begin{subfigure}{0.33\columnwidth}
        \centering
        \includegraphics[width=\linewidth]{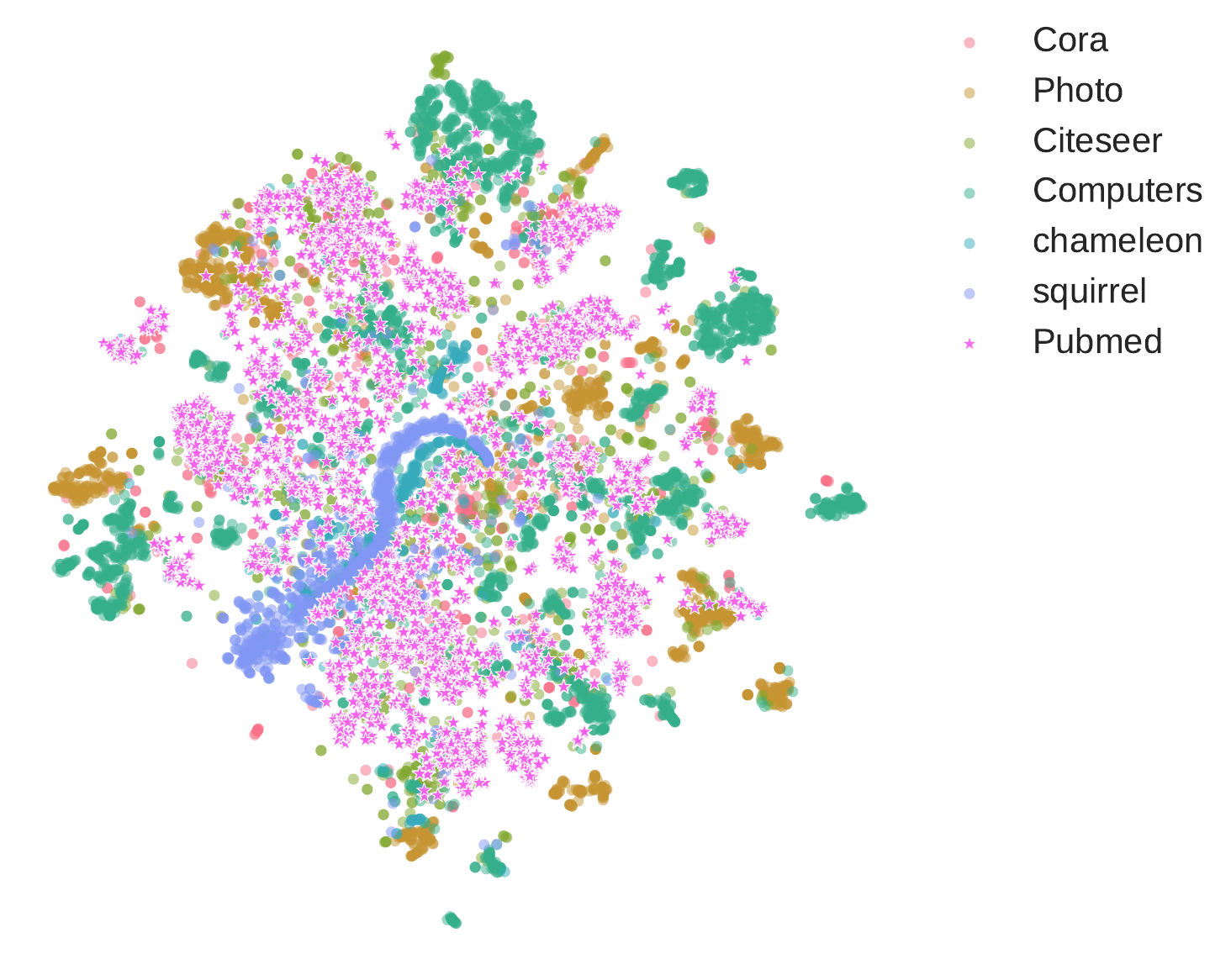}
        \caption{MDGPT}
        \label{fig:mdgptvis}
    \end{subfigure}
    \hfill
    \begin{subfigure}{0.33\columnwidth}
        \centering
        \includegraphics[width=\linewidth]{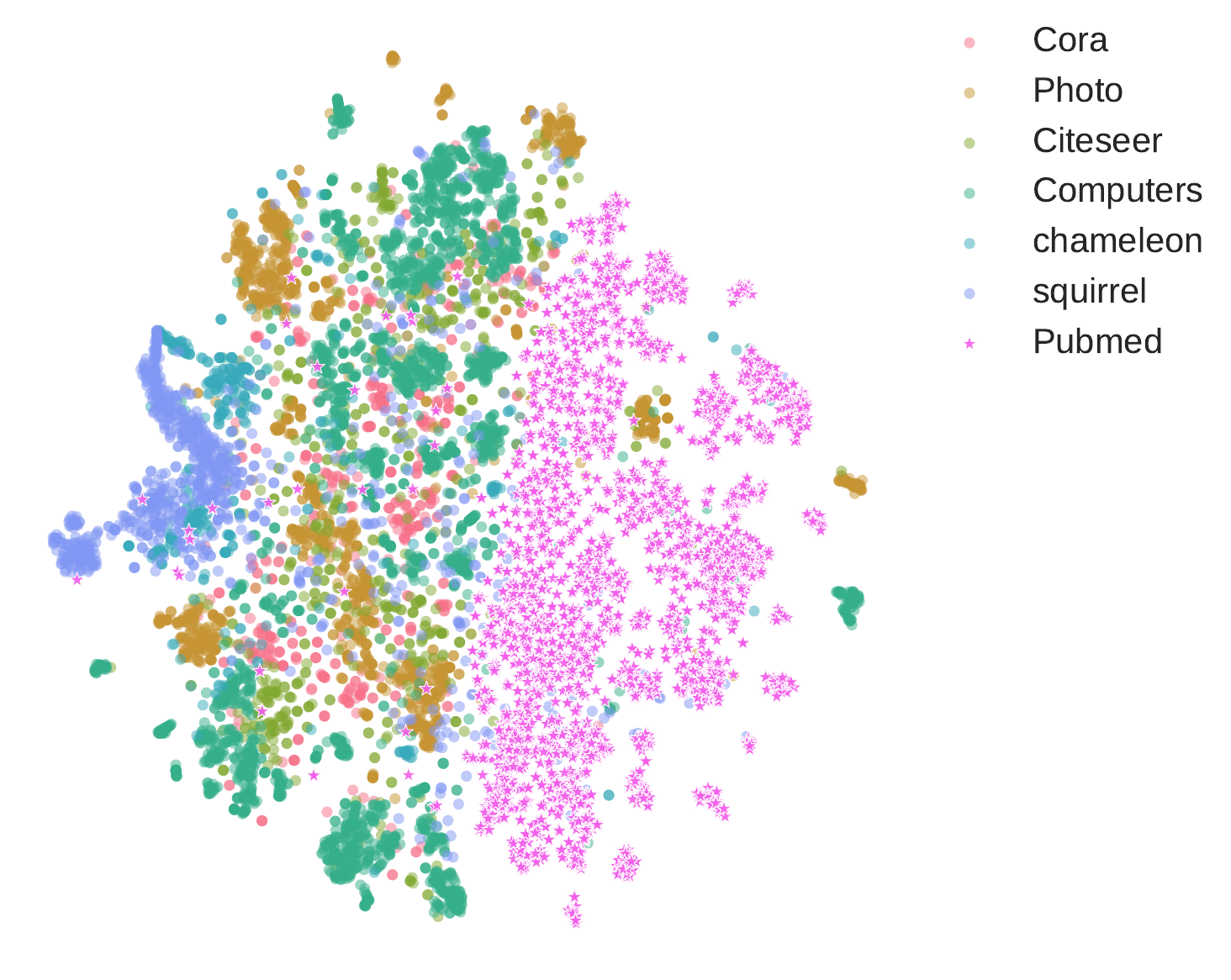}
        \caption{SAMGPT}
        \label{fig:samgptvis}
    \end{subfigure}
    \hfill
    \begin{subfigure}{0.33\columnwidth}
        \centering
        \includegraphics[width=\linewidth]{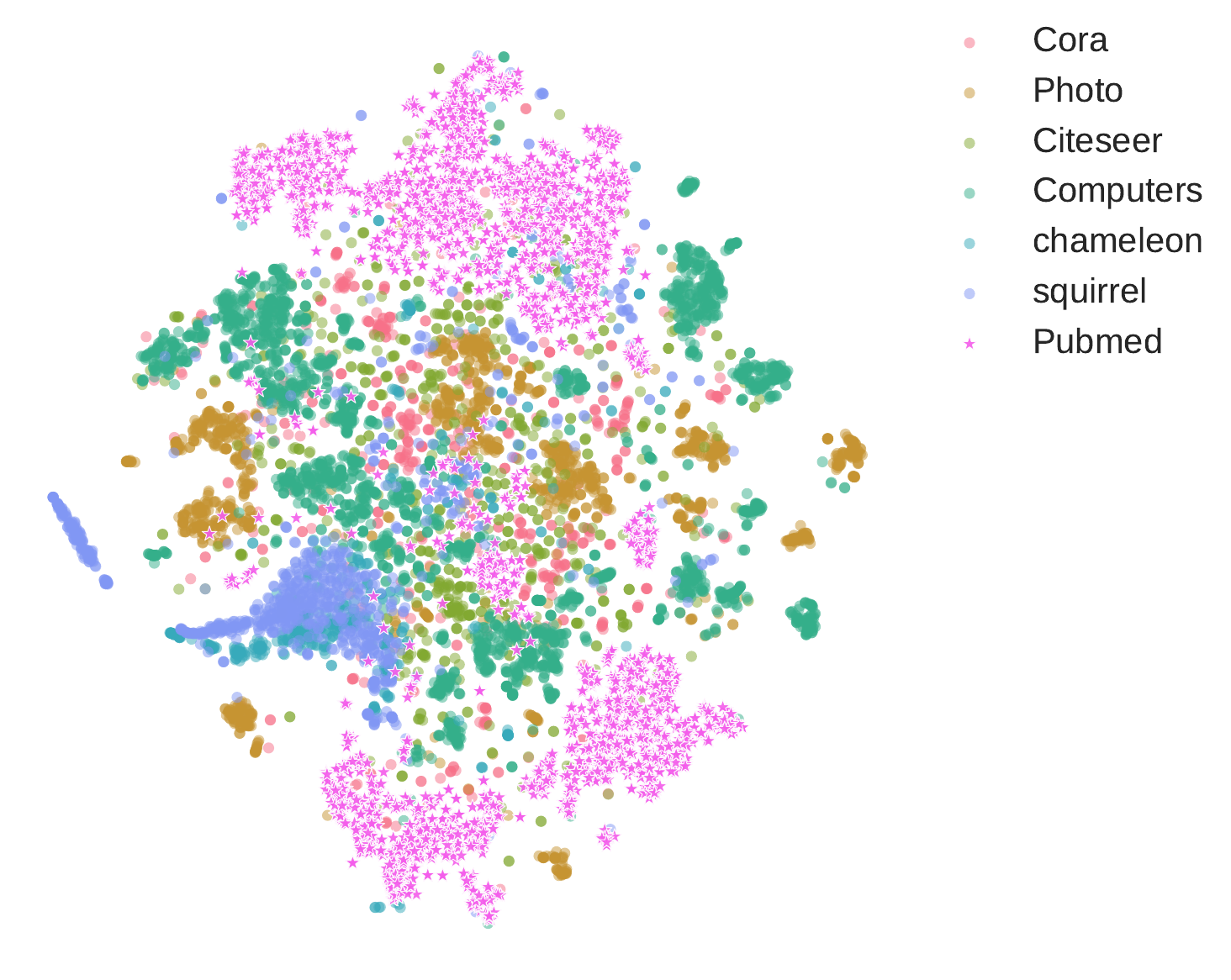}
        \caption{MDGMIX}
        \label{fig:mdgmixvis}
    \end{subfigure}
    \caption{Embedding visualization of source and target domains, with circular nodes representing the source domain and star-shaped nodes representing the target domain.}
    \label{fig:embed_vis}
\end{figure}

\subsection{Visual Analytics}
Figure~\ref{fig:embed_vis} visualizes the learned node embeddings of different source domains and the target domain under three representative multi-domain pre-training methods. Circular markers denote nodes from source domains, while star-shaped markers correspond to target-domain nodes. We observe clear qualitative differences in the embedding geometry learned by different approaches.

For MDGPT (Figure~\ref{fig:mdgptvis}), embeddings from different source domains are heavily entangled, and target-domain nodes are scattered throughout the embedding space without forming coherent structures. This indicates that treating domains via explicit domain tokens is insufficient to align domain-invariant representations, leading to blurred inter-domain boundaries and unstable transfer to unseen domains.

SAMGPT (Figure~\ref{fig:samgptvis}) partially improves domain separation by introducing structural alignment, as evidenced by the emergence of several domain-dominated clusters. However, target-domain embeddings still concentrate in a limited region and remain weakly aligned with multiple source domains. This suggests that modeling domain differences alone may overemphasize domain-specific patterns, limiting effective cross-domain knowledge sharing.

In contrast, MDGMIX (Figure~\ref{fig:mdgmixvis}) exhibits a more compact and structured embedding space. Source-domain nodes from different domains overlap substantially around shared regions, while still preserving mild domain-specific variations. Notably, target-domain nodes are well interleaved with source-domain clusters rather than isolated, indicating that MDGMIX successfully learns domain-invariant representations that generalize to unseen domains. This behavior is consistent with MDGMIX’s boundary-aware subgraph mixing and hierarchical domain discrimination, which explicitly focus pre-training on domain-ambiguous regions and suppress spurious domain-specific prompts.

\section{Limitations}

\para{Boundary Coverage and Domain Assumptions.}
MDGMIX explicitly focuses pre-training on boundary-centered mixed subgraphs, motivated by both empirical redundancy observations and the theoretical analysis in Section 5. While this design significantly improves efficiency, it implicitly assumes that transferable knowledge is sufficiently concentrated near inter-domain boundaries. In scenarios where domain-specific information is distributed more uniformly or where boundary regions are sparse or poorly defined, restricting training to boundary nodes may omit useful structural patterns, potentially limiting performance gains.

\para{Sensitivity to Boundary Detection and Hyperparameters.}
The effectiveness of MDGMIX depends on the quality of the selection of the boundary nodes, which is governed by hyperparameters such as the boundary ratio $\rho$ and the similarity threshold $\gamma$. Although our experiments show that MDGMIX is relatively stable within reasonable ranges, extreme settings may lead to either insufficient boundary coverage or overly noisy mixed subgraphs. Designing adaptive or data-driven boundary selection strategies could further improve robustness across diverse domain configurations.

\para{Limited to Pairwise Cross-Domain Mixing.}
The current instantiation of MDGMIX constructs inter-domain mixed subgraphs by combining boundary nodes from two source domains at a time. This pairwise design simplifies subgraph construction and stabilizes the hierarchical discrimination objective, but it does not explicitly model higher-order interactions among more than two domains. In settings where transferable knowledge emerges from the joint alignment of multiple domains simultaneously, restricting mixing to pairwise combinations may underexploit such multi-domain synergies. Extending MDGMIX to support multi-domain subgraph mixing is non-trivial, as it would require re-designing both the mixing mechanism and the fine-grained domain decomposition objective to handle higher-dimensional domain composition distributions, which we leave as future work.

\end{document}